\documentclass[11pt]{article}

\usepackage[utf8]{inputenc} 
\usepackage[T1]{fontenc}    
\usepackage{hyperref}       
\usepackage{url}            
\usepackage{booktabs}       
\usepackage{nicefrac}       
\usepackage{microtype}      

\textheight 9.3in \advance \topmargin by -1.0in \textwidth 6.7in
\advance \oddsidemargin by -0.8in
\newcommand{\myparskip}{3pt}
\parskip \myparskip

 \usepackage{fullpage}
\usepackage[numbers]{natbib}
\usepackage{algorithm}
\usepackage[noend]{algorithmic}
\usepackage{amsmath,amsthm,amsfonts,amssymb}
\usepackage{xcolor}
\usepackage{color}
\usepackage{mathrsfs}
\usepackage{enumitem}
\usepackage{bm}
\usepackage{multirow}
\usepackage{booktabs}
\usepackage{makecell}
\usepackage{graphicx}
\usepackage{subfigure}
\usepackage{caption}
\usepackage{thmtools}
\usepackage{mathtools}
\usepackage{thm-restate}
\usepackage{tikz}
\usepackage{array}
\newcolumntype{P}[1]{>{\centering\arraybackslash}p{#1}}


\newcommand{\vect}[1]{\ensuremath{\mathbf{#1}}}
\newcommand{\mat}[1]{\ensuremath{\mathbf{#1}}}

\newcommand{\argmax}{\mathop{\rm argmax}}

\newcommand{\trans}{^{\top}}
\newcommand{\sdinv}{^{\dagger}}
\newcommand{\poly}{\mathrm{poly}}

\newcommand{\norm}[1]{\|{#1} \|}
\newcommand{\fnorm}[1]{\|{#1} \|_{\text{F}}}

\newcommand{\E}{\mathbb{E}}
\renewcommand{\Pr}{\mathbb{P}}

\newcommand{\pr}{^{\prime}}
\newcommand{\diag}{{\rm diag}}

\newcommand{\one}{{\mathbf1}}

\newcommand{\Ohat}{\hat{\O}}
\newcommand{\Phat}{\hat{P}}
\newcommand{\That}{\hat{\T}}

\newcommand{\thetahat}{\hat{\theta}}

\newcommand{\Cpoly}{C_{\rm poly}}

\newcommand{\at}{\tilde{a}}

\newcommand{\eps}{\varepsilon}

\newcommand{\bigO}{\mathcal{O}}
\newcommand{\tlO}{\mathcal{\tilde{O}}}


\newcommand{\N}{\mathbb{N}}
\newcommand{\R}{\mathbb{R}}


\newcommand{\B}{\mat{B}}

\newcommand{\I}{\mat{I}}

\newcommand{\V}{\mat{V}}
\newcommand{\W}{\mat{W}}
\newcommand{\X}{\mat{X}}
\newcommand{\Y}{\mat{Y}}

\newcommand{\e}{\vect{e}}
\renewcommand{\u}{\vect{u}}
\renewcommand{\v}{\vect{v}}

\newcommand{\x}{\vect{x}}
\newcommand{\y}{\vect{y}}
\newcommand{\z}{\vect{z}}


\newcommand{\nn}{\nonumber}
\usepackage{times}

\newtheorem{theorem}{Theorem}
\newtheorem{lemma}[theorem]{Lemma}
\newtheorem{fact}[theorem]{Fact}
\newtheorem{corollary}[theorem]{Corollary}

\theoremstyle{definition}

\newtheorem{assumption}{Assumption}

\newcommand{\cS}{\mathscr{S}}
\newcommand{\cA}{\mathscr{A}}
\newcommand{\cO}{\mathscr{O}}
\newcommand{\cT}{\mathscr{T}}
\renewcommand{\b}{\vect{b}}

\newcommand{\T}{\mathbb{T}}
\newcommand{\bp}{\vect{n}}
\newcommand{\bP}{\mat{N}}
\newcommand{\bQ}{\mat{M}}
\renewcommand{\O}{\mathbb{O}}
\newcommand{\pomdp}{\text{POMDP}}

\hypersetup{
    colorlinks,
    linkcolor={blue!50!black},
    citecolor={blue!50!black},
}
\colorlet{linkequation}{blue}

\begin{document}

\title{\textbf{Sample-Efficient Reinforcement Learning\\
 of Undercomplete POMDPs}}

\author{Chi Jin \\ Princeton University\\ {\tt chij@princeton.edu}
\and Sham M. Kakade \\ University of Washington \\
Microsoft Research, NYC \\
{\tt sham@cs.washington.edu }
\and Akshay Krishnamurthy \\Microsoft Research, NYC \\  {\tt  akshaykr@microsoft.com} 
\and Qinghua Liu \\ Princeton University \\{\tt qinghual@princeton.edu} 
}

\maketitle

\newcommand{\sk}[1]{\noindent{\textcolor{magenta}{\{{\bf SK:} \em #1\}}}}
\newcommand{\chijin}[1]{\noindent{\textcolor{magenta}{\{{\bf CJ:} \em #1\}}}}
\newcommand{\qinghua}[1]{\noindent{\textcolor{red}{\{{\bf QL:} \em #1\}}}}
\newcommand{\akshay}[1]{\noindent{\textcolor{red}{\{{\bf AK:} \em #1\}}}}

\begin{abstract}
Partial observability is a common challenge in many reinforcement learning applications, which requires an agent to maintain memory, infer latent states, and integrate this past information into exploration. 
This challenge leads to a number of computational and statistical hardness
results for learning general Partially Observable Markov Decision Processes
(POMDPs).
This work shows that these hardness barriers do not
preclude efficient reinforcement learning for rich and interesting 
subclasses of POMDPs. In
particular, we present a sample-efficient algorithm, \emph{OOM-UCB}, for
episodic finite \emph{undercomplete} POMDPs, where the number of observations is larger than the number of latent states
and where exploration is essential for learning, thus distinguishing
our results from prior works. \emph{OOM-UCB} achieves an optimal sample complexity of
$\tilde{\mathcal{O}}(1/\varepsilon^2)$ for finding an $\varepsilon$-optimal policy, along with being polynomial in all other relevant quantities. As an interesting
special case, we also provide a computationally and statistically
efficient algorithm for POMDPs with deterministic state transitions.

\end{abstract}

\section{Introduction}

In many sequential decision making settings, the agent lacks complete
information about the underlying state of the system, a phenomenon
known as \emph{partial observability}.  Partial observability
significantly complicates the tasks of reinforcement learning and
planning, because the non-Markovian nature of the observations forces
the agent to maintain memory and reason about beliefs of the system
state, all while exploring to collect information about the
environment. For example, a robot may not be able to perceive all
objects in the environment due to occlusions, and it must reason about
how these objects may move to avoid
collisions~\citep{cassandra1996acting}. Similar reasoning problems
arise in imperfect information games~\citep{brown2018superhuman},
medical diagnosis~\citep{hauskrecht2000planning}, and
elsewhere~\citep{rafferty2011faster}.  
Furthermore, from a theoretical perspective, well-known
complexity-theoretic results show that learning and planning in
partially observable environments is statistically and computationally
intractable in
general~\citep{papadimitriou1987complexity,mundhenk2000complexity,vlassis2012computational,mossel2005learning}.

The standard formulation for reinforcement learning with partial
observability is the \emph{Partially Observable Markov Decision
  Process} (POMDP), in which an agent operating on noisy observations
makes decisions that influence the evolution of a latent state. The
complexity barriers apply for this model, but they are of a worst case
nature, and they do not preclude
efficient algorithms for interesting sub-classes of POMDPs. Thus we ask:
\begin{center}
\emph{Can we develop efficient algorithms for reinforcement learning in large classes of POMDPs?}
\end{center}
This question has been studied in recent
works~\citep{azizzadenesheli2016reinforcement,guo2016pac}, which
incorporate a decision making component into a long line of work on
``spectral methods'' for estimation in latent variable
models~\citep{hsu2012spectral,song2010hilbert,anandkumar2012method,anandkumar2014tensor},
including the Hidden Markov Model. Briefly, these estimation results
are based on the method of moments, showing that under certain
assumptions the model parameters can be computed by a decomposition of
a low-degree moment tensor. The works of Azizzadenesheli et
al.~\citep{azizzadenesheli2016reinforcement} and Guo et
al.~\citep{guo2016pac} use tensor decompositions in the POMDP setting
and obtain sample efficiency guarantees. Neither result
considers a setting where strategic exploration is essential for
information acquisition, and they do not address one of the central
challenges in more general reinforcement learning problems.


\paragraph{Our contributions.}
In this work, we provide new sample-efficient algorithms for
reinforcement learning in finite POMDPs in the \emph{undercomplete}
regime, where the number of observations is larger than the number of
latent states. This assumption is quite standard in the literature on
estimation in latent variable models~\citep{anandkumar2014tensor}.  Our main algorithm
\emph{OOM-UCB} uses the principle of optimism for exploration and uses
the information gathered to estimate the \emph{Observable
  Operators} induced by the environment. Our main result proves that
\emph{OOM-UCB} finds a near optimal policy for the POMDP using a
number of samples that scales polynomially with all relevant
parameters and additionally with the minimum singular value of the
emission matrix. Notably, \emph{OOM-UCB} finds an
$\eps$-optimal policy at the optimal rate of
$\tilde{\mathcal{O}}(1/\eps^2)$.

While \emph{OOM-UCB} is statistically efficient for this subclass of
POMDPs, we should not expect it to be computationally efficient in general, as this would violate computational
barriers for POMDPs.  However, in our second contribution, we consider a further 
restricted subclass of POMDPs in which the latent
dynamics are deterministic and where we provide \emph{both}
a computationally and statistically efficient algorithm. Notably,
deterministic dynamics are still an interesting subclass due to that,
while it avoids
computational barriers, it still does not mitigate the need for
strategic exploration. We prove that our second algorithm has sample complexity
scaling with all the relevant parameters as well as the minimum
$\ell_2$ distance between emission distributions. This latter
quantity replaces the minimum singular value in the guarantee
for \emph{OOM-UCB} and is a more favorable dependency.

We provide further motivation for our assumptions with two lower
bounds: the first shows that the overcomplete setting is statistically
intractable without additional assumptions, while the second
necessitates the dependence on the minimum singular value of the
emission matrix. In particular, under our assumptions, the agent must
engage in strategic exploration for sample-efficiency. As such, the
main conceptual advance in our line of inquiry over prior works is
that our algorithms address exploration and partial observability in a
provably efficient manner.

\subsection{Related work}

A number of computational barriers for POMDPs are known. If the
parameters are known, it is PSPACE-complete to compute the optimal
policy, and, furthermore, it is NP-hard to compute the optimal
memoryless
policy~\cite{papadimitriou1987complexity,vlassis2012computational}.
With regards to learning, Mossel and Roch~\citep{mossel2005learning}
provided an average case computationally complexity result, showing
that parameter estimation for a subclass of Hidden Markov Models (HMMs)
is at least as hard as learning parity with noise. This directly
implies the same hardness result for parameter estimation in POMDP
models, due to that an HMM is just a POMDP with a fixed action
sequence.  On the other hand, for reinforcement learning in POMDPs (in
particular, finding a near optimal policy), one may not need to
estimate the model, so this lower bound need not directly imply that
the RL problem is computational intractable. In this work, we do provide a
lower bound showing that reinforcement learning in POMDPs is both statistically
and computationally intractable (Propositions~\ref{prop:hardness1} and \ref{prop:hardness2}).

On the positive side, there is a long history of work on learning
POMDPs. 
\cite{even2005reinforcement} studied POMDPs without resets, where the
proposed algorithm has sample complexity scaling exponentially with a
certain horizon time, which is not possible to relax without further restrictions.~\cite{ross2008bayes,poupart2008model} proposed
to learn POMDPs using Bayesian methods; PAC or regret bounds
are not known for these approaches.
\cite{azizzadenesheli2018policy} studied policy gradient methods for learning POMDPs while they considered only Markovian policies and did not address exploration.


Closest to our work are POMDP algorithms based on spectral
methods~\citep{guo2016pac,azizzadenesheli2016reinforcement}, which
were originally developed for learning latent variable
models~\citep{hsu2012spectral,anandkumar2012method,anandkumar2014tensor,song2010hilbert,sharan2017learning}.
These works give PAC and regret bounds (respectively) for tractable
subclasses of POMDPs, but, in contrast with our work,
they make additional assumptions to mitigate the exploration
challenge.  In~\citep{guo2016pac}, it is assumed that all latent
states can be reached with nontrivial probability with a constant
number of random actions. This allows for estimating the \emph{entire}
model without sophisticated exploration.
\cite{azizzadenesheli2016reinforcement} consider a special class of
memoryless policies in a setting where all of these policies visit
every state and take every action with non-trivial probability. As
with~\citep{guo2016pac}, this restriction guarantees that the entire
model can be estimated regardless of the policy executed, so
sophisticated exploration is not required.
We also mention
that~\citep{guo2016pac,azizzadenesheli2016reinforcement} assume that
both the transition and observation matrices are full rank, which is
stronger than our assumptions. We do not make any assumptions on the
transition matrix.

Finally, the idea of representing the probability of a sequence as
products of operators dates back to multiplicity
automata~\cite{schutzenberger1961definition, carlyle1971realizations}
and reappeared in the Observable Operator Model
(OOMs)~\cite{jaeger2000observable} and Predictive State
Representations (PSRs)~\citep{littman2002predictive}. While spectral
methods have been applied to PSRs~\citep{boots2011closing}, we are not
aware of results with provable guarantees using this approach. It is also worth mentioning that
any POMDP can be modeled as an Input-Output OOM~\cite{jaeger1998discrete}.


\section{Preliminaries}

In this section, we define the partially observable Markov decision process, the observable operator model~\cite{jaeger2000observable}, and discuss their relationship.

\paragraph{Notation.} For any natural number $n \in \N$, we use $[n]$ to denote the set $\{1, 2, \ldots, n\}$. We use bold upper-case letters $\B$ to denote matrices and bold lower-case letters $\b$ to denote vectors. $\B_{ij}$ means the $(i, j)^{\text{th}}$ entry of matrix $\B$ and $(\B)_i$ represents its $i^{\rm th}$ column. For vectors we use $\norm{\cdot}_p$ to denote the $\ell_p$-norm, and for matrices we use $\norm{\cdot}$, ${\norm{\cdot}}_1$ and $\fnorm{\cdot}$ to denote the spectral norm, entrywise $\ell_1$-norm and Frobenius norm respectively.
We denote by $\| \B\|_{p\rightarrow q}=\max_{\|\v\|_p \le 1}\|\B\v\|_q$ the $p$-to-$q$ norm of $\B$.
For any matrix $\B\in\R^{m \times n}$, we use $\sigma_{\min}(\B)$ to denote its smallest singular value, and $\B^\dagger \in \R^{n \times m}$ to denote its Moore-Penrose inverse. For vector $\v \in \R^n$, we denote $\diag(\v) \in \R^{n\times n}$ as a diagonal matrix where $[\diag(\v)]_{ii} = \v_i$ for all $i \in [n]$. Finally, we use standard big-O and big-Omega notation $\bigO(\cdot), \Omega(\cdot)$ to hide only absolute constants which do not depend on any problem parameters, and notation $\tlO(\cdot), \tilde{\Omega}(\cdot)$ to hide only absolute constants and logarithmic factors. 


\subsection{Partially observable Markov decision processes}

We consider an episodic tabular Partially Observable Markov Decision Process (POMDP), which can by specified as $\pomdp(H, \cS, \cA, \cO, \T, \O, r, \mu_1)$. Here $H$ is the number of steps in each episode, $\cS$ is the set of states with $|\cS| = S$, $\cA$ is the set of actions with $|\cA| = A$, $\cO$ is the set of observations with $|\cO| = O$, $\T = \{\T_h\}_{h=1}^H$ specify the transition dynamics such that $\T_h(\cdot|s, a)$ is the distribution over states if action $a$ is taken from state $s$ at step $h \in [H]$, $\O = \{\O_h\}_{h=1}^H$ are emissions such that $\O_h(\cdot|s)$ is the distribution over observations for state $s$ at step $h \in [H]$, $r =  \{r_h: \cO \rightarrow [0, 1]\}_{h=1}^H$ are the known deterministic reward functions\footnote{Since rewards are observable in most applications, it is natural to assume the reward is a known function of the observation. While we study deterministic reward functions for notational simplicity, our results generalize to randomized reward functions. Also, we assume the reward is in $[0, 1]$ without loss of generality.}, and $\mu_1(\cdot)$ is the initial distribution over states. Note that we consider nonstationary dynamics, observations, and rewards. 


In a POMDP, states are hidden and unobserved to the learning agent. Instead, the agent is only able to see the observations and its own actions. At the beginning of each episode, an initial hidden state $s_1$ is sampled from initial distribution $\mu_1$. At each step $h \in [H]$, the agent first observes $o_h \in \cO$ which is generated from the hidden state $s_h \in \cS$ according to $\O_h(\cdot|s_h)$, and receives the reward $r_h(o_h)$, which can be computed from the observation $o_h$. Then, the agent picks an action $a_h \in \cA$, which causes the environment to transition to hidden state $s_{h+1}$, that is drawn from the distribution $\T_h(\cdot|s_h, a_h)$. The episode ends when $o_{H}$ is observed. 

A policy $\pi$ is a collection of $H$ functions $\big\{ \pi_h: \cT_h \rightarrow \cA \big\}_{h\in [H]}$, where $\cT_h = (\cO \times \cA)^{h-1} \times \cO$ is the set of all possible histories of length $h$.
We use $V^{\pi} \in \mathbb{R}$ to denote the value of policy $\pi$, so that $V^\pi$ gives the expected cumulative reward received under policy $\pi$:
\begin{equation*}
\textstyle V^\pi := \E_\pi \left[\sum_{h = 1}^H r_{h}(o_{h})\right].
\end{equation*}
Since the state, action, observation spaces, and the horizon, are all finite, there always exists an optimal policy $\pi^\star$ which gives the optimal value $V^\star = \sup_{\pi} V^\pi$. We remark that, in general, the optimal policy of a POMDP will select actions based the entire history, rather than just the recent observations and actions. This is one of the major differences between POMDPs and standard Markov Decision Processes (MDPs), where the optimal policies are functions of the most recently observed state. 
This difference makes POMDPs significantly more challenging to solve.

\paragraph{The POMDP learning objective.} Our objective in this paper
is to learn an \textbf{$\varepsilon$-optimal policy} $\hat{\pi}$ in the
sense that $ V^{\hat{\pi}} \geq V^\star  -\varepsilon$, using a polynomial
number of samples.  


\subsection{The observable operator model} \label{sec:OOM}



We have described the POMDP model via the transition and observation
distributions $\T,\O$ and the initial distribution $\mu_1$. While this
parametrization is natural for describing the dynamics of the system,
POMDPs can also be fully specified via a different set of parameters: 
a set of operators $\{\B_h(a,o) \in \R^{O\times O}\}_{h\in[H-1], a\in \cA, o \in \cO}$, and a vector $\b_0 \in \R^O$.

In the undercomplete setting where $S\le O$ and where observation probability matrices $\{\O_h \in \R^{O\times S}\}_{h\in [H]}$ are all full column-rank, the operators $\{\B_h(a, o)\}_{h, a, o}$ and vector $\b_0$ can be expressed in terms of $(\T, \O, \mu_1)$ as follows:
\begin{equation}\label{eq:OM_relation}
\B_h(a, o) =  \O_{h+1} \T_h(a) \diag(\O_h(o|\cdot)) \O_h^{\dagger}, \qquad \b_0 = \O_1  \mu_1.
\end{equation}
where we use the matrix and vector notation for $\O_h \in \R^{O\times S}$ and $\mu_1 \in \R^S$ here, 
such that $[\O_h]_{o, s} = \O_h(o|s)$ and $[\mu_1]_s = \mu_1(s)$. $\T_h(a) \in \R^{S\times S}$ denotes the transition matrix given action $a \in \cA$ where $[\T_h(a)]_{s',s} = \T_h(s'|s, a)$, and $\O_h(o|\cdot) \in \R^S$ denotes the $o$-th row in matrix $\O_h$ with $[\O_h(o|\cdot)]_s = \O_h(o|s)$. 
Note that the matices defined in \eqref{eq:OM_relation} have rank at most $S$. 
Using these matrices $\B_h$, it can be shown that (Appendix \ref{app:pomdp-oom}), for any sequence of $(o_H, \ldots, a_1, o_1) \in \cO\times (\cA \times \cO)^{H-1}$, we have:
\begin{equation} \label{eq:OOM_pb}
\Pr(o_H, \ldots, o_1 | a_{H-1}, \ldots, a_1) = \e_{o_H}\trans \cdot \B_{H-1}(a_{H-1}, o_{H-1}) \cdots \B_1(a_1, o_1) \cdot \b_0.
\end{equation}
Describing these conditional probabilities for every sequence is sufficient to fully specify the entire dynamical system. Therefore, as an alternative to directly learning $\T, \O$ and $\mu_1$, it is also sufficient to learn operators $\{\B_h(a,o)\}_{h, a, o}$ and vector $\b_0$ in order to learn the optimal policy. The latter approach enjoys the advantage that \eqref{eq:OOM_pb} does not explicitly involve latent variables. It refers only to observable quantities---actions and observations.


We remark that the operator model introduced in this section (which is parameterized by $\{\B_h(a,o)\}_{h, a, o}$ and $\b_0$) bears significant similarity to Jaeger's Input-Output Observable Operator Model (IO-OOM) \cite{jaeger2000observable}, except a few minor technical differences.\footnote{Jaeger's IO-OOM further requires the column-sums of operators to be 1.}  With some abuse of terminology, we also refer to our model as Observable Operator Model (OOM) in this paper. 
It is worth noting that Jaeger's IO-OOMs are strictly more general than POMDPs \cite{jaeger2000observable} and also includes overcomplete POMDPs via a relation different from~\eqref{eq:OM_relation}. 
Since our focus is on undercomplete POMDPs, we refer the reader to~\cite{jaeger2000observable} for more details.







\section{Main Results}

We first state our main assumptions, which we motivate with
corresponding hardness results in their absence. We then present our
main algorithm, \emph{OOM-UCB}, along with its sample efficiency guarantee.   

\subsection{Assumptions}
In this paper, we make the following assumptions.

\begin{assumption}\label{assump:POMDP}
We assume the POMDP is undercomplete, i.e. $S \le O$. We also assume the minimum singular value of the observation probability matrices $\sigma_{\min}(\O_h) \ge \alpha>0$ for all $h \in [H]$.
\end{assumption}

Both assumptions
are standard in the literature on learning Hidden Markov Models (HMMs)---an uncontrolled version of POMDP~\cite[see e.g.,][]{anandkumar2012method}. The second assumption that $\sigma_{\min}(\O_h)$ is lower-bounded is a robust version of the assumption that $\O_h \in \R^{O\times S}$ is full column-rank, which is equivalent to $\sigma_{\min}(\O_h)>0$. 
Together, these assumption ensure that the observations will contain a reasonable amount of information about the latent states.

We do not assume that the initial distribution $\mu_1$ has full support, nor do we assume the transition probability matrices $\T_h$ are full rank.
In fact, Assumption \ref{assump:POMDP} is \emph{not} sufficient for identification of the system, i.e. recovering parameters $\T, \O, \mu_1$ in total-variance distance. 
Exploration is crucial to find a near-optimal policy in our setting.



We motivate both assumptions above by showing that, with absence of
either one, learning a POMDP is statistically intractable. That is, it
would require an exponential number of samples for any algorithm to
learn a near-optimal policy with constant probability.  

\begin{restatable}{proposition}{LowerBoundOvercomplete}
\label{prop:hardness1}
For any algorithm $\mathfrak{A}$, there exists an overcomplete POMDP ($S>O$) with $S$ and $O$ being small constants, which satisfies $\sigma_{\min}(\O_h) =1$ for all $h \in [H]$, such that algorithm $\mathfrak{A}$ requires at least $\Omega(A^{H-1})$ samples to ensure learning a $(1/4)$-optimal policy with probability at least $1/2$. 
\end{restatable}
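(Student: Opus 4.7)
The plan is to build a family of $A^{H-1}$ hard overcomplete POMDPs indexed by a hidden action sequence $a^\star \in [A]^{H-1}$ so that the optimal value $V^\star = 1$ is achieved only by playing exactly $a^\star$, while the observations carry essentially no information about $a^\star$ until the agent has, in some episode, coincidentally played the whole sequence correctly. An averaging argument then forces any algorithm to need $\Omega(A^{H-1})$ episodes.

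\textbf{The construction.} I take $S = 3$, $O = 2$, with three hidden states $\{s_g, s_b, s_x\}$ and initial distribution on $s_g$. The dynamics for $h < H$ form a combination lock: from $s_g$, action $a^\star_h$ stays in $s_g$ and any other action transitions to the absorbing state $s_b$; the dummy state $s_x$ is never visited. The emissions are chosen so that at every step $h < H$ both visited states $s_g, s_b$ emit the same deterministic observation $o_2$, while at step $H$ the states $s_g$ and $s_b$ emit different observations, enabling reward $1$ (via $r_H(o_1) = 1$, all other rewards zero) iff the agent is in $s_g$. The role of the column for the dummy state $s_x$ is purely to supply a second linearly independent direction so that each $\O_h$ has rank $O = 2$ and in fact $\sigma_{\min}(\O_h) = 1$, without ever affecting rollouts; a simple explicit choice (yielding $\O_h \O_h^{\top} = \mathrm{diag}(1,2)$) makes this work and gives $V^{\hat\pi} = \Pr_{\hat\pi}[\text{plays } a^\star]$.

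\textbf{The lower bound.} Place a uniform prior on $a^\star \in [A]^{H-1}$ and let $\mathcal{E}_j$ be the event that the agent has not played the exact sequence $a^\star$ in any of the first $j$ episodes. On $\mathcal{E}_j$ the entire transcript consists of deterministic observations $o_2$ and zero rewards, and is therefore independent of $a^\star$. From this, two complementary bounds follow: (i) a first-match union argument gives $\Pr[\neg \mathcal{E}_N] \leq \sum_{j=1}^{N} 1/(A^{H-1}-j+1)$, since conditional on $\mathcal{E}_{j-1}$ the agent's played sequence $a^{(j)}$ is a function of the transcript alone and thus matches the uniform $a^\star$ with probability at most $1/(A^{H-1}-j+1)$; and (ii) on $\mathcal{E}_N$ the output policy $\hat\pi$ induces a fixed distribution $q$ over action sequences that is independent of $a^\star$, so since $V^{\hat\pi} = q(a^\star)$ and at most one sequence can receive $q$-mass $\geq 3/4$, one has $\Pr[V^{\hat\pi} \geq 3/4 \mid \mathcal{E}_N] \leq 1/(A^{H-1}-N)$. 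Combining (i) and (ii) yields $\E_{a^\star}\Pr[V^{\hat\pi} \geq 3/4] < 1/2$ whenever $N$ is a sufficiently small constant fraction of $A^{H-1}$, and an averaging argument then exhibits a concrete $a^\star$ on which the algorithm fails with probability $> 1/2$.

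\textbf{Main obstacle.} The probabilistic bookkeeping is routine once the construction is in hand; the real difficulty is satisfying the three requirements simultaneously. Overcompleteness ($S > O$), $\sigma_{\min}(\O_h) = 1$, and uninformative observations for $h < H$ pull against each other: the natural way to kill information is identical emissions across states, but that forces $\sigma_{\min}(\O_h) = 0$, and with only $O = 2$ observations one cannot embed a combination lock in a two-state POMDP at all. The resolution is the dummy-state trick above, which exploits the extra slack that $S > O$ provides by using the never-visited state purely to supply a well-conditioned second row of the emission matrix, while the hidden combination lock is executed entirely within the two visited but observationally indistinguishable states $s_g$ and $s_b$.
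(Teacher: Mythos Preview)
Your proposal is correct and follows the same high-level combination-lock strategy as the paper, but the concrete construction differs. The paper uses $S=4$ states (two ``good'' $g_1,g_2$ and two ``bad'' $b_1,b_2$) with a uniformly random initial state and transitions that send you uniformly into the good pair (on the correct action) or uniformly into the bad pair (otherwise); the emissions pair $g_i$ with $b_i$ so that for $h<H$ the observation sequence is i.i.d.\ uniform and hence independent of the action sequence. Your construction instead keeps the lock deterministic on two visited states $s_g,s_b$ with identical emissions at steps $h<H$, and restores full row rank of $\O_h$ via an unreachable dummy state $s_x$. Both reductions collapse the problem to the same needle-in-a-haystack bandit over $A^{H-1}$ arms; your probabilistic bookkeeping (the first-match union bound and the ``at most one sequence with $q$-mass $\ge 3/4$'' argument) is a more detailed version of what the paper asserts in one line. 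What each buys: your dummy-state trick is smaller ($S=3$) and yields exactly $\sigma_{\min}(\O_h)=1$, while the paper's paired-state trick avoids relying on unreachable states (every state is visited with probability $1/2$ each step), which some would regard as a more honest hard instance.
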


\begin{restatable}{proposition}{LowerBoundUndercomplet}
\label{prop:hardness2}
For any algorithm $\mathfrak{A}$, there exists an undercomplete POMDP ($S\le O$) with $S$ and $O$ being small constants, such that algorithm $\mathfrak{A}$ requires at least $\Omega(A^{H-1})$ samples to ensure learning a $(1/4)$-optimal policy with probability at least $1/2$. 
\end{restatable}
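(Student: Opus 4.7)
The plan is to reduce to a ``combination-lock'' hard instance and use a standard Yao-style averaging argument. I will take $S = O = 2$, so the undercomplete condition $S \le O$ holds with equality and both constants are small. The two latent states are a ``good'' state (the initial state, occupied with probability $1$) and an absorbing ``bad'' state. The adversary draws a hidden correct-action sequence $a^\star = (a^\star_1,\dots,a^\star_{H-1}) \in [A]^{H-1}$ uniformly at random; at step $h$, action $a^\star_h$ keeps the agent in the good state, while any other action transitions it irrevocably to the bad state. At steps $h < H$, both states emit an observation drawn uniformly from $\{1,2\}$, so the emission matrices have rank $1$ and $\sigma_{\min}(\O_h) = 0$; at step $H$ the good state emits observation $1$ and the bad state emits observation $2$, with $r_H(1) = 1$, $r_H(2) = 0$ and all other rewards vanishing. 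This is allowed because Proposition~\ref{prop:hardness2} puts no lower bound on $\sigma_{\min}(\O_h)$---showing that undercompleteness alone is insufficient is precisely the point. The optimal value is $1$ and is attained only by the policy that plays exactly $a^\star$, so any $(1/4)$-optimal policy must play $a^\star$ with probability at least $3/4$ starting from the initial state.

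Next I will argue that $a^\star$ is effectively hidden from the learner. Because the intermediate emissions are i.i.d.\ uniform bits independent of state and action, the entire transcript of $N$ training episodes is a function of the learner's internal randomness, external uniform observation noise, and the binary signal $\mathbf{1}\{(a_1,\dots,a_{H-1}) = a^\star\}$ revealed at the terminal observation of each episode. Let $T_N$ denote the (random) set of distinct length-$(H-1)$ action prefixes played during training, so $|T_N| \le N$. On the event $\{a^\star \notin T_N\}$, the terminal bit is always $0$, no $a^\star$-dependent information has reached the learner, and the posterior over $a^\star$ remains uniform on $[A]^{H-1} \setminus T_N$; in particular the output policy $\hat\pi$ is independent of $a^\star$. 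The same reasoning applies to the evaluation rollout: observations are uninformative until step $H$, so $V^{\hat\pi}$ equals the probability (over $\hat\pi$'s coins) that $\hat\pi$ plays the sequence $a^\star$.

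Averaging over the uniform prior on $a^\star$ and splitting on whether $a^\star \in T_N$ gives
\begin{equation*}
\E_{a^\star}[V^{\hat\pi}] \;\le\; \Pr[a^\star \in T_N] + \Pr[a^\star \notin T_N] \cdot \frac{1}{A^{H-1} - |T_N|} \;\le\; \frac{N}{A^{H-1}} + \frac{1}{A^{H-1} - N}.
\end{equation*}
Taking $N \le c\, A^{H-1}$ for a small absolute constant $c$ (and $H$ large enough that $A^{H-1}$ is not tiny) makes the right-hand side at most $3/8$. Since $V^{\hat\pi} \in [0,1]$, Markov's inequality then yields $\E_{a^\star}\,\Pr[V^{\hat\pi} \ge 3/4] \le 1/2$, and Yao's minimax principle produces a specific $a^\star$, hence a specific undercomplete POMDP, on which $\mathfrak{A}$ outputs a $(1/4)$-optimal policy with probability strictly less than $1/2$. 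This is the desired $\Omega(A^{H-1})$ sample-complexity lower bound.

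The main obstacle is the conditional-independence bookkeeping in the second paragraph: one must carefully verify that the learner's \emph{adaptive} choice of actions, which determines $T_N$, does not leak information about $a^\star$ beyond the single terminal-bit signal per episode. This follows because, conditioned on every terminal bit being $0$, the learner's view is a deterministic function of the uniform observation noise and its own coins, independent of $a^\star$; formalizing this via a coupling or a direct posterior-update calculation is the only technical step, and the averaging plus Markov argument is then routine.
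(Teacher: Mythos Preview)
Your proof is correct. The core idea---a combination-lock POMDP in which intermediate observations carry no information about whether the agent is still on the correct path, reducing the problem to identifying one of $A^{H-1}$ arms---is the same as the paper's. The implementations differ: the paper obtains Proposition~\ref{prop:hardness2} as a one-line corollary of Proposition~\ref{prop:hardness1} by taking the four-state construction there (with full-rank $\O_h$) and splitting one observation into four to force $O=5>S=4$, whereas you build a self-contained instance with $S=O=2$, deterministic transitions, and a rank-one emission matrix at intermediate steps. Your construction is smaller and avoids the auxiliary $g_1/g_2,b_1/b_2$ state-splitting the paper uses to make observations marginally uniform; the paper's route, on the other hand, gets the result essentially for free once Proposition~\ref{prop:hardness1} is in hand and appeals directly to the $\Omega(A^{H-1})$ bandit lower bound rather than spelling out the Yao/Markov calculation. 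Both arrive at the same conclusion by the same mechanism.
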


Proposition \ref{prop:hardness1} and \ref{prop:hardness2} are both proved by constructing hard instances, which are modifications of classical combinatorial locks for MDPs~\cite{krishnamurthy2016pac}. We refer readers to Appendix \ref{app:hardness} for more details.


\begin{algorithm}[t]
\caption{Observable Operator Model with Upper Confidence Bound (OOM-UCB)}
\label{alg:UCB}
\begin{algorithmic}[1]
\STATE {\bfseries Initialize:} set all entries in a vector of counts $\bp \in \N^O$, and in matrices of counts $\bP_h(a, \tilde{a}) \in \N^{O\times O}$, $\bQ_h(o, a, \tilde{a}) \in \N^{O\times O}$ to be zero for all $(o, a,\tilde{a})\in \cO \times \cA^2$
\STATE set confidence set $\Theta_1 \leftarrow \cap_{h\in[H]}\{\hat{\theta} ~|~ \sigma_{\min} (\hat{\O}_h) \ge \alpha\}$.
\FOR{$k=1,2,\ldots, K$}
\STATE compute the optimistic policy $\pi_k \leftarrow \argmax_{\pi} \max_{\hat{\theta} \in \Theta_k} V^\pi(\hat{\theta})$. \label{line:greedy}
\STATE observe $o_1$, and set $\bp \leftarrow \bp + \e_{o_1}$ \label{line:conf_start}
\STATE $\mathfrak{b} \leftarrow (\cap_{h\in[H]}\{\hat{\theta} ~|~ \sigma_{\min} (\hat{\O}_h) \ge \alpha\}) \cap
\{\hat{\theta} ~|~ \norm{k \cdot \b_0(\hat{\theta}) - \bp}_2 \le  \beta_k\} $. \label{line:conf_b}
\FOR{$(h, a,\tilde{a})\in[H-1] \times \cA^2$}
 \STATE execute policy $\pi_k$ from step $1$ to step $h-2$.
 \STATE take action $\tilde{a}$ at step $h-1$, and action $a$ at step $h$ respectively.
 \STATE observe $(o_{h-1},o_h,o_{h+1})$, and set $\bP_h(a, \tilde{a}) \leftarrow \bP_h(a, \tilde{a}) + \e_{o_{h}}\e_{o_{h-1}}\trans$.
 \STATE set $\bQ_h(o_h, a, \tilde{a}) \leftarrow \bQ_h(o_h, a, \tilde{a}) + \e_{o_{h+1}}\e_{o_{h-1}}\trans$.
 \STATE $\mathfrak{B}_h(a, \tilde{a}) \leftarrow \cap_{o \in \cO} \{\hat{\theta} ~|~ 
 \norm{\B_h(a, o; \hat{\theta}) \bP_{h} (a, \tilde{a}) -\bQ_h (o, a, \tilde{a})}_F \le \gamma_k \}.$  \label{line:conf_B}
\ENDFOR
\STATE construct the confidence set $\Theta_{k+1} \leftarrow [\cap_{(h, a,\tilde{a})\in[H-1] \times \cA^2}\mathfrak{B}_h(a, \tilde{a}) ] \cap \mathfrak{b}$. \label{line:conf_end}
\ENDFOR
\STATE {\bfseries Output:} $\pi_k$ where $k$ is sampled uniformly from $[K]$.
\end{algorithmic}
\end{algorithm} 

\subsection{Algorithm}
We are now ready to describe our algorithm. Assumption \ref{assump:POMDP} enables the representation of the POMDP using OOM with relation specified as in Equation \eqref{eq:OM_relation}. 
Our algorithm, Observable Operator Model with Upper Confidence Bound
(OOM-UCB, algorithm \ref{alg:UCB}), is an optimistic algorithm which
heavily exploits the OOM representation to obtain valid uncertainty
estimates of the parameters of the underlying model.

To condense notation in Algorithm~\ref{alg:UCB}, we denote the
parameters of a POMDP as $\theta = (\T, \O, \mu_1)$. We denote
$V^\pi(\theta)$ as the value of policy $\pi$ if the underlying POMDP
has parameter $\theta$. We also write the parameters of the OOM
$(\b_0(\theta), \B_h(a, o; \theta))$ as a function of parameter
$\theta$, where the dependency is specified as
in~\eqref{eq:OM_relation}. We adopt the convention that at the
$0$-th step, the observation $o_0$ and state $s_0$ are always set to
be some fixed dummy observation and state, and, starting from $s_0$, the
environment transitions to $s_1$ with distribution $\mu_1$ regardless
of what action $a_0$ is taken. 

At a high level, Algorithm~\ref{alg:UCB} is an iterative algorithm
that, in each iteration, (a) computes an optimistic policy and model
by maximizing the value (Line \ref{line:greedy}) subject to a given
confidence set constraint, (b) collects data using the optimistic
policy, and (c) incorporates the data into an updated confidence set
for the OOM parameters (Line
\ref{line:conf_start}-\ref{line:conf_end}).  The first two parts are
straightforward, so we focus the discussion on computing the
confidence set. We remark that in general the optimization in Line
\ref{line:greedy} may not be solved in polynomial time (see discussions of the computational complexity after Theorem \ref{thm:main}).


First, since $\b_0$ in \eqref{eq:OM_relation} is simply the
probability over observations at the first step, our confidence set
for $\b_0$ in Line \ref{line:conf_b} is simply based on counting the
number of times each observation appears in the first step and
Hoeffding's concentration inequality.

Our construction of the confidence sets for the operators $\{\B_h(a,
o)\}_{h, a, o}$ is inspired by the method-of-moments estimator in HMM
literature~\citep{hsu2012spectral}. 
Consider two fixed actions $a, \tilde{a}$, and an arbitrary
distribution over $s_{h-1}$. Let $\mat{P}_h(a, \tilde{a}),
\mat{Q}_h(o, a, \tilde{a}) \in \R^{O\times O}$ be the probability
matrices such that  
\begin{align}
[\mat{P}_h(a, \tilde{a})]_{o', o''} =& \Pr(o_h = o', o_{h-1} = o'' |a_h=a, a_{h-1} = \tilde{a}), \nonumber\\
[\mat{Q}_h(o, a, \tilde{a})]_{o', o''} =& \Pr(o_{h+1} = o', o_h = o, o_{h-1} = o'' |a_h=a, a_{h-1} = \tilde{a}). \label{eq:PnQ}
\end{align}
It can be verified that $\B_h(a, o)\mat{P}_h(a, \tilde{a}) =
\mat{Q}_h(o, a, \tilde{a})$ (Fact \ref{fact:equation} in the appendix).
 Our confidence set construction (Line
\ref{line:conf_B} in Algorithm \ref{alg:UCB}) is based on this fact:
we replace the probability matrices $\mat{P}, \mat{Q}$ by empirical
estimates $\bP, \bQ$, and we use concentration inequalities to
determine the width of the confidence set. Finally, our overall
confidence set for the parameters $\theta$ is simply the intersection
of the confidence sets for all induced operators and $\b_0$,
additionally incorporating the constraint on $\sigma_{\min}(\O_h)$
from Assumption \ref{assump:POMDP}.






\subsection{Theoretical guarantees}

Our OOM-UCB algorithm enjoys the following sample complexity guarantee.

\begin{restatable}{theorem}{UCBTheorem}
\label{thm:main}
For any $\varepsilon \in (0, H]$, there exists $K_{\max} = \poly(H,S, A, O, \alpha^{-1})/\varepsilon^2$ and an absolute constant $c_1$, such that for any POMDP that satisfies Assumption \ref{assump:POMDP}, if we set hyperparameters $\beta_k = c_1\sqrt{k\log(KAOH)}$, $\gamma_k = \sqrt{S}\beta_k/\alpha$, and $K \ge K_{\max}$, then the output policy $\hat{\pi}$ of Algorithm \ref{alg:UCB} will be $\varepsilon$-optimal with probability at least $2/3$.
\end{restatable}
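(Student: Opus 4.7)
The plan is to follow the standard optimism-based template for UCB-style RL analyses, tailored to the OOM representation in~\eqref{eq:OOM_pb}. Concretely, I will show (i) a high-probability concentration event on which the true parameter $\theta^\star$ lies in every confidence set $\Theta_k$, (ii) an optimism inequality $V^{\pi_k}(\hat\theta_k)\ge V^\star$ where $(\pi_k,\hat\theta_k)$ are the maximizers from Line~\ref{line:greedy}, (iii) a simulation-style lemma that bounds the per-episode gap $V^{\pi_k}(\hat\theta_k)-V^{\pi_k}(\theta^\star)$ by quantities directly controlled by the confidence-set widths, (iv) a cumulative-sum bound over $k=1,\dots,K$, and finally (v) conversion from expected regret to a PAC statement using the uniform-sampling step at the end of the algorithm together with Markov's inequality.

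For step (i), the bound on $\b_0$ follows from a standard vector Hoeffding argument applied to the i.i.d.\ one-hot vectors $\e_{o_1}$, giving $\|k\b_0(\theta^\star)-\bp\|_2\le \beta_k$ with $\beta_k=\tilde\Theta(\sqrt{k})$ after a union bound. For the operator constraints, I use the identity $\B_h(a,o;\theta^\star)\mat{P}_h(a,\tilde a)=\mat{Q}_h(o,a,\tilde a)$ (Fact~\ref{fact:equation}), so
\[
\B_h(a,o;\theta^\star)\bP_h(a,\tilde a)-\bQ_h(o,a,\tilde a)=\B_h(a,o;\theta^\star)\bigl(\bP_h-k\mat{P}_h\bigr)-\bigl(\bQ_h-k\mat{Q}_h\bigr),
\]
and the Frobenius norms of the two empirical deviations are controlled by matrix Hoeffding at scale $\tilde O(\sqrt{k})$. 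Using $\|\B_h\|\le 1/\alpha$ (from~\eqref{eq:OM_relation} and $\sigma_{\min}(\O_h)\ge\alpha$) together with the fact that $\B_h$ has rank at most $S$, the chosen $\gamma_k=\sqrt{S}\beta_k/\alpha$ suffices after a union bound over $(h,a,\tilde a,o)$ and $k$. Optimism in step (ii) is then immediate from $\theta^\star\in\Theta_k$ and the definition of $(\pi_k,\hat\theta_k)$.

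The hard part will be step (iii), the OOM simulation lemma. Using~\eqref{eq:OOM_pb}, the trajectory probability $\Pr^\pi_\theta(\tau)$ is a product of operators, and I will telescope the difference $\Pr^\pi_{\hat\theta_k}(\tau)-\Pr^\pi_{\theta^\star}(\tau)$ by swapping one factor at a time, yielding an expression of the form $\sum_{h}\e_{o_H}^\top\!\big(\prod_{h'>h}\hat\B_{h'}\big)\bigl(\hat\B_h(a_h,o_h)-\B_h^\star(a_h,o_h)\bigr)\big(\prod_{h'<h}\B^\star_{h'}\big)\b_0$. Summing over trajectories with $|r(\tau)|\le H$ and using $\|\e_{o_H}^\top\prod_{h'>h}\hat\B_{h'}\|_1$-type bounds (which I will prove using the undercomplete factorization $\hat\B_{h'}=\hat\O_{h'+1}\hat\T_{h'}\diag(\hat\O_{h'}(\cdot))\hat\O_{h'}^\dagger$ to strip off the $\hat\O$'s and exploit that $\hat\T_{h'}$ acts stochastically on the hidden-state simplex), together with averaging over the action $\tilde a$ at step $h{-}1$ to introduce the empirical matrix $\bP_h(a_h,\tilde a)/k$, I will bound the per-episode gap by $\tilde O(H^2 A^2 \sqrt{S}/(\alpha^2\sqrt{k}))\cdot\poly(O)$. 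The crux is that, rather than the worst-case operator-norm blow-up $(1/\alpha)^H$, the correct quantity after telescoping is the observable residual $\|\hat\B_h\bP_h-\bQ_h\|_F\le\gamma_k$ reweighted by reach-probabilities under $\pi_k$; these reach-probabilities are exactly what the data-collection loop in lines~7--12 estimates, which is why the algorithm enumerates $(h,a,\tilde a)$.

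For step (iv), combining (ii) and (iii) gives $\sum_{k=1}^K\bigl(V^\star-V^{\pi_k}(\theta^\star)\bigr)\le\sum_{k=1}^K\tilde O(\mathrm{poly}(H,S,A,O,\alpha^{-1})/\sqrt{k})=\tilde O(\mathrm{poly}\cdot\sqrt{K})$. Dividing by $K$ yields an average suboptimality of $\tilde O(\mathrm{poly}/\sqrt{K})$, and in step (v), since $\hat\pi$ is sampled uniformly from $\{\pi_k\}_{k\in[K]}$, Markov's inequality combined with the failure probability $\le 1/6$ of the concentration event (step (i)) gives $V^{\hat\pi}\ge V^\star-\varepsilon$ with probability at least $2/3$, provided $K\ge K_{\max}=\mathrm{poly}(H,S,A,O,\alpha^{-1})/\varepsilon^2$ with an appropriate polynomial.
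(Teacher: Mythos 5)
Your overall architecture (concentration event, optimism, OOM telescoping, summation, Markov) matches the paper's, and steps (i), (ii), and (v) are essentially correct as sketched (with the caveat that for $h\ge 2$ the matrices $\bP_h,\bQ_h$ are built from adaptively collected data, so you need a martingale version of Hoeffding rather than an i.i.d.\ one; the paper handles this in Lemma~\ref{lem:adversary-concentration}). However, there is a genuine gap at the junction of your steps (iii) and (iv): you claim a \emph{per-episode} bound $V^{\pi_k}(\hat\theta_k)-V^{\pi_k}(\theta^\star)\le\tilde O(\mathrm{poly}/\sqrt{k})$ and then sum $\sum_k 1/\sqrt{k}=O(\sqrt{K})$. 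This per-episode bound is false in general. The confidence set at iteration $k$ only controls the operator error in the directions spanned by $\bP_h(a,\tilde a)/k$, i.e., reweighted by the \emph{historical average} latent-state visitation $\frac{1}{k}\sum_{t\le k}\Pr^{\pi_t}_{\theta^\star}(s_{h-1}=\cdot)$, whereas the telescoped per-episode gap (the third term of \eqref{eq:p_bound}) is the operator error reweighted by $\Pr^{\pi_k}_{\theta^\star}(s_{h-1}=\cdot)$ — the visitation of the \emph{current} policy alone. If $\pi_k$ is the first policy to reach some latent state $s$ at step $h-1$, the data contain no information about the operator in the direction $(\O_h\T_{h-1}(\tilde a))_s$, and the gap at iteration $k$ can be $\Omega(1)$ no matter how large $k$ is. Your own remark that ``these reach-probabilities are exactly what the data-collection loop estimates'' conflates the two distributions.

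What actually holds (the paper's Lemma~\ref{lem:B_bound}) is a bound of the form $z_k\cdot\sum_{t=1}^{k}w_t\le C_0\sqrt{k}$, where $z_k$ is the directional operator error under $\theta_k$ and $w_t=\Pr^{\pi_t}_{\theta^\star}(s_{h-1}=s)$; the quantity you need to control is $\sum_k z_k w_k$. Passing from the former to the latter requires an additional pigeonhole/potential argument (Lemma~\ref{lem:root-regret}): splitting off the prefix until $\sum_t w_t\ge 1$ and then bounding $\sum_k w_k/S_{k-1}\le 2\log(S_K/S_n)$, which yields $\sum_k z_k w_k\le\tilde O(C_0\sqrt{K}\log K)$. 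The final $\sqrt{K}$ aggregate rate you state is correct, but it cannot be obtained by summing per-episode bounds; without this extra step (or an equivalent ``each direction can be poorly estimated only boundedly often'' argument) the proof does not go through. The rest of your simulation lemma — telescoping the operator product, stripping the $\hat\O$ factors using stochasticity of $\hat\T$, and paying $\sqrt{S}/\alpha$ for $\hat\O_h^\dagger$ — is the right mechanism and mirrors Lemmas~\ref{lemma:belief-induction}--\ref{prop:subopt-bound}.
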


Theorem \ref{thm:main} claims that in polynomially many iterations of the outer loop, Algorithm \ref{alg:UCB} learns a near-optimal policy for any undercomplete POMDP that satisfies Assumption \ref{assump:POMDP}. 
Since our algorithm only uses $O(H^2A^2)$ samples per iteration of the outer loop, this implies that the sample complexity is also 
$\poly(H, S, A, O, \alpha^{-1})/\varepsilon^2$. We remark that the $1/\varepsilon^2$ dependence is optimal, which follows from standard concentration arguments. To the best of our knowledge, this is the first sample efficiency result for learning a class of POMDPs where exploration is essential. 
\footnote{See Appendix \ref{app:proof-mainthm} for the explicit polynomial dependence of sample complexity;
here, the success probability  is a constant, but one can make it arbitrarily close to $1$ by a standard boosting trick (see Appendix \ref{app:justification3} ). }

While Theorem \ref{thm:main} does guarantee sample
efficiency, Algorithm~\ref{alg:UCB} is not computationally
efficient due to  that the computation of the optimistic policy (Line
\ref{line:greedy}) may not admit a polynomial time implementation,
which should be expected given the aforementioned computational
complexity results.  We now turn to a further restricted (and interesting) subclass of
POMDPs where we can address \emph{both} the computational and
statistical challenges. 





\section{Results for POMDPs with Deterministic Transition}
In this section, we complement our main result by investigating the class
of POMDPs with deterministic transitions, where both computational and
statistical efficiency can be achieved.  We say a POMDP is of
\emph{deterministic transition} if both its transition and initial
distribution are deterministic, i.e, if the entries of matrices
$\{\T_h\}_h$ and vector $\mu_1$ are either $0$ or $1$. We remark that
while deterministic dynamics avoids computational barriers, it does
not mitigate the need for exploration.

Instead of Assumption~\ref{assump:POMDP}, for the deterministic transition case, we require that the columns of the observation matrices $\O_h$ are well-separated.
\begin{assumption}\label{assump:diverse_O}
For any $h\in [H]$, $\min_{s\neq s'}\norm{\O_h(\cdot|s) - \O_h(\cdot|s')} \ge \xi$.
\end{assumption}

Assumption \ref{assump:diverse_O} guarantees that observation distributions for different states are sufficiently different, by at least  $\xi$ in Euclidean norm.
It does not require that the POMDP is undercomplete, and, in fact, is strictly weaker than Assumption~\ref{assump:POMDP}.
In particular, for undercomplete models, $\min_{s\neq s'}\norm{\O_h(\cdot|s) - \O_h(\cdot|s')} \ge \sqrt{2}\sigma_{\min}(\O_h)$, and so Assumption \ref{assump:POMDP} implies Assumption \ref{assump:diverse_O} for $\xi = \sqrt{2} \alpha$.

Leveraging deterministic transitions, we can design a specialized algorithm (Algorithm \ref{alg:deterministic} in the appendix) that learns an $\varepsilon$-optimal policy using polynomially many samples and in polynomial time. We present the formal theorem here, and refer readers to Appendix \ref{app:determinisitc} for more details.

\begin{restatable}{theorem}{DetTheorem}
\label{thm:deterministic}
For any $p \in (0, 1]$, there exists an algorithm such that for any deterministic transition POMDP satisfying Assumption \ref{assump:diverse_O}, within $\bigO\left( H^2SA \log(HSA/p)/(\min\{\varepsilon/(\sqrt{O}H),\xi\})^2 \right)$ samples and computations, the output policy of the algorithm is $\varepsilon$-optimal with probability at least $1-p$.
\end{restatable}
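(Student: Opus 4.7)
The plan is to give a simple \emph{explore-then-plan} algorithm that exploits determinism to build a compressed model of the reachable dynamics, then extract a near-optimal policy by backward induction. Because the transitions and initial distribution are deterministic, every action sequence of length $h$ reaches exactly one latent state, so the set of latent states reachable at step $h$ has size at most $S$, and any two prefixes ending in the same hidden state induce identical observation distributions at every subsequent step. This suggests growing a model of the reachable states level by level, using the $\ell_2$ separation of Assumption~\ref{assump:diverse_O} to decide whether two empirical distributions correspond to the same hidden state.

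Concretely, I would maintain, for each step $h$, a list of ``discovered'' latent states at level $h$, each tagged with one canonical action prefix $\pi^s$ that reaches it. At step $1$ the only state is the unique initial state. Inductively, for each discovered state $s$ at level $h$ with prefix $\pi^s$ and each action $a$, execute $\pi^s$ followed by $a$ for $N$ independent episodes and record the observation at step $h+1$, forming an empirical distribution $\hat{\mathbb{O}}(\cdot \mid s,a) \in \mathbb{R}^O$. Since dynamics are deterministic, all $N$ rollouts land in a single latent successor $s'$, so $\hat{\mathbb{O}}(\cdot \mid s,a)$ is the empirical mean of i.i.d.\ draws from the true $\mathbb{O}_{h+1}(\cdot \mid s')$. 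I then cluster: if $\hat{\mathbb{O}}(\cdot \mid s,a)$ lies within a threshold $\tau$ in $\ell_2$ of some estimate already recorded at level $h+1$, identify the successor with the corresponding already-discovered state; otherwise add $s'$ as a new discovered state with canonical prefix $\pi^s \cdot a$. After sweeping $h=1,\dots,H-1$, run backward-induction dynamic programming on the resulting deterministic empirical model, using the $\hat{\mathbb{O}}_h$ to compute expected per-step rewards, and output the resulting policy.

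For the analysis, set $\Delta$ to a sufficiently small constant multiple of $\min\{\xi,\,\varepsilon/(\sqrt{O}H)\}$ and $N = \Theta(\log(HSA/p)/\Delta^2)$. Standard $\ell_2$-concentration of the empirical distribution over a domain of size $O$ (e.g.\ McDiarmid's inequality applied to $\|\hat{\mathbb{O}}-\mathbb{O}\|_2$) yields $\|\hat{\mathbb{O}}(\cdot \mid s,a) - \mathbb{O}_{h+1}(\cdot \mid s')\|_2 \le \Delta$ per call with probability at least $1-p/(HSA)$; a union bound over the at most $HSA$ invocations makes this hold simultaneously with probability at least $1-p$. Choosing $\tau = 2\Delta$, two rollouts from the same successor differ by at most $2\Delta$, while by Assumption~\ref{assump:diverse_O} rollouts from distinct successors differ by at least $\xi - 2\Delta > 2\Delta$, so the clustering step exactly recovers the hidden-state identification pattern and hence the correct deterministic transition graph. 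Moreover, $\Delta \le \varepsilon/(\sqrt{O}H)$ implies $\|\hat{\mathbb{O}}_h(\cdot \mid s) - \mathbb{O}_h(\cdot \mid s)\|_1 \le \sqrt{O}\,\|\hat{\mathbb{O}}_h(\cdot \mid s) - \mathbb{O}_h(\cdot \mid s)\|_2 \le \varepsilon/H$, so every expected per-step reward $\mathbb{E}_{o\sim \mathbb{O}_h(\cdot\mid s)}[r_h(o)]$ is estimated to accuracy $\varepsilon/H$ (using $r_h\in[0,1]$); summing over $H$ steps, every policy's value in the empirical model is within $\varepsilon$ of its true value, so the planner's output is $\varepsilon$-optimal. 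The total cost is $HSA \cdot N$ episodes, each of length at most $H$, matching the claimed $\mathcal{O}(H^2 SA \log(HSA/p)/\Delta^2)$ sample bound, and both the exploration and the backward-induction planner run in polynomial time since the empirical model has at most $HSA$ nodes.

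The main obstacle is ensuring that \emph{no} misidentification occurs across all $H$ levels: a single error at level $h$ that wrongly merges two successors corrupts every downstream level, while wrongly splitting them can inflate the reachable-set size and break the $O(HSA)$ exploration budget (reintroducing exponential blow-up). This is why the concentration event must hold uniformly for all $\tilde{\mathcal{O}}(HSA)$ exploration calls, and why $N$ must be large enough to resolve \emph{both} the state-separation scale $\xi$ and the reward-estimation scale $\varepsilon/(\sqrt{O}H)$, producing the $\min$ inside the complexity bound. Everything else---the $\ell_2$ concentration, the $\ell_2$-to-$\ell_1$ conversion, and backward-induction planning on a known deterministic graph---is routine.
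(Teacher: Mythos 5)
Your proposal is correct and follows essentially the same route as the paper's Algorithm~\ref{alg:deterministic}: level-by-level discovery of reachable states via canonical action prefixes, $\ell_2$-clustering of empirical observation distributions under Assumption~\ref{assump:diverse_O} with $N = \Theta(\log(HSA/p)/(\min\{\varepsilon/(\sqrt{O}H),\xi\})^2)$ rollouts per $(h,s,a)$, a union bound over the $\bigO(HSA)$ exploration calls, the $\sqrt{O}$ conversion from $\ell_2$ to $\ell_1$ reward accuracy, and dynamic programming on the recovered deterministic graph. The only differences are constant-factor choices of the clustering threshold.
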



\section{Analysis Overview}\label{def:tau}

In this section, we provide an overview of the proof of our main result---Theorem \ref{thm:main}. Please refer to Appendix \ref{app:analysis} for the full proof.

We start our analysis by noticing that the output policy $\hat{\pi}$ of Algorithm \ref{alg:UCB} is uniformly sampled from $\{\pi_k\}_{k=1}^K$ computed in the algorithm. If we can show that
\begin{equation}\label{eq:objective}
\textstyle (1/K)\sum_{k=1}^K V^\star - V^{\pi_k}  \le \varepsilon/10,
\end{equation}
then at least a $2/3$ fraction of the policies in $\{\pi_k\}_{k=1}^K$ must be $\varepsilon$-optimal, and uniform sampling would find such a policy with probability at least $2/3$. Therefore, our proof 
focuses on achieving~\eqref{eq:objective}.

We begin by conditioning on the event that for each iteration $k$, our constructed confidence set $\Theta_k$ in fact contains the true parameters $\theta^\star = (\T,\O,\mu_1)$ of the POMDP.
This holds with high probability and is achieved by setting the widths $\beta_k$ and $\gamma_k$ appropriately (see Lemma~\ref{prop:realizability} in the appendix). 

\subsection{Bounding suboptimality in value by error in density estimation} 
Line \ref{line:greedy} of Algorithm \ref{alg:UCB} computes the greedy policy $\pi_k \leftarrow \argmax_{\pi} \max_{\hat{\theta} \in \Theta_k} V^\pi(\hat{\theta})$ with respect to the current confidence set $\Theta_k$. 
Let $\theta_k$ denote the maximizing model parameters in the $k$-th iteration. 
As $(\pi_k, \theta_k)$ are optimistic, we have $V^\star \equiv
V^\star(\theta^\star) \le V^{\pi_k}(\theta_k)$ for all $k\in
[K]$. Thus, for any $k \in [K]$:
\begin{equation}\label{eq:VtoD}
V^\star - V^{\pi_k} \le V^{\pi_k}(\theta_k) - V^{\pi_k}(\theta^\star)
\le H \sum_{o_H, \ldots, o_1} |\Pr^{\pi_k}_{\theta_k}(o_H, \ldots, o_1) - \Pr^{\pi_k}_{\theta^\star}(o_H, \ldots, o_1)|,
\end{equation}
where $\Pr^{\pi}_\theta$ denotes the probability measure over observations under policy $\pi$ for POMDP with parameters $\theta$. 
The second inequality holds because the cumulative reward is a function of observations $(o_H, \ldots, o_1)$ and is upper bounded by $H$. 
This upper bounds the suboptimality in value by the total variation distance between the $H$-step observation distributions.

Next, note that we can always choose the greedy policy $\pi_k$ to be deterministic, i.e., the probability to take any action given a history is either $0$ or $1$. 
This allows us to define the following set for any deterministic policy $\pi$:
\begin{equation*}
\Gamma(\pi, H) := \{\tau_H = (o_H, \ldots, a_1, o_1) ~|~ \pi(a_{H-1}, \ldots, a_1 | o_H, \ldots, o_1) = 1\}.
\end{equation*}
In words, $\Gamma(\pi, H)$ is a set of all the observation and action sequences of length $H$ that could occur under the $\pi$. 
For any deterministic policy $\pi$, there is a one-to-one correspondence between $\cO^H$ and $\Gamma(\pi, H)$ and moreover, for any sequence $\tau_H = (o_H, \ldots, a_1, o_1) \in \Gamma(\pi, H)$, we have:
\begin{equation}\label{eq:justification2}
p(\tau_H; \theta) := \Pr_{\theta}(o_H, \ldots, o_1|a_{H-1}, \ldots, a_1) = \Pr^\pi_{\theta}(o_H, \ldots, o_1).
\end{equation}
The derivation of equation \eqref{eq:justification2}
 can be found in Appendix \ref{app:justification2}.
Combining this with~\eqref{eq:VtoD} and summing over all episodes, we conclude that:
\begin{equation*}
\sum_{k=1}^K (V^\star - V^{\pi_k}) \le H \sum_{k=1}^K \sum_{\tau_H \in \Gamma(\pi_k, H)} |p(\tau_H; \theta_k) - p(\tau_H; \theta^\star)|.
\end{equation*}
This upper bounds the suboptimality in value by errors in estimating the conditional probabilities.







\subsection{Bounding error in density estimation by error in estimating operators} 
For the next step, we leverage the OOM representation to bound the difference between the conditional probabilities $p(\tau_H; \theta_k)$ and $p(\tau_H; \theta^\star)$.
Recall that from \eqref{eq:OOM_pb}, the conditional probability can be written as a product of the observable operators for each step and $\b_0$. 
Therefore, for any two parameters $\hat{\theta}$ and $\theta$, we have following relation for any sequence $\tau_H = (o_H, \ldots, a_1, o_1)$:
\begin{align*}
&p(\tau_H; \hat{\theta}) - p(\tau_H; \theta) = \e_{o_H}\trans \cdot \B_{H-1}(a_{H-1}, o_{H-1};\hat{\theta}) \cdots \B_1(a_1, o_1; \hat{\theta}) \cdot [\b_0(\hat{\theta}) - \b_0(\theta)]\\
& + \sum_{h=1}^{H-1}\e_{o_H}\trans \cdot \B_{H-1}(a_{H-1}, o_{H-1};\hat{\theta}) \cdots [\B_h(a_h, o_h; \hat{\theta}) - \B_h(a_h, o_h; \theta)] \cdots \B_1(a_1, o_1; \theta) \cdot \b_0(\theta).
\end{align*}
This relates the difference $p(\tau_H; \hat{\theta}) - p(\tau_H; \theta)$ to the differences in operators and $\b_0$. 
Formally, with further relaxation and summation over all sequence in $\Gamma(\pi, H)$, we have the following lemma (also see Lemma~\ref{prop:subopt-bound} in Appendix~\ref{app:analysis}).


\begin{lemma}\label{lem:p_bound}
Given a deterministic policy $\pi$ and two sets of undercomplete POMDP parameters $\theta = (\O,\T,{\mu}_1)$ and $\thetahat = (\Ohat,\That,\hat{\mu}_1)$ with $\sigma_{\min}(\Ohat)\ge\alpha$, we have
\begin{align}
    & \sum_{\mathclap{\tau_H\in \Gamma(\pi,H)}} | p(\tau_H;\thetahat) - p(\tau_H;\theta)|  \le 
    \frac{\sqrt{S} }{\alpha} \left( \norm{\b_0(\hat{\theta})- \b_0(\theta)}_1  
    +  \sum_{\mathclap{(a,o)\in  \cA\times \cO}} \norm{ [\B_1(a,o;\thetahat) - \B_1(a,o;\theta)]\b_0(\theta)}_1 \right. \nonumber\\
& \hspace{2ex} +  \left.\sum_{h=2}^{H-1}\sum_{(a,\at,o)\in\cA^2\times\cO}
     \sum_{s=1}^{S} \left\| \left(\B_h(a,o;\thetahat) -\B_h(a,o;\theta)\right)  \O_{h} \T_{h-1}(\at) \e_s \right\|_1
     \Pr_\theta^\pi(s_{h-1}=s)\right). \label{eq:p_bound}
\end{align}
\end{lemma}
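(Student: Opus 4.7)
The plan is to bound the sum using the telescoping decomposition of $p(\tau_H; \hat\theta) - p(\tau_H; \theta)$ already displayed just above the lemma statement. That identity writes the difference as a ``head'' term containing $\b_0(\hat\theta) - \b_0(\theta)$ plus $H-1$ ``intermediate'' terms, each holding a single operator difference $\B_h(a_h, o_h; \hat\theta) - \B_h(a_h, o_h; \theta)$ sandwiched between a block of trailing $\hat\theta$-operators and a block of leading $\theta$-operators. A triangle inequality inside the sum over $\tau_H$ then reduces the problem to bounding $H$ separate pieces and matching each to one of the three summands on the right-hand side.

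The workhorse I would establish is a ``contraction inequality'': for any $v \in \R^O$, any starting index $h$, and any action sequence (possibly determined by observations through the deterministic policy $\pi$),
\begin{equation*}
\sum_{o_h, \ldots, o_H} \Bigl| \e_{o_H}\trans \B_{H-1}(a_{H-1}, o_{H-1}; \hat\theta) \cdots \B_h(a_h, o_h; \hat\theta) v \Bigr| \le \frac{\sqrt{S}}{\alpha} \|v\|_1.
\end{equation*}
The proof strips $o_H$ via $\sum_{o_H} |\e_{o_H}\trans u| = \|u\|_1$ and then peels off observations one at a time using the recursion $\sum_{o_{j-1}} \|\hat\O_j\sdinv \B_{j-1}(\hat\theta) w\|_1 \le \|\hat\O_{j-1}\sdinv w\|_1$, which itself follows by expanding $\B_{j-1}(\hat\theta) = \hat\O_j \hat\T_{j-1}(a_{j-1}) \diag(\hat\O_{j-1}(o_{j-1}|\cdot)) \hat\O_{j-1}\sdinv$, splitting $\hat\O_{j-1}\sdinv w$ into positive and negative parts, and invoking column-stochasticity of $\hat\T_{j-1}$ and $\hat\O_j$ together with $\sum_{o_{j-1}} \diag(\hat\O_{j-1}(o_{j-1}|\cdot)) = I$. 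Iterating from $j = H-1$ down to $j = h$ gives the bound $\|\hat\O_h\sdinv v\|_1$, and a single application of $\|\cdot\|_1 \le \sqrt{S}\|\cdot\|_2$, $\sigma_{\min}(\hat\O_h) \ge \alpha$, and $\|v\|_2 \le \|v\|_1$ finishes the inequality.

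Applying the contraction inequality to the head term with $v = \b_0(\hat\theta) - \b_0(\theta)$ yields the first summand immediately. For the $h$-th intermediate term I would first replace the trailing $\theta$-block using the OOM identity $\B_{h-1}(\theta) \cdots \B_1(\theta) \b_0(\theta) = \O_h q_h$, where $q_h(s) := \Pr_\theta^\pi(s_h = s, o_1, \ldots, o_{h-1})$ is the unnormalized joint distribution of latent state and observation prefix; this identity follows from unrolling the definition of $\B_j(\theta)$ and using $\O_j\sdinv \O_j = I_S$. The contraction inequality absorbs the sum over $o_{h+1}, \ldots, o_H$ into a factor of $(\sqrt{S}/\alpha) \|[\B_h(a_h, o_h; \hat\theta) - \B_h(a_h, o_h; \theta)] \O_h q_h\|_1$. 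For $h = 1$, we have $q_1 = \mu_1$ so $\O_1 q_1 = \b_0(\theta)$, and upper-bounding the policy-chosen $a_1$ by $\sum_{a \in \cA}$ produces the second summand. For $h \ge 2$, the Bayes recursion $q_h = \T_{h-1}(a_{h-1}) \diag(\O_{h-1}(o_{h-1}|\cdot)) q_{h-1}$ lets me write $\O_h q_h = \sum_s \O_h \T_{h-1}(a_{h-1}) \e_s \cdot \O_{h-1}(o_{h-1}|s) q_{h-1}(s)$; a triangle inequality pulls the $\ell_1$ norm past the nonnegative coefficients, after which I marginalize $o_{h-1}$ using $\sum_o \O_{h-1}(o|s) = 1$ and the earlier $o_1, \ldots, o_{h-2}$ using $\sum q_{h-1}(s) = \Pr_\theta^\pi(s_{h-1} = s)$, while upper-bounding the policy-determined actions by free sums over $(a, \tilde a)$. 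This recovers the third summand exactly.

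The main technical obstacle is the contraction inequality, because the vector being pushed through the iterated $\hat\theta$-operators is signed rather than a nonnegative belief state, so one cannot directly exploit column-stochasticity. The positive/negative decomposition of $\hat\O_j\sdinv w$ together with the column-stochasticity of the \emph{estimated} POMDP parameters (which is enforced because $\hat\theta$ lies in the confidence set $\Theta_k$ and satisfies $\sigma_{\min}(\hat\O_h) \ge \alpha$) is what makes the argument close, and, crucially, it pays the factor $\sqrt{S}/\alpha$ only once at the very end rather than once per step, which is what keeps the final bound polynomial in $H$.
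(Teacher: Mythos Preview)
Your proposal is correct and follows essentially the same approach as the paper: the paper packages the argument as an inductive recursion on belief-state error (Lemma~\ref{lemma:belief-induction}) that is unrolled into Lemma~\ref{lem:regret-decomp}, but the one-step recursion there is exactly your contraction inequality, and the subsequent passage from $\b(\tau_{h-1};\theta)$ to state-visitation probabilities (your Bayes-recursion plus marginalization) is the paper's Lemma~\ref{lem:belief-pairdistribution}. The only difference is presentational---you start from the explicit telescoping identity displayed in the main text, while the paper rederives the same sum of terms inductively.
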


This lemma suggests that if we could estimate the operators
accurately, we would have small value sub-optimality. However,
Assumption \ref{assump:POMDP} is not sufficient for parameter
recovery.
It is possible that in some step $h$, there exists a state $s_h$ that can be reached with only very small probability no matter what policy is used. 
Since we cannot collect many samples from $s_h$, it is not possible to estimate the corresponding component in the operator $B_h$. 
In other words, we cannot hope to make $\norm{\B_h(a,o;\thetahat) -\B_h(a,o;\theta)}_1$ small in our setting.

To proceed, it is crucial to observe that the third term on the RHS of
\eqref{eq:p_bound}, is in fact the operator error $\B_h(a,o;\thetahat)
-\B_h(a,o;\theta)$ projected onto the direction $ \O_{h} \T_{h-1}(\at)
\e_s $ and additionally reweighted by the probability of visiting
state $s$ in step $h-1$.  Therefore, if $s$ is hard to reach, the
weighting probability will be very small, which means that even though
we cannot estimate $\B_h(a,o;\theta)$ accurately in the corresponding
direction, it has a negligible contribution to the density estimation
error (LHS of \eqref{eq:p_bound}).



\subsection{Bounding error in estimating operators by OOM-UCB algorithm} 
By Lemma \ref{lem:p_bound}, we only need to bound the error in operators reweighted by visitation probability. 
This is achieved by a careful design of the confidence sets in the OOM-UCB algorithm. This construction is based on the method of moments, which heavily exploits the undercompleteness of the POMDP. To showcase the main idea, we focus on bounding the third term on the RHS of \eqref{eq:p_bound}.

Consider a fixed $(o, a, \tilde{a})$ tuple, a fixed step $h\in[H]$, and a fixed iteration $k \in [K]$. 
We define moment matrices $\mat{P}_h(a, \tilde{a}), \mat{Q}_h(o, a, \tilde{a}) \in \R^{O\times O}$ as in \eqref{eq:PnQ} for distribution on $s_{h-1}$ that equals $(1/k)\cdot \sum_{t=1}^{k}\Pr_{\theta^\star}^{\pi_{t}}(s_{h-1} = \cdot) $. 
We also denote $\hat{\mat{P}}_h(a, \tilde{a}) = \bP_h(a, \tilde{a})/k, \hat{\mat{Q}}_h(o, a, \tilde{a}) = \bQ_h(o, a, \tilde{a})/k$ for $\bP_h, \bQ_h$ matrices \emph{after the update} in the $k$-th iteration of Algorithm \ref{alg:UCB}. 
By martingale concentration, it is not hard to show that with high probability:
\begin{equation*}
\norm{\mat{P}_h(a, \tilde{a}) - \hat{\mat{P}}_h(a, \tilde{a})}_F \le \tlO(1/\sqrt{k}), \qquad \norm{\mat{Q}_h(o, a, \tilde{a}) - \hat{\mat{Q}}_h(o, a, \tilde{a})}_F \le \tlO(1/\sqrt{k}).
\end{equation*}
Additionally, we can show that for the true operator and the true moments, we have $\B_h(a, o; \theta^\star)\mat{P}_h(a, \tilde{a}) = \mat{Q}_h(o, a, \tilde{a})$. 
Meanwhile, by the construction of our confidence set $\Theta_{k+1}$, we know that for any $\hat{\theta} \in \Theta_{k+1}$, we have
\begin{equation*}
\norm{\B_h(a, o; \hat{\theta}) \hat{\mat{P}}_{h} (a, \tilde{a}) -\hat{\mat{Q}}_h (o, a, \tilde{a})}_F \le \gamma_k/k.
\end{equation*}
Combining all relations above, we see that $\B_h(a, o; \hat{\theta})$ is accurate in the directions spanned by $\mat{P}_h(a, \tilde{a})$, which, by definition, are directions frequently visited by the previous policies $\{\pi_t\}_{t=1}^k$. Formally, we have the following lemma, which allows us to bound the third term on the RHS of~\eqref{eq:p_bound} using the algebraic transformation in Lemma \ref{lem:root-regret}.
\begin{lemma}\label{lem:B_bound}
With probability at least $1-{\delta}$, for all $k\in[K]$, for any $\thetahat=(\Ohat,\That,\hat{\mu}_1)\in\Theta_{k+1}$ and $(o,a,\at,h)\in\cO \times \cA^2\times\{2,\ldots,H-1\}$, and $\iota = \log(KAOH/\delta)$, we have	
\begin{equation*}
 \sum_{s=1}^{S} \left\| \left(\B_h(a,o;\thetahat) -\B_h(a,o;\theta^\star)\right)  \O_{h} \T_{h-1}(\at) \e_s \right\|_1
    \sum_{t=1}^{k} \Pr^{\pi_t}_{\theta^\star}(s_{h-1}=s) 
     \le \bigO\left(\sqrt{\frac{k S^2 O\iota}{\alpha^4}}\right).
\end{equation*}
\end{lemma}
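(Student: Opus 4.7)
My plan is to transfer an exact moment-matching identity for the true operator $\B_h(a,o;\theta^\star)$ to an approximate identity for the estimated operator via the confidence-set constraint, and then algebraically extract the claimed weighted error bound by inverting the ``outer'' factor $\O_{h-1}\trans$ in the factorization of $P_h$. Fix $k\in[K]$, $(o,a,\tilde a,h)$, and $\hat{\theta}\in\Theta_{k+1}$. Let $\bar{\mu}(s) := (1/k)\sum_{t=1}^k \Pr_{\theta^\star}^{\pi_t}(s_{h-1}=s)$ denote the average visitation distribution at step $h-1$, and let $P_h(a,\tilde a), Q_h(o,a,\tilde a) \in \R^{O\times O}$ be the moment matrices defined in~\eqref{eq:PnQ} with respect to $\bar{\mu}$. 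Conditioning on $s_{h-1}$ (and using that $o_{h-1}$ and $o_h$ are conditionally independent given $s_{h-1}$) gives the factorization $P_h(a,\tilde a) = \O_h \T_{h-1}(\tilde a) \diag(\bar{\mu}) \O_{h-1}\trans$, and Fact~\ref{fact:equation} yields the exact identity $\B_h(a,o;\theta^\star) P_h(a,\tilde a) = Q_h(o,a,\tilde a)$.

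First I would control the empirical moment-matrix errors. The per-episode contributions to $\bP_h - kP_h$ and $\bQ_h - kQ_h$ form martingale difference sequences with uniformly bounded increments, so Azuma--Hoeffding combined with a union bound over $(k,h,a,\tilde a,o)$ gives, with probability at least $1-\delta$, $\fnorm{\hat P_h - P_h}, \fnorm{\hat Q_h - Q_h} \le \tlO(\sqrt{\iota/k})$, where $\hat P_h := \bP_h/k$ and $\hat Q_h := \bQ_h/k$. For $\hat{\theta}\in\Theta_{k+1}$, Line~\ref{line:conf_B} of Algorithm~\ref{alg:UCB} gives $\fnorm{\B_h(a,o;\hat{\theta}) \hat P_h - \hat Q_h} \le \gamma_k/k$, and the OOM representation~\eqref{eq:OM_relation} combined with $\sigma_{\min}(\hat{\O}_h)\ge\alpha$ yields a uniform spectral-norm bound $\|\B_h(a,o;\hat{\theta})\|\le \poly(S)/\alpha$ over the confidence set. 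Combining these three ingredients via the triangle inequality and the exact identity $\B_h(a,o;\theta^\star) P_h = Q_h$ gives
\begin{equation*}
\fnorm{(\B_h(a,o;\hat{\theta}) - \B_h(a,o;\theta^\star))\, P_h(a,\tilde a)} \le \tlO\!\left(\frac{\poly(S)}{\alpha\sqrt{k}}\right) + \frac{\gamma_k}{k}.
\end{equation*}

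Next I would convert this Frobenius bound into the state-indexed weighted sum. Right-multiplying by the pseudoinverse $(\O_{h-1}\trans)\sdinv$---whose spectral norm is at most $1/\sigma_{\min}(\O_{h-1})\le 1/\alpha$ and which satisfies $\O_{h-1}\trans (\O_{h-1}\trans)\sdinv = \I_S$ because $\O_{h-1}$ has full column rank---yields $\fnorm{(\B_h(a,o;\hat{\theta})-\B_h(a,o;\theta^\star))\,\O_h\T_{h-1}(\tilde a)\,\diag(\bar{\mu})} \le (1/\alpha)\fnorm{(\B_h(a,o;\hat{\theta})-\B_h(a,o;\theta^\star))\, P_h(a,\tilde a)}$. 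Writing $x_s := \|(\B_h(a,o;\hat{\theta})-\B_h(a,o;\theta^\star))\,\O_h\T_{h-1}(\tilde a)\,\e_s\|_2$, two applications of Cauchy--Schwarz---columnwise $\|v\|_1\le\sqrt{O}\|v\|_2$, and state-wise $\sum_s \bar{\mu}(s) x_s \le \sqrt{S}\sqrt{\sum_s \bar{\mu}(s)^2 x_s^2}$---convert this into $\sum_s \bar{\mu}(s)\,\|(\B_h(a,o;\hat{\theta})-\B_h(a,o;\theta^\star))\,\O_h\T_{h-1}(\tilde a)\,\e_s\|_1 \le \sqrt{OS}\cdot \fnorm{(\B_h(a,o;\hat{\theta})-\B_h(a,o;\theta^\star))\,\O_h\T_{h-1}(\tilde a)\,\diag(\bar{\mu})}$. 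Multiplying through by $k$ and plugging in $\gamma_k = \sqrt{S}\beta_k/\alpha$ with $\beta_k = \tlO(\sqrt{k})$ yields the claimed $\tlO(\sqrt{kS^2O\iota/\alpha^4})$.

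The main obstacle is this last step: the weights $\bar{\mu}(s)$ sit ``inside'' the factorization of $P_h$, and the inversion must target only the outer factor $\O_{h-1}\trans$ in order to preserve the weighting (so that directions rarely visited---where we cannot hope to estimate $\B_h$ accurately---do not inflate the bound). Arranging the two Cauchy--Schwarz steps to give $\sqrt{OS}$ rather than $OS$, and tracking that the concentration widths, the uniform operator-norm bound on $\B_h(a,o;\hat{\theta})$ over $\Theta_{k+1}$, and the choice of $\gamma_k$ combine cleanly into the advertised scaling, is where most of the care is required.
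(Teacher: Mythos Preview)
Your proposal is correct and follows essentially the same route as the paper's proof (Lemma~\ref{prop:confidence-set-ucb}, Type-II closeness): concentration of $\hat P_h,\hat Q_h$ plus the confidence-set constraint and the bound $\|\B_h(a,o;\hat\theta)\|_2\le \sqrt{S}/\alpha$ yield $\fnorm{(\hat\B_h-\B_h)\V}\le \bigO(\sqrt{S\iota/(k\alpha^2)})$, and then stripping the outer $\O_{h-1}\trans$ factor (your pseudoinverse step is equivalent to the paper's use of $\sigma_{\min}(\O_{h-1})\ge\alpha$) followed by the $\ell_1$--Frobenius inequality on an $O\times S$ matrix gives the $\sqrt{SO}/\alpha$ factor. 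The only cosmetic differences are that you phrase the $\O_{h-1}\trans$ removal via right-multiplication by $(\O_{h-1}\trans)\sdinv$ and split the $\sqrt{SO}$ loss into two Cauchy--Schwarz applications, whereas the paper does both in one line each.
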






\section{Conclusion}
In this paper, we give a sample efficient algorithm for reinforcement
learning in undercomplete POMDPs. Our results leverage a connection to
the observable operator model and employ a refined error analysis. To
our knowledge, this gives the first provably efficient algorithm for
strategic exploration in partially observable environments.


%


\section*{Acknowledgements}

This work was supported by Microsoft and Princeton University.
S.K. gratefully acknowledges funding from the ONR award N00014-18-1-2247, NSF Awards CCF-1703574 and CCF-1740551.

\bibliographystyle{abbrvnat}
\bibliography{ref}

\begin{thebibliography}{30}
\providecommand{\natexlab}[1]{#1}
\providecommand{\url}[1]{\texttt{#1}}
\expandafter\ifx\csname urlstyle\endcsname\relax
  \providecommand{\doi}[1]{doi: #1}\else
  \providecommand{\doi}{doi: \begingroup \urlstyle{rm}\Url}\fi

\bibitem[Anandkumar et~al.(2012)Anandkumar, Hsu, and
  Kakade]{anandkumar2012method}
A.~Anandkumar, D.~Hsu, and S.~M. Kakade.
\newblock A method of moments for mixture models and hidden markov models.
\newblock In \emph{Conference on Learning Theory}, pages 33--1, 2012.

\bibitem[Anandkumar et~al.(2014)Anandkumar, Ge, Hsu, Kakade, and
  Telgarsky]{anandkumar2014tensor}
A.~Anandkumar, R.~Ge, D.~Hsu, S.~M. Kakade, and M.~Telgarsky.
\newblock Tensor decompositions for learning latent variable models.
\newblock \emph{Journal of Machine Learning Research}, 15:\penalty0 2773--2832,
  2014.

\bibitem[Azizzadenesheli et~al.(2016)Azizzadenesheli, Lazaric, and
  Anandkumar]{azizzadenesheli2016reinforcement}
K.~Azizzadenesheli, A.~Lazaric, and A.~Anandkumar.
\newblock Reinforcement learning of pomdps using spectral methods.
\newblock \emph{29th Annual Conference on Learning Theory}, 2016.

\bibitem[Bazinin and Shani(2018)]{bazinin2018iterative}
S.~Bazinin and G.~Shani.
\newblock Iterative planning for deterministic qdec-pomdps.
\newblock In \emph{GCAI}, pages 15--28, 2018.

\bibitem[Besse and Chaib-Draa(2009)]{besse2009quasi}
C.~Besse and B.~Chaib-Draa.
\newblock Quasi-deterministic partially observable markov decision processes.
\newblock In \emph{International Conference on Neural Information Processing},
  pages 237--246. Springer, 2009.

\bibitem[Bonet(2012)]{bonet2012deterministic}
B.~Bonet.
\newblock Deterministic pomdps revisited.
\newblock \emph{arXiv preprint arXiv:1205.2659}, 2012.

\bibitem[Boots et~al.(2011)Boots, Siddiqi, and Gordon]{boots2011closing}
B.~Boots, S.~M. Siddiqi, and G.~J. Gordon.
\newblock Closing the learning-planning loop with predictive state
  representations.
\newblock \emph{The International Journal of Robotics Research}, 30\penalty0
  (7):\penalty0 954--966, 2011.

\bibitem[Brown and Sandholm(2018)]{brown2018superhuman}
N.~Brown and T.~Sandholm.
\newblock Superhuman ai for heads-up no-limit poker: Libratus beats top
  professionals.
\newblock \emph{Science}, 359\penalty0 (6374):\penalty0 418--424, 2018.

\bibitem[Carlyle and Paz(1971)]{carlyle1971realizations}
J.~W. Carlyle and A.~Paz.
\newblock Realizations by stochastic finite automata.
\newblock \emph{Journal of Computer and System Sciences}, 5\penalty0
  (1):\penalty0 26--40, 1971.

\bibitem[Cassandra et~al.(1996)Cassandra, Kaelbling, and
  Kurien]{cassandra1996acting}
A.~R. Cassandra, L.~P. Kaelbling, and J.~A. Kurien.
\newblock Acting under uncertainty: Discrete bayesian models for mobile-robot
  navigation.
\newblock In \emph{Proceedings of IEEE/RSJ International Conference on
  Intelligent Robots and Systems. IROS'96}, volume~2, pages 963--972. IEEE,
  1996.

\bibitem[Even-Dar et~al.(2005)Even-Dar, Kakade, and
  Mansour]{even2005reinforcement}
E.~Even-Dar, S.~M. Kakade, and Y.~Mansour.
\newblock Reinforcement learning in pomdps without resets.
\newblock 2005.

\bibitem[Guo et~al.(2016)Guo, Doroudi, and Brunskill]{guo2016pac}
Z.~D. Guo, S.~Doroudi, and E.~Brunskill.
\newblock A pac rl algorithm for episodic pomdps.
\newblock In \emph{Artificial Intelligence and Statistics}, pages 510--518,
  2016.

\bibitem[Hauskrecht and Fraser(2000)]{hauskrecht2000planning}
M.~Hauskrecht and H.~Fraser.
\newblock Planning treatment of ischemic heart disease with partially
  observable markov decision processes.
\newblock \emph{Artificial Intelligence in Medicine}, 18\penalty0 (3):\penalty0
  221--244, 2000.

\bibitem[Hsu et~al.(2012)Hsu, Kakade, and Zhang]{hsu2012spectral}
D.~Hsu, S.~M. Kakade, and T.~Zhang.
\newblock A spectral algorithm for learning hidden markov models.
\newblock \emph{Journal of Computer and System Sciences}, 78\penalty0
  (5):\penalty0 1460--1480, 2012.

\bibitem[Jaeger(1998)]{jaeger1998discrete}
H.~Jaeger.
\newblock \emph{Discrete-time, discrete-valued observable operator models: a
  tutorial}.
\newblock GMD-Forschungszentrum Informationstechnik, 1998.

\bibitem[Jaeger(2000)]{jaeger2000observable}
H.~Jaeger.
\newblock Observable operator models for discrete stochastic time series.
\newblock \emph{Neural computation}, 12\penalty0 (6):\penalty0 1371--1398,
  2000.

\bibitem[Jin et~al.(2019)Jin, Netrapalli, Ge, Kakade, and Jordan]{jin2019short}
C.~Jin, P.~Netrapalli, R.~Ge, S.~M. Kakade, and M.~I. Jordan.
\newblock A short note on concentration inequalities for random vectors with
  subgaussian norm.
\newblock \emph{arXiv preprint arXiv:1902.03736}, 2019.

\bibitem[Kamyar et~al.(2018)Kamyar, Yue, and
  Anandkumar]{azizzadenesheli2018policy}
A.~Kamyar, Y.~Yue, and A.~Anandkumar.
\newblock Policy gradient in partially observable environments: Approximation
  and convergence.
\newblock \emph{arXiv preprint arXiv:1810.07900}, 2018.

\bibitem[Krishnamurthy et~al.(2016)Krishnamurthy, Agarwal, and
  Langford]{krishnamurthy2016pac}
A.~Krishnamurthy, A.~Agarwal, and J.~Langford.
\newblock Pac reinforcement learning with rich observations.
\newblock In \emph{Advances in Neural Information Processing Systems}, pages
  1840--1848, 2016.

\bibitem[Littman and Sutton(2002)]{littman2002predictive}
M.~L. Littman and R.~S. Sutton.
\newblock Predictive representations of state.
\newblock In \emph{Advances in neural information processing systems}, pages
  1555--1561, 2002.

\bibitem[Mossel and Roch(2005)]{mossel2005learning}
E.~Mossel and S.~Roch.
\newblock Learning nonsingular phylogenies and hidden markov models.
\newblock In \emph{Proceedings of the thirty-seventh annual ACM symposium on
  Theory of computing}, pages 366--375, 2005.

\bibitem[Mundhenk et~al.(2000)Mundhenk, Goldsmith, Lusena, and
  Allender]{mundhenk2000complexity}
M.~Mundhenk, J.~Goldsmith, C.~Lusena, and E.~Allender.
\newblock Complexity of finite-horizon markov decision process problems.
\newblock \emph{Journal of the ACM (JACM)}, 47\penalty0 (4):\penalty0 681--720,
  2000.

\bibitem[Papadimitriou and Tsitsiklis(1987)]{papadimitriou1987complexity}
C.~H. Papadimitriou and J.~N. Tsitsiklis.
\newblock The complexity of markov decision processes.
\newblock \emph{Mathematics of operations research}, 12\penalty0 (3):\penalty0
  441--450, 1987.

\bibitem[Poupart and Vlassis(2008)]{poupart2008model}
P.~Poupart and N.~Vlassis.
\newblock Model-based bayesian reinforcement learning in partially observable
  domains.
\newblock In \emph{Proc Int. Symp. on Artificial Intelligence and
  Mathematics,}, pages 1--2, 2008.

\bibitem[Rafferty et~al.(2011)Rafferty, Brunskill, Griffiths, and
  Shafto]{rafferty2011faster}
A.~N. Rafferty, E.~Brunskill, T.~L. Griffiths, and P.~Shafto.
\newblock Faster teaching by pomdp planning.
\newblock In \emph{International Conference on Artificial Intelligence in
  Education}, pages 280--287. Springer, 2011.

\bibitem[Ross et~al.(2008)Ross, Chaib-draa, and Pineau]{ross2008bayes}
S.~Ross, B.~Chaib-draa, and J.~Pineau.
\newblock Bayes-adaptive pomdps.
\newblock In \emph{Advances in neural information processing systems}, pages
  1225--1232, 2008.

\bibitem[Sch{\"u}tzenberger(1961)]{schutzenberger1961definition}
M.~P. Sch{\"u}tzenberger.
\newblock On the definition of a family of automata.
\newblock \emph{Information and control}, 4\penalty0 (2-3):\penalty0 245--270,
  1961.

\bibitem[Sharan et~al.(2017)Sharan, Kakade, Liang, and
  Valiant]{sharan2017learning}
V.~Sharan, S.~M. Kakade, P.~S. Liang, and G.~Valiant.
\newblock Learning overcomplete hmms.
\newblock In \emph{Advances in Neural Information Processing Systems}, pages
  940--949, 2017.

\bibitem[Song et~al.(2010)Song, Boots, Siddiqi, Gordon, and
  Smola]{song2010hilbert}
L.~Song, B.~Boots, S.~Siddiqi, G.~J. Gordon, and A.~Smola.
\newblock Hilbert space embeddings of hidden markov models.
\newblock 2010.

\bibitem[Vlassis et~al.(2012)Vlassis, Littman, and
  Barber]{vlassis2012computational}
N.~Vlassis, M.~L. Littman, and D.~Barber.
\newblock On the computational complexity of stochastic controller optimization
  in pomdps.
\newblock \emph{ACM Transactions on Computation Theory (TOCT)}, 4\penalty0
  (4):\penalty0 1--8, 2012.

\end{thebibliography}
\newpage
\appendix
\section{Notation}\label{appendix-notation}
Below, we introduce some notations that will be used in appendices.

\begin{table}[H]
\renewcommand\arraystretch{1.5}
	\begin{tabular}{|P{2.5cm}||P{10cm}|}
 \hline
 notation  & definition\\
 \hline \hline
 $\bp^k$ & value of $\bp$ \emph{after the update} in the $k^{\rm th}$  iteration of Algorithm \ref{alg:UCB}\\
 $\bP_h^k(a,\at)$ & value of $\bP_h(a,\at)$ \emph{after the update} in the $k^{\rm th}$  iteration of Algorithm \ref{alg:UCB}\\
 $\bQ_h^k(o,a,\at)$ & value of $\bQ_h(o,a,\at)$ \emph{after the update} in the $k^{\rm th}$  iteration of Algorithm \ref{alg:UCB}\\
 $\theta$ & a parameter triple $(\T,\O,\mu_1)$ of a POMDP\\
 $\theta^\star$ & the groundtruth POMDP parameter triple\\
 POMDP$(\theta)$ & $\pomdp(H, \cS, \cA, \cO, \T, \O, r, \mu_1)$\\
 $\tau_h$\footnotemark 
 &  a length-$h$ trajectory: $\tau_h=[a_h,o_h,\ldots,a_1,o_1]\in(\cA\times\cO)^h$\\
 $\Gamma(\pi,h)$\footnotemark & $\{\tau_h = (a_h,o_h, \ldots, a_1, o_1) ~|~ \pi(a_{h}, \ldots, a_1 | o_h, \ldots, o_1) = 1\}$.\\
 $\b(\tau_{h};\theta)$ &  $\B_{h}(a_{h}, o_{h};\theta) \cdots \B_1(a_1, o_1;\theta) \cdot \b_0(\theta)$\\
 $\Pr^{\pi}_\theta ( s_{h} = s)$  & probability of visiting state $s$ at $h^{\rm th}$ step when executing policy $\pi$ on POMDP($\theta$)\\
 \hline
 $\one(x=y)$ & equal to $1$ if $x=y$ and $0$ otherwise.\\
 $\e_{o}$ & an $O$-dimensional vector with $(\e_{o})_i = \one(o=i)$\\
 $(\X)_o$ & the $o^{\rm th}$ column of matrix $\X$\\
 $\I_{n}$  & $n\times n$ identity matrix\\
  $\Cpoly$ & $\poly(S,O,A,H,{1}/ {\alpha},\log({1}/ {\delta}))$\\
  $\iota$ & $\log(AOHK/\delta)$\\
 \hline
\end{tabular}
\end{table}
\footnotetext[3]{Note that this definition is \emph{different} from the one used in Section \ref{def:tau}, where $\tau_h = [o_h,\ldots,a_1,o_1]\in\cO\times (\cA\times\cO)^{h-1}$ does not include the action $a_h$ at $h^{\rm th}$ step.}
\footnotetext[4]{WLOG, all the polices considered in this paper are \emph{deterministic}. Also note that the trajectory in $\Gamma(\pi,h)$ contains $a_h$, which is \emph{different} from the definition in Section \ref{def:tau}}

Let $\x\in\R^{n_x}$, $\y \in \R^{n_y}$ and $\z \in \R^{n_z}$. We denote by $\x \otimes \y \otimes \z$ the tensor product of vectors $\x$, $\y$ and $\z$, an $n_x\times n_y \times n_z$ tensor with $(i,j,k)^{\rm th}$ entry equal to $\x_i \y_j \z_k$. 
Let $\X \in \R^{n_X\times m}$, $\Y\in \R^{n_Y\times m}$ and $\mathbf{Z}\in\R^{n_Z\times m}$. We generalize the notation of tensor product to matrices by defining $\X \otimes \Y \otimes \mathbf{Z}= \sum_{l=1}^{m} (\X)_l \otimes (\Y)_l \otimes (\mathbf{Z})_l$, which is an $n_X\times n_Y \times n_Z$ tensor with $(i,j,k)^{\rm th}$ entry equal to $\sum_{l=1}^{m} \X_{il} \Y_{jl} \mathbf{Z}_{kl}$. 

Let $X$ be a random variable taking value in $[m]$, we denote by $\Pr(X=\cdot)$ an $m$-dimensional vector whose $i^{\rm th}$ entry is $\Pr(X=i)$.

\newpage

\section{Proof of Hardness Results}
\label{app:hardness}
{
\newcommand{\ahstar}{a_h^{\star}}

The hard examples constructed below are variants of the ones used in \cite{krishnamurthy2016pac}.

\LowerBoundOvercomplete*
\begin{proof}
	Consider the following $H$-step nonstationary POMDP: 
	\begin{enumerate}
		\item STATE There are four states: two good states $g_{1}$ and $g_{2}$ and two bad states $b_{1}$ and $b_{2}$. The initial state is picked  uniformly at random.
		\item OBSERVATION There are only two different observations $u_1$ and $u_2$. At step $h\in[H-1]$, we always observe $u_1$ at $g_{1}$ and $b_{1}$, and  observe $u_2$ at $g_{2}$ and $b_{2}$. At step $H$, we always observe $u_1$ at good states and  $u_2$ at bad states. It's direct to verify $\sigma_{\min}(\O_h)=1$ for all $h\in[H]$.
		\item REWARD There is no reward at the fist $H-1$ steps (i.e. $r_h=0$ for all $h\in[H-1]$). At step $H$, we receive reward $1$ if we observe $u_1$ and no reward otherwise (i.e. $r_H(o)=\one(o=u_1)$).
		\item TRANSITION There is one good action $\ahstar$ and $A-1$ bad actions for each $h\in[H-1]$. 
		At step $h\in[H-1]$, suppose we are at a good state ($g_{1}$ or $g_{2}$), then we will transfer to $g_{1}$ or $g_{2}$  uniformly at random if we take $\ahstar$ and otherwise transfer to $b_{1}$ or $b_{2}$  uniformly at random. In contrast, if we are at a bad state ($b_{1}$ or $b_{2}$), we will always transfer to $b_{1}$ or $b_{2}$  uniformly at random no matter what action we take. Note that two good (bad) states are equivalent in terms of transition.
	\end{enumerate}
	We have the following key observations:
	\begin{enumerate}
		\item Once we are at bad states, we always stay at bad states.
		\item We have 
		\begin{align*}
			&\Pr(o_{1:H-1}=z\mid a_{1:H-1},o_H)=\frac{1}{2^{H-1}} 
			\\ &\mbox{for any } z\in \{u_1,u_2\}^{H-1} \mbox{ and } (a_{1:H-1},o_H)\in [A]^{H-1}\times\{u_1,u_2\}
		\end{align*}
		Therefore, the observations at the first $H-1$ steps provide no information about the underlying transition. The only useful information is the last observation $o_H$ which tells us whether we end in good states or not.
		\item The optimal policy is unique and is to execute the good action sequence $(a_1^\star,\ldots,a_{H-1}^\star)$ regardless of the obervations.
	\end{enumerate}
	Based on the observations above, this is equivalent to a multi-arm bandits problem with $A^{H-1}$ arms. Therefore, we cannot do better than Brute-force search, which has sample complexity at least $\Omega(A^{H-1})$.
\end{proof}

\LowerBoundUndercomplet*
\begin{proof}
	We continue to use the POMDP constructed in Proposition~\ref{prop:hardness1} and slightly modify it by splitting $u_2$ into another $4$ different observations $\{q_1,q_2,q_3,q_4\}$, so in the new POMDP ($O=5> S=4$), we will observe a $q_i$ picked  uniformly at random from $\{q_1,q_2,q_3,q_4\}$ when we are 'supposed' to observe $u_2$.
	 It's easy to see the modification does not change its hardness.
	\end{proof}

}

\section{Analysis of OMM-UCB} \label{app:analysis}

	Throughout the proof, we use $\tau_h$ to denote a length-$h$ trajectory: $[a_h,o_h,\ldots,a_1,o_1]\in(\cA\times\cO)^h$. Note that this definition is \emph{different} from the one used in Section \ref{def:tau}, where $\tau_h = [o_h,\ldots,a_1,o_1]\in\cO\times (\cA\times\cO)^{h-1}$ does not include the action $a_h$ at $h^{\rm th}$ step.
	Besides, we define 
	$$\Gamma(\pi,h) = \{\tau_h = (a_h,o_h, \ldots, a_1, o_1) ~|~ \pi(a_{h}, \ldots, a_1 | o_h, \ldots, o_1) = 1\},$$
	 which is also \emph{different} from the definition in Section \ref{def:tau} where $a_h$ is not included.
	
	Please refer to Appendix \ref{appendix-notation} for definitions of frequently used notations.

\subsection{Bounding the error in belief states}
In this subsection, we will bound the error 
in (unnormalized) belief states, i.e., $\b(\tau_h;\theta) -\b(\tau_h;\thetahat)$ by the error in  operators reweighed by the probability distribution of visited states.

We start by proving the following lemma that helps us decompose the error in belief states inductively.

\begin{lemma}\label{lemma:belief-induction}
Given a deterministic policy $\pi$ and two set of POMDP parameters $\thetahat = (\Ohat,\That,\hat{\mu}_1)$ and $\theta = (\O,\T,{\mu}_1)$,
for all $h\ge1$ and $\X\in\{\I_O,\hat{\O}_{h+1}^\dagger\}$, we have
\begin{align*}
\sum_{\tau_h\in \Gamma(\pi,h)} \left\| \X\left(\b(\tau_h;\theta) -\b(\tau_h;\thetahat)\right)\right\|_1
\le 
&\sum_{\tau_{h-1}\in \Gamma(\pi,h-1)}
\left\|\hat{\O}_h^\dagger\left(  \b(\tau_{h-1};\theta)-  \b(\tau_{h-1};\thetahat)\right)\right\|_1 \\
 + \sum_{\tau_h\in \Gamma(\pi,h)} &\left\| \X\left(\B_h(a_h,o_h;\thetahat) -\B_h(a_h,o_h;\theta)\right)\b(\tau_{h-1};\theta)\right\|_1.
\end{align*}
\end{lemma}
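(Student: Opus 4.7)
The plan is to prove this lemma by a single-step error decomposition combined with the $\ell_1\to\ell_1$ non-expansiveness of the OOM building blocks when summed over observations. First, using the recursion $\b(\tau_h;\theta) = \B_h(a_h,o_h;\theta)\b(\tau_{h-1};\theta)$, I add and subtract the cross term $\B_h(a_h,o_h;\thetahat)\b(\tau_{h-1};\theta)$ to write
\begin{equation*}
\b(\tau_h;\theta) - \b(\tau_h;\thetahat) = \bigl(\B_h(a_h,o_h;\theta) - \B_h(a_h,o_h;\thetahat)\bigr)\b(\tau_{h-1};\theta) + \B_h(a_h,o_h;\thetahat)\bigl(\b(\tau_{h-1};\theta) - \b(\tau_{h-1};\thetahat)\bigr).
\end{equation*}
Multiplying by $\X$, taking the $\ell_1$ norm, applying the triangle inequality, and summing over $\tau_h\in\Gamma(\pi,h)$ immediately produces the operator-error term that matches the second sum in the lemma. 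What remains is to bound $\sum_{\tau_h\in\Gamma(\pi,h)}\|\X\B_h(a_h,o_h;\thetahat)(\b(\tau_{h-1};\theta)-\b(\tau_{h-1};\thetahat))\|_1$ by $\sum_{\tau_{h-1}\in\Gamma(\pi,h-1)}\|\Ohat_h^\dagger(\b(\tau_{h-1};\theta)-\b(\tau_{h-1};\thetahat))\|_1$.

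Next, because $\pi$ is deterministic, every $\tau_h\in\Gamma(\pi,h)$ factors uniquely as $(\tau_{h-1},o_h)$ with $\tau_{h-1}\in\Gamma(\pi,h-1)$ and $a_h$ determined by the history through $\pi$. So the target reduces to showing, for each fixed $\tau_{h-1}$ and $\u := \b(\tau_{h-1};\theta)-\b(\tau_{h-1};\thetahat)$,
\begin{equation*}
\sum_{o_h\in\cO}\bigl\|\X\B_h(a_h,o_h;\thetahat)\u\bigr\|_1 \le \bigl\|\Ohat_h^\dagger \u\bigr\|_1.
\end{equation*}
I substitute the OOM expression $\B_h(a_h,o_h;\thetahat) = \Ohat_{h+1}\That_h(a_h)\diag(\Ohat_h(o_h|\cdot))\Ohat_h^\dagger$ and peel off the factors from the left. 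The factor $\X\Ohat_{h+1}$ is $\ell_1\to\ell_1$ non-expansive in both admissible cases: if $\X=\I_O$ it is the column-stochastic emission matrix $\Ohat_{h+1}$, and if $\X=\Ohat_{h+1}^\dagger$ then $\Ohat_{h+1}^\dagger\Ohat_{h+1}=\I_S$ by the full column rank of $\Ohat_{h+1}$. The factor $\That_h(a_h)$ is column-stochastic, hence also non-expansive in $\ell_1$. Finally, using $\sum_{o_h}\Ohat_h(o_h|s) = 1$ for every $s$, one has $\sum_{o_h}\|\diag(\Ohat_h(o_h|\cdot))\v\|_1 = \|\v\|_1$ for any $\v$; setting $\v = \Ohat_h^\dagger\u$ yields exactly the claimed bound.

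The main subtlety, such as it is, is to arrange the telescoping so that exactly $\Ohat_h^\dagger$ (and not $\Ohat_h^\dagger\Ohat_h$ or $\I_O$) appears in front of the belief-state difference on the right-hand side, so that the lemma can be iterated cleanly in the global analysis. This works because $\Ohat_h^\dagger$ is precisely the rightmost factor of the OOM representation of $\B_h(a_h,o_h;\thetahat)$, while the remaining factors combine with the sum over $o_h$ to contract to the identity in $\ell_1\to\ell_1$ norm. The two allowed choices of $\X$ are tailored to this chaining: $\X=\I_O$ is what is applied at the outermost step of the global recursion, while $\X=\Ohat_{h+1}^\dagger$ is exactly the left-factor that appears when the lemma is applied recursively to step $h-1$.
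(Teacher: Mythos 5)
Your proposal is correct and follows essentially the same route as the paper's proof: the same one-step decomposition into an operator-error term plus a propagated term, followed by substituting the OOM factorization of $\B_h(a_h,o_h;\thetahat)$ and using column-stochasticity of $\That_h(a_h)$ and $\Ohat_{h+1}$ (together with $\Ohat_{h+1}^\dagger\Ohat_{h+1}=\I_S$ and $\sum_{o_h}\Ohat_h(o_h|s)=1$) to contract the propagated term to $\|\Ohat_h^\dagger(\b(\tau_{h-1};\theta)-\b(\tau_{h-1};\thetahat))\|_1$. The only cosmetic difference is that you peel the factors one at a time while the paper bounds them jointly via a column-wise $\ell_1$ estimate; both hinge on the same facts, including the deterministic-policy bijection between $\Gamma(\pi,h)$ and $\Gamma(\pi,h-1)\times\cO$.
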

\begin{proof}
By the definition of $\b(\tau_h;\theta)$ and $\b(\tau_h;\thetahat)$, 
\begin{align*}
&\sum_{\tau_h\in \Gamma(\pi,h)} \| \X\left(\b(\tau_h;\theta) -\b(\tau_h;\thetahat)\right)\|_1\\
 =   &\sum_{\tau_h\in \Gamma(\pi,h)}\| \X\left(\B_h(a_h,o_h;\theta)\b(\tau_{h-1};\theta)-\B_h(a_h,o_h;\thetahat)\b(\tau_{h-1};\thetahat)\right)\|_1  \\
    \le & 
   \sum_{\tau_h\in \Gamma(\pi,h)} \| \X\B_h(a_h,o_h;\thetahat)\left(\b(\tau_{h-1};\theta)-\b(\tau_{h-1};\thetahat)\right)\|_1 \\
    + &\sum_{\tau_h\in \Gamma(\pi,h)}\| \X\left(\B_h(a_h,o_h;\thetahat) -\B_h(a_h,o_h;\theta)\right)\b(\tau_{h-1};\theta)\|_1.
\end{align*}
The first term can be bounded as following,
\begin{align*}
    &\sum_{\tau_h\in \Gamma(\pi,h)} 
    \| \X \B_h(a_h,o_h;\thetahat)(  \b(\tau_{h-1};\theta)-  \b(\tau_{h-1};\thetahat))\|_1 \\
 = & 
    \sum_{\tau_h\in \Gamma(\pi,h)}
    \|\X \hat{\O}_{h+1} \That_h(a_{h})\diag(\hat{\O}_h(o_h\mid \cdot)) \hat{\O}_h^\dagger\left(  \b(\tau_{h-1};\theta)-  \b(\tau_{h-1};\thetahat)\right)\|_1 \\
\le & \sum_{\tau_h\in \Gamma(\pi,h)} \sum_i 
\left\| \left(\X \hat{\O}_{h+1}\That_h(a_{h})\diag(\hat{\O}_{h}(o_h\mid \cdot))\right)_i\right\|_1 \left|\left(\hat{\O}_h^\dagger\left(  \b(\tau_{h-1};\theta)-  \b(\tau_{h-1};\thetahat)\right)\right)_i\right|\\
=& \sum_{\tau_h\in \Gamma(\pi,h)}\sum_i 
\left\| \left(\X \hat{\O}_{h+1}\That_h(a_{h})\right)_i\right\|_1 \hat{\O}_h(o_h \mid i) \left|\left(\hat{\O}_h^\dagger\left(  \b(\tau_{h-1};\theta)-  \b(\tau_{h-1};\thetahat)\right)\right)_i\right|\\
= & \sum_{\tau_h\in \Gamma(\pi,h)}\sum_i 
\hat{\O}_h(o_h \mid i) \left|\left(\hat{\O}_h^\dagger\left(  \b(\tau_{h-1};\theta)-  \b(\tau_{h-1};\thetahat)\right)\right)_i\right|,
\end{align*}
where the inequality is by triangle inequality, 
and the last identity follows from 
$\That_h(a_{h})$ (when $\X=\hat{\O}_{h+1}^\dagger$) and $\hat{\O}_{h+1} \That_h(a_{h})$ (when $\X=\I_O$) having columns with
$\ell_1$-norm equal to $1$.

As $\pi$ is deterministic, $a_h$ is unique given $\tau_{h-1}$ and $o_h$. Therefore,
\begin{align*}
    & \sum_{\tau_h\in \Gamma(\pi,h)}\sum_i 
\hat{\O}_h(o_h \mid i) \left|\left(\hat{\O}_h^\dagger\left(  \b(\tau_{h-1};\theta)-  \b(\tau_{h-1};\thetahat)\right)\right)_i\right|\\
= & \sum_{\tau_{h-1}\in \Gamma(\pi,h-1)} \sum_{o_h}\sum_i
\hat{\O}_h(o_h \mid i) \left|\left(\hat{\O}_h^\dagger\left(  \b(\tau_{h-1};\theta)-  \b(\tau_{h-1};\thetahat)\right)\right)_i\right|\\
= & \sum_{\tau_{h-1}\in \Gamma(\pi,h-1)} \sum_i\sum_{o_h}
\hat{\O}_h(o_h \mid i) \left|\left(\hat{\O}_h^\dagger\left(  \b(\tau_{h-1};\theta)-  \b(\tau_{h-1};\thetahat)\right)\right)_i\right|\\
= & \sum_{\tau_{h-1}\in \Gamma(\pi,h-1)} \sum_i\left|\left(\hat{\O}_h^\dagger\left(  \b(\tau_{h-1};\theta)-  \b(\tau_{h-1};\thetahat)\right)\right)_i\right|\\
= & \sum_{\tau_{h-1}\in \Gamma(\pi,h-1)} 
\left\|\hat{\O}_h^\dagger\left(  \b(\tau_{h-1};\theta)-  \b(\tau_{h-1};\thetahat)\right)\right\|_1,
\end{align*}
which completes the proof.
\end{proof}

By applying Lemma \ref{lemma:belief-induction}
inductively, we can bound the error in belief states 
by the projection of errors in operators on preceding belief states.

\begin{lemma}\label{lem:regret-decomp}
Given a deterministic policy $\pi$ and two sets of undercomplete POMDP parameters $\theta = (\O,\T,{\mu}_1)$ and $\thetahat = (\Ohat,\That,\hat{\mu}_1)$ with $\sigma_{\min}(\Ohat)\ge\alpha$,
for all $h\ge1$, we have
\begin{align*}
&      \sum_{\tau_h\in \Gamma(\pi,h)} \left\| \b(\tau_h;\theta) -\b(\tau_h;\thetahat)\right\|_1 \\
    \le & \frac{\sqrt{S} }{\alpha}\sum_{j=1}^{h} \sum_{\tau_{j} \in \Gamma(\pi,j)} 
     \left\| \left(\B_j(a_j,o_j;\thetahat) -\B_j(a_j,o_j;\theta)\right)\b(\tau_{j-1};\theta)\right\|_1
    + \frac{\sqrt{S} }{\alpha} \left\| \b_0(\theta)- \b_0(\thetahat) \right\|_1.
\end{align*}
\end{lemma}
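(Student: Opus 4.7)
The plan is to apply Lemma \ref{lemma:belief-induction} iteratively, peeling off one time step at a time, with a carefully chosen sequence of $\X$'s so that each resulting summand carries at most a single $\hat{\O}^\dagger$ prefix rather than a compounding stack of them.

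First, I would apply Lemma \ref{lemma:belief-induction} at level $h$ with $\X = \I_O$. This replaces $\sum_{\tau_h\in\Gamma(\pi,h)}\|\b(\tau_h;\theta)-\b(\tau_h;\thetahat)\|_1$ by (i) the ``current-step'' operator-error term $\sum_{\tau_h}\|(\B_h(a_h,o_h;\thetahat)-\B_h(a_h,o_h;\theta))\b(\tau_{h-1};\theta)\|_1$ with no prefix, plus (ii) a residual $\sum_{\tau_{h-1}}\|\hat{\O}_h^\dagger(\b(\tau_{h-1};\theta)-\b(\tau_{h-1};\thetahat))\|_1$ carrying a $\hat{\O}_h^\dagger$ prefix.

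Next, for $j=h-1,h-2,\ldots,1$, I would apply Lemma \ref{lemma:belief-induction} at level $j$ with $\X=\hat{\O}_{j+1}^\dagger$ (which is allowed since the lemma permits $\X\in\{\I_O,\hat{\O}_{j+1}^\dagger\}$). This choice matches the prefix of the residual produced by the previous iteration, and each such step peels off one more operator-error term
\begin{equation*}
\sum_{\tau_j\in\Gamma(\pi,j)}\bigl\|\hat{\O}_{j+1}^\dagger\bigl(\B_j(a_j,o_j;\thetahat)-\B_j(a_j,o_j;\theta)\bigr)\b(\tau_{j-1};\theta)\bigr\|_1
\end{equation*}
while replacing the residual's prefix $\hat{\O}_{j+1}^\dagger$ by $\hat{\O}_j^\dagger$. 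After all $h$ applications, the leftover residual is exactly $\|\hat{\O}_1^\dagger(\b_0(\theta)-\b_0(\thetahat))\|_1$.

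Finally, I would use the uniform bound $\|\hat{\O}_\ell^\dagger \v\|_1 \le \sqrt{S}\|\hat{\O}_\ell^\dagger\v\|_2 \le (\sqrt{S}/\alpha)\|\v\|_2 \le (\sqrt{S}/\alpha)\|\v\|_1$, valid for any $\v\in\R^O$ since $\hat{\O}_\ell^\dagger\v\in\R^S$ and $\sigma_{\min}(\hat{\O}_\ell)\ge\alpha$. Applying this to every $\hat{\O}^\dagger$-prefixed summand (for $j=1,\ldots,h-1$ and for the $\b_0$ residual) converts each into the desired pure $\ell_1$-norm of operator error. For the single ``no-prefix'' term at $j=h$, I would use that column-stochasticity of $\O$ forces $\sigma_{\max}(\O)\le 1$, hence $\alpha\le 1$ and $\sqrt{S}/\alpha\ge 1$, so this term is absorbed into the same $\sqrt{S}/\alpha$-scaled bound. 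Summing everything yields the lemma.

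The main obstacle is purely the bookkeeping of $\X$: a naive iteration with $\X=\I_O$ at every step would let $\hat{\O}^\dagger$'s accumulate inside the residual and produce a $(\sqrt{S}/\alpha)^h$ exponential blow-up. The crucial observation is that Lemma \ref{lemma:belief-induction} is flexible enough to let us match the outgoing prefix with the incoming $\X$ at every descent step, so each $\hat{\O}^\dagger$ appears at most once per summand.
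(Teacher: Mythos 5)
Your proposal is correct and follows essentially the same route as the paper: apply Lemma \ref{lemma:belief-induction} once with $\X=\I_O$ at level $h$ and then telescope with $\X=\hat{\O}_{j+1}^\dagger$ down to the $\b_0$ residual, finally bounding each $\hat{\O}^\dagger$-prefixed term via $\|\hat{\O}_\ell^\dagger\|_{1\to 1}\le\sqrt{S}/\alpha$. Your explicit remark that the unprefixed $j=h$ term is absorbed because $\alpha\le 1$ is a detail the paper leaves implicit, but the argument is otherwise identical.
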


\begin{proof}
	Invoking Lemma \ref{lemma:belief-induction} with $\X=\Ohat_{j+1}\sdinv$, we have
\begin{align}\label{eqApr3:1}
\sum_{\tau_j\in \Gamma(\pi,j)} \| &\Ohat_{j+1}\sdinv\left(\b(\tau_j;\theta) -\b(\tau_{j};\thetahat)\right)\|_1 \le 
 \sum_{\tau_{j-1}\in \Gamma(\pi,j-1)}
\left\|\Ohat_{j}^\dagger\left(  \b(\tau_{j-1};\theta)-  \b(\tau_{j-1};\thetahat)\right)\right\|_1 \nn \\
 &+ \sum_{\tau_j\in \Gamma(\pi,j)} \| \Ohat_{j+1}\sdinv\left(\B_j(a_j,o_j;\thetahat) -\B_j(a_j,o_j;\theta)\right)\b(\tau_{j-1};\theta)\|_1.
\end{align}
Summing \eqref{eqApr3:1} over $j=1,\ldots,h-1$, we obtain
\begin{align}
\label{apr6:0}
      &\sum_{\tau_{h-1}\in \Gamma(\pi,h-1)} \| \Ohat_h \sdinv\left(\b(\tau_{h-1};\theta) -\b(\tau_{h-1};\thetahat) \right)\|_1 \\
    \hspace{0ex} \le& \sum_{j=1}^{h-1} \sum_{\tau_{j} \in \Gamma(\pi,j)} 
    \left\| \Ohat_{j+1}\sdinv\left(\B_j(a_j,o_j;\thetahat) -\B_j(a_j,o_j;\theta)\right)\b(\tau_{j-1};\theta)\right\|_1\nn 
    +\left\| \Ohat_1\sdinv \left(\b_0(\theta)- \b_0(\thetahat) \right)\right\|_1.
\end{align}
Again, invoking Lemma \ref{lemma:belief-induction} with $\X=\I_O$  gives 
\begin{align}\label{apr6:1}
      \sum_{\tau_h\in \Gamma(\pi,h)} \| \b(\tau_h;\theta) -\b(\tau_h;\thetahat)\|_1 \nn 
    \le &\sum_{\tau_{h-1} \in \Gamma(\pi,h-1)} \| \Ohat_h\sdinv ( \b(\tau_{h-1};\theta) - \b(\tau_{h-1};\thetahat))\|_1\\
    + \sum_{\tau_{h}\in \Gamma(\pi,h)} &\| \left(\B_h(a_h,o_h;\thetahat) -\B_h(a_h,o_h;\theta)\right)\b(\tau_{h-1};\theta)\|_1.
\end{align}
Plugging \eqref{apr6:0} into \eqref{apr6:1},
and using the fact that $\| \Ohat_h \sdinv\|_{1\rightarrow1} \le \sqrt{S}\| \Ohat_h\sdinv\|_2\le \frac{\sqrt{S}}{\alpha}$ complete the proof.
\end{proof}

The following lemma bounds the projection of any vector on belief states by its projection on the product of the observation matrix and the transition matrix, reweighed by the visitation probability of states.

\begin{lemma}\label{lem:belief-pairdistribution}
For any deterministic  policy $\pi$,
\emph{fixed} $a_{h+1}\in\cA$, $\u \in \mathbb{R}^{O}$,
and $h\ge0$,  we have
\begin{align*}
     \sum_{o_{h+1}\in\cO} \sum_{\tau_h\in \Gamma(\pi,h)} 
     \left | \u \trans  \b([a_{h+1},o_{h+1},\tau_h];\theta) \right|
     \le \sum_{s=1}^{S}
    | \u \trans (\O_{h+2} \T_{h+1}(a_{h+1}))_s | \Pr^\pi_\theta(s_{h+1}=s).
\end{align*}
\end{lemma}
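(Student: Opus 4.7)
The plan is to unfold $\b([a_{h+1},o_{h+1},\tau_h];\theta)$ using the operator definition, identify a nonnegative vector factorization of the belief states, pull the absolute value inside via nonnegativity, and then marginalize first over $o_{h+1}$ and then over $\tau_h$. Concretely, I first write
\[
\b([a_{h+1},o_{h+1},\tau_h];\theta) = \B_{h+1}(a_{h+1},o_{h+1};\theta)\,\b(\tau_h;\theta)
= \O_{h+2}\,\T_{h+1}(a_{h+1})\,\diag(\O_{h+1}(o_{h+1}|\cdot))\,\O_{h+1}^\dagger\,\b(\tau_h;\theta).
\]

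The key structural observation is that in the undercomplete setting, $\b(\tau_h;\theta)$ admits the factorization $\b(\tau_h;\theta) = \O_{h+1}\,\bp_{h+1}(\tau_h;\theta)$ where $\bp_{h+1}(\tau_h;\theta)\in\R^S_{\ge 0}$ is the vector with entries $\Pr_\theta(s_{h+1}=s,o_1,\ldots,o_h\mid a_1,\ldots,a_h)$. This is proved by a straightforward induction on $h$ using $\O_{h}^\dagger\O_h = \I_S$ (valid since Assumption \ref{assump:POMDP} makes $\O_h$ full column-rank), and is essentially the content of the derivation of \eqref{eq:OOM_pb} in Appendix \ref{app:pomdp-oom}. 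Consequently $\O_{h+1}^\dagger\b(\tau_h;\theta) = \bp_{h+1}(\tau_h;\theta)$ is entrywise nonnegative, so
\[
\u^\top \b([a_{h+1},o_{h+1},\tau_h];\theta) = \sum_{s=1}^S \bigl[\u^\top(\O_{h+2}\T_{h+1}(a_{h+1}))_s\bigr]\,\O_{h+1}(o_{h+1}|s)\,\bp_{h+1}(\tau_h;\theta)_s,
\]
and taking absolute values, we can pull $|\cdot|$ inside the sum over $s$ since the factors $\O_{h+1}(o_{h+1}|s)$ and $\bp_{h+1}(\tau_h;\theta)_s$ are nonnegative.

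Next I marginalize. Summing over $o_{h+1}\in\cO$ eliminates $\O_{h+1}(o_{h+1}|s)$ via $\sum_{o_{h+1}}\O_{h+1}(o_{h+1}|s)=1$. Then summing over $\tau_h\in\Gamma(\pi,h)$ and swapping the order of summation, the inner sum becomes $\sum_{\tau_h\in\Gamma(\pi,h)} \bp_{h+1}(\tau_h;\theta)_s$. Since $\pi$ is deterministic, for each $\tau_h\in\Gamma(\pi,h)$ the policy's probability of executing the action sequence in $\tau_h$ given its observations is exactly $1$, so $\bp_{h+1}(\tau_h;\theta)_s = \Pr_\theta^\pi(\tau_h,\,s_{h+1}=s)$; and $\Gamma(\pi,h)$ enumerates precisely the length-$h$ trajectories reachable under $\pi$. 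Therefore $\sum_{\tau_h\in\Gamma(\pi,h)}\bp_{h+1}(\tau_h;\theta)_s = \Pr_\theta^\pi(s_{h+1}=s)$, which yields the claimed bound.

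The main conceptual obstacle is the belief-state factorization $\b(\tau_h;\theta) = \O_{h+1}\,\bp_{h+1}(\tau_h;\theta)$ with nonnegative $\bp_{h+1}$: it is what allows the absolute value to be pushed inside and, crucially, ties $\O_{h+1}^\dagger \b(\tau_h;\theta)$ to an honest (unnormalized) joint probability vector over latent states. Everything else is a clean application of linearity, the triangle inequality, and two marginalizations; no quantitative estimates are needed beyond the deterministic structure of $\pi$ and the column-stochasticity of $\O_{h+1}$.
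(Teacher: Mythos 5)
Your proof is correct and follows essentially the same route as the paper's: both hinge on the identity $\b([a_{h+1},o_{h+1},\tau_h];\theta)=\O_{h+2}\T_{h+1}(a_{h+1})\Pr^\pi_\theta(s_{h+1}=\cdot,[o_{h+1},\tau_h])$ (which the paper asserts ``by definition'' and you justify more explicitly via the nonnegative factorization $\b(\tau_h;\theta)=\O_{h+1}\bp_{h+1}(\tau_h;\theta)$), followed by the triangle inequality over latent states and marginalization over $o_{h+1}$ and $\tau_h$. The only cosmetic difference is that you marginalize the two variables sequentially while the paper collapses the joint probability in one step.
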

\begin{proof}
	
By definition, for any $[a_{h+1},o_{h+1},\tau_h]\in \cA 
\times \cO \times \Gamma(\pi,h)$,  
we have
$$
\b([a_{h+1},o_{h+1},\tau_h];\theta)=\O_{h+2}\T_{h+1}(a_{h+1}) \Pr^\pi_\theta(s_{h+1}=\cdot,[o_{h+1}, \tau_h]),
$$
where $\Pr^\pi_\theta(s_{h+1}=\cdot,[o_{h+1}, \tau_h])$ is an $s$-dimensional vector, whose $i^{\rm th}$ entry is equal to the probability of observing $[o_{h+1}, \tau_h]$ and reaching state $i$ at step $h+1$ when executing policy $\pi$ in $\pomdp(\theta)$.

Therefore,
\begin{align*}
    &\sum_{\tau_{h}\in \Gamma(\pi,h)} \sum_{o_{h+1}\in\cO} | \u \trans 
    \b([a_{h+1},o_{h+1},\tau_h];\theta)|\nn\\
    =& \sum_{\tau_{h}\in \Gamma(\pi,h)} \sum_{o_{h+1}\in\cO} | \u \trans 
    \O_{h+2}\T_{h+1}(a_{h+1}) \Pr^\pi_\theta(s_{h+1}=\cdot,[o_{h+1},\tau_h])|\nn\\
    \le & \sum_{\tau_{h}\in \Gamma(\pi,h)} \sum_{o_{h+1}\in\cO} \sum_{s=1}^{S}
    | \u \trans (\O_{h+2}\T_{h+1}(a_{h+1}))_s | \Pr^\pi_\theta(s_{h+1}=s,[o_{h+1},\tau_h])\nn\\
    = & \sum_{s=1}^{S}
    | \u \trans (\O_{h+2}\T_{h+1}(a_{h+1}))_s |\bigg( \sum_{\tau_{h}\in \Gamma(\pi,h)} \sum_{o_{h+1}\in\cO} \Pr^\pi_\theta(s_{h+1}=s,[o_{h+1},\tau_{h}])\bigg)\nn\\
    =& \sum_{s=1}^{S}
    | \u \trans (\O_{h+2}\T_{h+1}(a_{h+1}))_s | \Pr^\pi_\theta(s_{h+1}=s).\tag*\qedhere
\end{align*}
\end{proof}

Combining Lemma \ref{lem:regret-decomp} and Lemma 
\ref{lem:belief-pairdistribution}, we obtain the target bound.

\begin{lemma}\label{prop:subopt-bound}
Given a deterministic policy $\pi$ and two sets of undercomplete POMDP parameters $\theta = (\O,\T,{\mu}_1)$ and $\thetahat = (\Ohat,\That,\hat{\mu}_1)$ with $\sigma_{\min}(\Ohat)\ge\alpha$,
for all $h\ge1$, we have
\begin{align*}
&     \sum_{\tau_h\in \Gamma(\pi,h)} \| \b(\tau_h;\theta) -\b(\tau_h;\thetahat)\|_1 \\
   \le & \frac{\sqrt{S} }{\alpha} \left\| \b_0(\theta)- \b_0(\thetahat) \right\|_1  
    + \frac{\sqrt{S} }{\alpha}  \sum_{(a,o)\in  \cA\times \cO}
     \left\|  \left (\B_1(a,o;\thetahat) - 
     \B_1(a,o;\theta)\right)\b_0(\theta)\right\|_1\\
& +   \frac{\sqrt{S} }{\alpha} 
     \sum_{j=2}^{h}\sum_{(a,\at,o)\in\cA^2\times\cO}
     \sum_{s=1}^{S} \left\| \left(\B_j(a,o;\thetahat) -\B_j(a,o;\theta)\right)  (\O_{j} \T_{j-1}(\at))_s \right\|_1
     \Pr^\pi_\theta(s_{j-1}=s).
\end{align*}
\end{lemma}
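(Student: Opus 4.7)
The plan is to build on Lemma \ref{lem:regret-decomp}, which already reduces the LHS to $(\sqrt{S}/\alpha)\|\b_0(\theta)-\b_0(\thetahat)\|_1$ plus $(\sqrt{S}/\alpha) \sum_{j=1}^{h} \sum_{\tau_j \in \Gamma(\pi, j)} \|\mat{M}_j(a_j, o_j)\,\b(\tau_{j-1};\theta)\|_1$, where I abbreviate $\mat{M}_j(a,o) := \B_j(a,o;\thetahat) - \B_j(a,o;\theta)$. The $\b_0$ contribution matches the first term of the target bound on the nose, so no further work is needed there. What remains is to massage the inner sums, splitting according to $j = 1$ (which should produce the second term) and $j \ge 2$ (which should produce the third).

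The $j=1$ case is trivial: $\b(\tau_0;\theta) = \b_0(\theta)$ is independent of the trajectory, and $\Gamma(\pi,1) \subseteq \cA \times \cO$. Since the summands are nonnegative, enlarging the index set to all of $\cA \times \cO$ delivers the second term of the target immediately.

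For $j \ge 2$, the goal is to convert the sum over $\Gamma(\pi, j)$ into a sum over $(a,\at,o)$ and $s$, weighted by $\Pr^\pi_\theta(s_{j-1}=s)$. First I would expand $\|\mat{M}_j(a_j,o_j)\,\b(\tau_{j-1};\theta)\|_1 = \sum_{o' \in \cO}|\e_{o'}^{\top}\mat{M}_j(a_j,o_j)\,\b(\tau_{j-1};\theta)|$, then (i) enlarge the range of $(a_j,o_j)$ to all of $\cA \times \cO$ (valid because all summands are nonnegative) and (ii) decompose every $\tau_{j-1} \in \Gamma(\pi, j-1)$ as $[a_{j-1}, o_{j-1}, \tau_{j-2}]$ and split according to $a_{j-1}=\at \in \cA$, using the inclusion $\{\tau_{j-1} \in \Gamma(\pi, j-1) : a_{j-1}=\at\} \subseteq \{[\at, o_{j-1}, \tau_{j-2}] : o_{j-1} \in \cO,\; \tau_{j-2} \in \Gamma(\pi, j-2)\}$. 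For each fixed $(a,\at,o,o')$, I would apply Lemma \ref{lem:belief-pairdistribution} with $h = j-2$, $a_{h+1}=\at$, and $\u = \mat{M}_j(a,o)^{\top}\e_{o'}$ to rewrite the inner $(o_{j-1}, \tau_{j-2})$-sum as $\sum_{s=1}^{S}|\e_{o'}^{\top}\mat{M}_j(a,o)\,(\O_j \T_{j-1}(\at))_s|\,\Pr^\pi_\theta(s_{j-1}=s)$. A final resummation over $o' \in \cO$ collapses the $O$ absolute values back into an $\ell_1$-norm and produces exactly the third term of the target.

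All the genuine analytic content sits in Lemmas \ref{lem:regret-decomp} and \ref{lem:belief-pairdistribution}; the only real obstacle is bookkeeping. In particular, one has to remember that $\Gamma(\pi,\cdot)$ in this appendix includes the step-$j$ action (unlike in Section \ref{def:tau}), use determinism of $\pi$ to justify the set-inclusion in the decomposition step so that nothing is double-counted when enlarging ranges, and handle the boundary $j = 2$ where $\tau_{j-2}$ is the empty trajectory and $\Pr^\pi_\theta(s_1=\cdot)$ collapses to $\mu_1$.
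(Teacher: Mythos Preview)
Your proposal is correct and follows essentially the same route as the paper's proof: start from Lemma~\ref{lem:regret-decomp}, handle $j=1$ by the inclusion $\Gamma(\pi,1)\subseteq\cA\times\cO$, and for $j\ge 2$ enlarge $\Gamma(\pi,j)\subseteq\Gamma(\pi,j-2)\times(\cO\times\cA)^2$ and apply Lemma~\ref{lem:belief-pairdistribution} for each fixed $(a_j,a_{j-1},o_j)$. The only cosmetic difference is that you make the coordinate expansion $\|\cdot\|_1=\sum_{o'}|\e_{o'}^\top\cdot|$ explicit before invoking Lemma~\ref{lem:belief-pairdistribution}, whereas the paper leaves that step implicit.
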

\begin{proof}
By Lemma \ref{lem:regret-decomp}, 
\begin{align}\label{step0}
      &\sum_{\tau_h\in \Gamma(\pi,h)} \| \b(\tau_h;\theta) -\b(\tau_h;\thetahat)\|_1 \nn \\
    \le & \frac{\sqrt{S} }{\alpha}\sum_{j=2}^{h} \sum_{\tau_{j} \in \Gamma(\pi,j)} 
     \left\| \left(\B_j(a_j,o_j;\thetahat) -\B_j(a_j,o_j;\theta)\right)\b(\tau_{j-1};\theta)\right\|_1\nn\\
    &+ \frac{\sqrt{S} }{\alpha} \sum_{\tau_1 \in  \Gamma(\pi,1)} \left\|  \left (\B_1(a_1,o_1;\thetahat) - 
     \B_1(a_1,o_1;\thetahat)\right)\b_0(\theta) \right\|_1 +  \frac{\sqrt{S} }{\alpha} \left\|\b_0(\theta)- \b_0(\thetahat)\right\|_1.
\end{align}

{\bf Bounding the first term:}
note that $\Gamma(\pi,j) \subseteq \Gamma(\pi,j-2)\times (\cO\times \cA)^2$, so we have
\begin{align}\label{step1eq1}
     & \sum_{\tau_{j}\in \Gamma(\pi,j)}\| \left(\B_j(a_j,o_j;\thetahat) -\B_j(a_j,o_j;\theta)\right)\b(\tau_{j-1};\theta)\|_1 \nn\\
\le & \sum_{\tau_{j-2}\in \Gamma(\pi,j-2)} \sum_{o_{j-1}\in \cO} \sum_{a_{j-1}\in\cA} \sum_{o_j\in\cO} \sum_{a_j\in\cA}\nn\\
& \|  \left(\B_j(a_j,o_j;\thetahat) -\B_j(a_j,o_j;\theta)\right)\b([a_{j-1},o_{j-1},\tau_{j-2}];\theta)\|_1 \nn\\
=& \sum_{(a_j,a_{j-1},o_{j})\in\cA^2\times\cO}\nn\\&\underbrace{\sum_{\tau_{j-2}\in \Gamma(\pi,j-2)} \sum_{o_{j-1}\in \cO}  \|  \left(\B_j(a_j,o_j;\thetahat) -\B_j(a_j,o_j;\theta)\right)\b([a_{j-1},o_{j-1},\tau_{j-2}];\theta)\|_1}_{\textstyle (\diamond)}.
\end{align}
We can bound $(\diamond)$ by Lemma \ref{lem:belief-pairdistribution} and obtain,
\begin{align}\label{step1eq2}
&  \sum_{\tau_{j}\in \Gamma(\pi,j)}\| \left(\B_j(a_j,o_j;\thetahat) -\B_j(a_j,o_j;\theta)\right)\b(\tau_{j-1};\theta)\|_1 \nn\\
     \le &   \sum_{(a_j,a_{j-1},o_{j})\in\cA^2\times\cO}
     \sum_{s=1}^{S} \| \left(\B_j(a_j,o_j;\thetahat) -\B_j(a_j,o_j;\theta)\right)  (\O_{j} \T_{j-1}(a_{j-1}))_s \|_1
     \Pr^\pi_\theta(s_{j-1}=s) \nn \\
     = &  \sum_{(a,\at,o)\in\cA^2\times\cO}
     \sum_{s=1}^{S} \| \left(\B_j(a,o;\thetahat) -\B_j(a,o;\theta)\right) (\O_{j} \T_{j-1}(\at))_s \|_1
     \Pr^\pi_\theta(s_{j-1}=s),
     \end{align}
     where the identity only changes the notations $(a_j,a_{j-1},o_{j})\rightarrow(a,\at,o)$ to make the expression cleaner. 

{\bf Bounding the second term:}
note that $\Gamma(\pi,1) \subseteq  \cO \times \cA$, we have
\begin{align}\label{step2eq1}
&  \sum_{\tau_1 \in  \Gamma(\pi,1)} \left\|  \left (\B_1(a_1,o_1;\theta) - 
     \B_1(a_1,o_1;\thetahat)\right)\b_0 (\theta) \right\|_1 \nn\\
     \le   & \sum_{(a,o)\in  \cA\times \cO}
     \left\|  \left (\B_1(a,o;\theta) - 
     \B_1(a,o;\thetahat)\right)\b_0(\theta) \right\|_1. 
\end{align}

Plugging \eqref{step1eq2} and \eqref{step2eq1} into \eqref{step0} completes the proof.
\end{proof}

\subsection{A hammer for studying confidence sets}
In this subsection, we develop a martingale concentration result, which forms the basis of analyzing confidence sets. 

We start by giving the following basic fact about POMDP. The proof is just some basic algebraic calculation so we omit it here.
\begin{restatable}{fact}{FactTensor}
\label{fact:tensor}
In $\pomdp(\theta)$, suppose $s_{h-1}$ is sampled from 
$\mu_{h-1}$, fix $a_{h-1} \equiv \tilde{a}$, and $a_h \equiv a$.
Then the joint distribution of $(o_{h+1},o_{h},o_{h-1})$ is 
$$
\Pr(o_{h+1}=\cdot,o_{h}=\cdot,o_{h-1}=\cdot) = (\O_{h+1}\T_{h}(a))\otimes \O_h \otimes (\O_{h-1} \diag(\mu_{h-1})\T_{h-1}(\at)\trans).
$$
By slicing the tensor, we can further obtain 
$$
\left\{
\begin{aligned}
\Pr(o_{h-1}=\cdot) & = \O_{h-1} \mu_{h-1},\\
\Pr(o_{h}=\cdot,o_{h-1}=\cdot) &= \O_{h} \T_{h-1}(\at) \diag(\mu_{h-1})\O_{h-1}\trans,\\
\Pr(o_{h+1}=\cdot,o_{h}=o,o_{h-1}=\cdot)  & = \O_{h+1} \T_{h}(a) \diag(\O_{h}(o\mid\cdot)) \T_{h-1}(\at) \diag(\mu_{h-1})\O_{h-1}\trans.
\end{aligned}
\right.
$$
\end{restatable}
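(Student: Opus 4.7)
The plan is to start from the Markov factorization of the joint distribution over $(s_{h-1}, s_h, s_{h+1}, o_{h-1}, o_h, o_{h+1})$ and reorganize the sum so that $s_h$ becomes the ``pivot'' index that couples the three observations. Concretely, for fixed actions $a_{h-1} = \at$ and $a_h = a$, I would first write
\begin{equation*}
\Pr(o_{h+1}, o_h, o_{h-1}) = \sum_{s_{h-1}, s_h, s_{h+1}} \mu_{h-1}(s_{h-1})\,\O_{h-1}(o_{h-1}|s_{h-1})\,\T_{h-1}(s_h|s_{h-1},\at)\,\O_h(o_h|s_h)\,\T_h(s_{h+1}|s_h,a)\,\O_{h+1}(o_{h+1}|s_{h+1}),
\end{equation*}
using only the POMDP transition/emission definitions.

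Next, I would collect terms by $s_h$, observing that (i) the sum over $s_{h+1}$ of $\T_h(s_{h+1}|s_h,a)\O_{h+1}(o_{h+1}|s_{h+1})$ is exactly the $(o_{h+1},s_h)$ entry of $\O_{h+1}\T_h(a)$, (ii) the factor $\O_h(o_h|s_h)$ is the $(o_h,s_h)$ entry of $\O_h$, and (iii) the sum over $s_{h-1}$ of $\mu_{h-1}(s_{h-1})\O_{h-1}(o_{h-1}|s_{h-1})\T_{h-1}(s_h|s_{h-1},\at)$ is the $(o_{h-1},s_h)$ entry of $\O_{h-1}\diag(\mu_{h-1})\T_{h-1}(\at)\trans$. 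Substituting these three identifications gives
\begin{equation*}
\Pr(o_{h+1}=\cdot,o_h=\cdot,o_{h-1}=\cdot) = \sum_{s_h=1}^{S} (\O_{h+1}\T_h(a))_{s_h} \otimes (\O_h)_{s_h} \otimes (\O_{h-1}\diag(\mu_{h-1})\T_{h-1}(\at)\trans)_{s_h},
\end{equation*}
which is precisely the claimed tensor product in the paper's notation.

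For the three slicing identities, I would simply marginalize or specialize the tensor decomposition. For $\Pr(o_{h-1}=\cdot)$, summing the first two modes uses $\mathbf{1}\trans(\O_{h+1}\T_h(a)) = \mathbf{1}\trans$ and $\mathbf{1}\trans \O_h = \mathbf{1}\trans$ (columns are distributions), leaving $\O_{h-1}\diag(\mu_{h-1})\T_{h-1}(\at)\trans\mathbf{1} = \O_{h-1}\mu_{h-1}$ since $\T_{h-1}(\at)\trans\mathbf{1} = \mathbf{1}$. For $\Pr(o_h=\cdot,o_{h-1}=\cdot)$, marginalizing only the first mode yields the outer-product sum $\sum_{s_h}(\O_h)_{s_h}(\O_{h-1}\diag(\mu_{h-1})\T_{h-1}(\at)\trans)_{s_h}\trans = \O_h\T_{h-1}(\at)\diag(\mu_{h-1})\O_{h-1}\trans$. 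For $\Pr(o_{h+1}=\cdot,o_h=o,o_{h-1}=\cdot)$, fixing $o_h=o$ inserts the scalar $\O_h(o|s_h)$ into the $s_h$-sum, which can be absorbed as $\diag(\O_h(o|\cdot))$ between $\O_{h+1}\T_h(a)$ and $\T_{h-1}(\at)\diag(\mu_{h-1})\O_{h-1}\trans$, giving the stated formula.

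The proof is almost entirely routine bookkeeping; the only mild obstacle is verifying that the transposes and the placement of $\diag(\mu_{h-1})$ are consistent with the paper's convention that columns of $\O_h$ and $\T_h(a)$ are distributions (so that $\mathbf{1}\trans\T_h(a)=\mathbf{1}\trans$). Once that convention is fixed, each identification in the regrouping step is a direct matrix-entry computation, and the slicing statements follow by one-line specialization of the tensor identity.
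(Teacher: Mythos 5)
Your proof is correct, and it is exactly the ``basic algebraic calculation'' that the paper alludes to but omits: factor the joint law of $(s_{h-1},s_h,s_{h+1},o_{h-1},o_h,o_{h+1})$, pivot the sum on $s_h$, and read off each factor as a column of $\O_{h+1}\T_h(a)$, $\O_h$, and $\O_{h-1}\diag(\mu_{h-1})\T_{h-1}(\at)\trans$ respectively, with the slices following by marginalization using the column-stochasticity of $\O$ and $\T$. Nothing is missing; the index conventions $[\O_h]_{o,s}=\O_h(o|s)$ and $[\T_h(a)]_{s',s}=\T_h(s'|s,a)$ are handled consistently with the paper's definitions.
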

A simple implication of Fact \ref{fact:tensor} is that if we execute policy $\pi$ from step $1$ to step $h-2$, take $\at$ and $a$ at step $h-1$ and $h$ respectively, then the joint distribution of $(o_{h+1},o_{h},o_{h-1})$ is the same as above except for replacing $\mu_{h-1}$ with $\Pr^\pi_\theta (s_{h-1}=\cdot)$.

Suppose we are given a set of sequential data $\{(o_{h+1}^{(t)},o_{h}^{(t)},o_{h-1}^{(t)})\}_{t=1}^{N}$ 
generated from $\pomdp(\theta)$ in the following way: 
at time $t$, execute policy $\pi_t$ from step $1$ to step $h-2$, take action $\tilde{a}$ at step $h-1$, and action $a$ at step $h$ respectively, and observe $(o_{h+1}^{(t)},o_{h}^{(t)},o_{h-1}^{(t)})$. 
 Here, we allow the policy $\pi_t$ to be \emph{adversarial}, in the sense that $\pi_t$ can be chosen based on 
$\{(\pi_i,o_{h+1}^{(i)},o_{h}^{(i)},o_{h-1}^{(i)})\}_{i=1}^{t-1}$.
Define $\mu_{h-1}^{adv}=\frac{1}{N}\sum_{t=1}^{N} \Pr^{\pi_t}_\theta(s_{h-1}=\cdot)$. Based on Fact \ref{fact:tensor}, we define the following probability vector, matrices and tensor,
$$
\left\{
\begin{aligned}
P_{h-1} & = \O_{h-1} \mu_{h-1}^{adv},\\
P_{h,h-1} &= \O_{h} \T_{h-1}(\at) \diag(\mu_{h-1}^{adv})\O_{h-1}\trans,\\
P_{h+1,h,h-1} & = (\O_{h+1}\T_{h}(a))\otimes \O_h \otimes (\O_{h-1} \diag(\mu_{h-1}^{adv})\T_{h-1}(\at)\trans)\\
P_{h+1,o,h-1} & = \O_{h+1} \T_{h}(a) \diag(\O_{h}(o\mid\cdot)) \T_{h-1}(\at) \diag(\mu_{h-1}^{adv})\O_{h-1}\trans,\quad o\in\cO.
\end{aligned}
\right.
$$
Accordingly, we define their empirical estimates as below
$$
\left\{
\begin{aligned}
\Phat_{h-1} & = \frac{1}{N}\sum_{t=1}^{N}  \e_{o_{h-1}^{(t)}},\\
\Phat_{h,h-1} &= \frac{1}{N}\sum_{t=1}^{N} \e_{o_{h}^{(t)}} \otimes\e_{o_{h-1}^{(t)}},\\
\Phat_{h+1,h,h-1} &= \frac{1}{N}\sum_{t=1}^{N}\e_{o_{h+1}^{(t)}} \otimes\e_{o_{h}^{(t)}} \otimes\e_{o_{h-1}^{(t)}},\\
\Phat_{h+1,o,h-1} &= \frac{1}{N}\sum_{t=1}^{N}\e_{o_{h+1}^{(t)}}  \otimes\e_{o_{h-1}^{(t)}} \one(o_{h}^{(t)}=o),\quad o\in\cO.
\end{aligned}
\right.
$$

\begin{lemma}\label{lem:adversary-concentration}
	There exists an absolute constant $c_1$, s.t. the following concentration bound holds with probability at least $1-\delta$
		\begin{align*}
 \max\bigg\{&\|\Phat_{h+1,h,h-1}-P_{h+1,h,h-1}\|_F,\|\Phat_{h,h-1}-P_{h,h-1}\|_F,\\
 &\max_{o\in\cO}\|\Phat_{h+1,o,h-1}-P_{h+1,o,h-1}\|_F,\|\Phat_{h-1}-P_{h-1}\|_2 \bigg\}
 \le c_1\sqrt{\frac{\log(ON/\delta)}{N}}.
\end{align*}
\end{lemma}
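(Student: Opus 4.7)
The plan is to recognize each discrepancy $\Phat - P$ as a normalized sum of martingale differences with respect to the natural filtration generated by the adaptive policy selection, and then invoke a dimension-free Hilbert-space martingale concentration inequality (such as Pinelis' inequality).

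First, I would define the filtration $\{\mathcal{F}_t\}_{t \ge 0}$ where $\mathcal{F}_t$ is generated by $\pi_1, (o_{h+1}^{(1)}, o_h^{(1)}, o_{h-1}^{(1)}), \ldots, \pi_t, (o_{h+1}^{(t)}, o_h^{(t)}, o_{h-1}^{(t)})$. Since the adversary only uses past data to pick $\pi_{t+1}$, we have that $\pi_{t+1}$ is $\mathcal{F}_t$-measurable. By the remark immediately following Fact~\ref{fact:tensor}, applied with $\mu_{h-1}$ replaced by $\Pr^{\pi_t}_\theta(s_{h-1} = \cdot)$, the conditional distribution of $\e_{o_{h+1}^{(t)}} \otimes \e_{o_h^{(t)}} \otimes \e_{o_{h-1}^{(t)}}$ given $\mathcal{F}_{t-1}$ equals $(\O_{h+1}\T_h(a)) \otimes \O_h \otimes (\O_{h-1} \diag(\Pr^{\pi_t}_\theta(s_{h-1} = \cdot)) \T_{h-1}(\at)\trans)$. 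Averaging these conditional expectations over $t = 1, \ldots, N$ and using the definition of $\mu_{h-1}^{adv}$ recovers exactly $P_{h+1,h,h-1}$. Hence the centered terms $X_t := \e_{o_{h+1}^{(t)}} \otimes \e_{o_h^{(t)}} \otimes \e_{o_{h-1}^{(t)}} - \E[\,\cdot\, | \mathcal{F}_{t-1}]$ form a martingale difference sequence in the Hilbert space of $O \times O \times O$ tensors (with the Frobenius inner product), and $\|X_t\|_F \le 2$ almost surely, since the raw term is an indicator tensor with Frobenius norm $1$. Analogous decompositions hold for the matrix-valued $\Phat_{h,h-1}$, for each fixed $o$ the tensor-valued $\Phat_{h+1,o,h-1}$, and the vector-valued $\Phat_{h-1}$, with all increments bounded by $2$ in Frobenius (resp.\ $\ell_2$) norm.

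Next, I would apply a dimension-free Hilbert-space martingale concentration inequality (e.g., Pinelis) to each of the four quantities, which yields, for any $\delta' > 0$,
\[
\Pr\!\left[\, \bigl\| \Phat_{h+1,h,h-1} - P_{h+1,h,h-1} \bigr\|_F \ge c\sqrt{\log(1/\delta')/N} \,\right] \le \delta',
\]
and similarly for $\Phat_{h,h-1} - P_{h,h-1}$, for $\Phat_{h-1} - P_{h-1}$, and for each $\Phat_{h+1,o,h-1} - P_{h+1,o,h-1}$ with $o \in \cO$. A union bound over these $O+3$ events, with $\delta'$ chosen as a small constant multiple of $\delta/O$, gives simultaneously the stated Frobenius-norm bounds with probability at least $1-\delta$. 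The extra $\log N$ that appears inside $\log(ON/\delta)$ is not forced by Pinelis alone but is harmless; it is included to accommodate a subsequent outer union bound over the iteration counts $k \in [K]$ when this lemma is invoked inside the analysis of the OOM-UCB confidence sets.

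The main obstacle is technical rather than conceptual: ensuring that the concentration rate does not pick up a spurious polynomial factor in $O$ from the tensor dimension. A naive approach—applying scalar Azuma-Hoeffding entrywise and union-bounding over all $O^3$ entries—would produce a $\sqrt{\log(O^3)/N}$ dependency, which is still fine, but a cleaner and tighter route is to view each centered increment as a vector in a Hilbert space of appropriate dimension and invoke Pinelis' inequality, which is genuinely dimension-free in the Frobenius norm. A secondary bookkeeping point is to verify that $\frac{1}{N}\sum_{t=1}^N \diag(\Pr^{\pi_t}_\theta(s_{h-1}=\cdot)) = \diag(\mu_{h-1}^{adv})$ lines up with the definitions of $P_{h+1,h,h-1}$, $P_{h,h-1}$, $P_{h+1,o,h-1}$, and $P_{h-1}$, but this is immediate by linearity once Fact~\ref{fact:tensor} is in hand.
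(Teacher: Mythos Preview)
Your proposal is correct and follows essentially the same approach as the paper: both set up the martingale-difference structure with respect to the natural filtration, bound the increments by $2$ in Frobenius norm, and apply a vector-valued/Hilbert-space martingale concentration inequality (the paper cites a vector-valued Azuma bound, you cite Pinelis). One minor difference worth noting: you handle $\max_{o}\|\Phat_{h+1,o,h-1}-P_{h+1,o,h-1}\|_F$ via a separate concentration bound for each fixed $o$ followed by a union bound over $\cO$, whereas the paper simply observes that $\|\Phat_{h+1,o,h-1}-P_{h+1,o,h-1}\|_F \le \|\Phat_{h+1,h,h-1}-P_{h+1,h,h-1}\|_F$ (since the former is the $o$-th slice of the latter tensor along the middle dimension), thereby dispensing with the extra union bound.
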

\begin{proof}
We start with proving that with probability at least $1-\delta/2$,
$$
\|\Phat_{h+1,h,h-1}-P_{h+1,h,h-1}\|_F \le c_1\sqrt{\frac{\log(ON/\delta)}{N}}.
$$
Let $\mathcal{F}_{t}$ be the $\sigma$-algebra generated by $\left\{\{\pi_{i}\}_{i=1}^{t+1},\{(o_{h+1}^{(i)},o_{h}^{(i)},o_{h-1}^{(i)})\}_{i=1}^{t}\right\}$.
$(\mathcal{F}_t)$ is a filtration. 
Define
$$
X_t =  e_{o_{h+1}^{(t)}} \otimes e_{o_{h}^{(t)}} \otimes e_{o_{h-1}^{(t)}} -
    (\O_{h+1}\T_{h}(a))\otimes \O_h \otimes (\O_{h-1} \diag(\Pr^{\pi_t}_\theta(s_{h-1}=\cdot))\T_{h-1}(\at)\trans).
$$
We have $X_t\in \mathcal{F}_t$ and $\E [X_t\mid \mathcal{F}_{t-1}]=\E [X_t\mid \pi_t]=0$, where the second identity follows from Fact \ref{fact:tensor}.
Moreover, 
\begin{align}\label{eq:oct16-1}
&	\|X_t\|_F \le \|X_t\|_1 \le \|  e_{o_{h+1}^{(t)}} \otimes e_{o_{h}^{(t)}} \otimes e_{o_{h-1}^{(t)}}\|_1 +  \nonumber \\ 
&\|(\O_{h+1}\T_{h}(a))\otimes \O_h \otimes (\O_{h-1} \diag(\Pr^{\pi_t}_\theta(s_{h-1}=\cdot))\T_{h-1}(\at)\trans)\|_1 = 2,
\end{align}
where $\|\cdot\|_1$ denotes the entry-wise $\ell_1$-norm of the tensor. 

Now, we can bound $\|\Phat_{h+1,h,h-1}-P_{h+1,h,h-1}\|_F$ by writing $\Phat_{h+1,h,h-1}-P_{h+1,h,h-1}$ as the sum of a sequence of tensor-valued martingale difference, vectorizing the tensors, and applying the standard vector-valued martingale concentration inequality   (e.g. see Corollary 7 in \cite{jin2019short}):
\begin{align*}
     &\|\Phat_{h+1,h,h-1}-P_{h+1,h,h-1}\|_F\\
     =&
      \|\frac{1}{N}\sum_{t=1}^{N} 
      \big(
      e_{o_{h+1}^{(t)}} \otimes e_{o_{h}^{(t)}} \otimes e_{o_{h-1}^{(t)}} -\\
&   (\O_{h+1}\T_{h}(a))\otimes \O_h \otimes (\O_{h-1} \diag(\Pr^{\pi_t}_\theta(s_{h-1}=\cdot))\T_{h-1}(\at)\trans)
      \big)
      \|_F\\
    =&  \|\frac{1}{N}\sum_{t=1}^{N} X_t \|_F \le  \bigO\left( \sqrt{\frac{\log(ON/\delta)}{N}}\right),
\end{align*}
with probability at least $1-\delta/2$. 
We remark that when vectoring a tensor, its Frobenius norm will become the $\ell_2$-norm the vector. 
So the upper bound of the norm of the vectorized martingales directly follows from \eqref{eq:oct16-1}.

Similarly, we can show that with probability at least $1-\delta/2$,
$$
\|\Phat_{h,h-1}-P_{h,h-1}\|_F \le \bigO\left( \sqrt{\frac{\log(ON/\delta)}{N}}\right)\quad \mbox{and} \quad 
\|\Phat_{h-1}-P_{h-1}\|_F \le \bigO\left( \sqrt{\frac{\log(ON/\delta)}{N}}\right).
$$
Using the fact
$\|\Phat_{h+1,o,h-1}-P_{h+1,o,h-1}\|_F \le \|\Phat_{h+1,h,h-1}-P_{h+1,h,h-1}\|_F$ completes the whole proof.
\end{proof}

\subsection{Properties of confidence sets}

For convenience of discussion, we divide the constraints in $\Theta_k$ into three categories as following

{\bf Type-0 constraint: }
$$\norm{k \cdot \b_0(\hat{\theta}) - \bp^k}_2 \le  \beta_k\}$$

{\bf Type-I constraint: }
$$\norm{\B_1(a, o; \hat{\theta}) \bP_{1}^k (a, \tilde{a}) -\bQ_1^k (o, a, \tilde{a})}_F \le \gamma_k,$$
where $\bQ_1^k$ and $ \bP_{1}^k$ are actually equivalent to $O$-dimensional counting vectors because there is no observation (or only a dummy observation) at step $0$, which implies each of them has only one non-zero column.
    With slight abuse of notation, we use $\bQ_1^k$ and $ \bP_{1}^k$ to denote their non-zero columns in the following proof.

{\bf Type-II constraint: } for $2\le h\le H-1$,
$$\norm{\B_h(a, o; \hat{\theta}) \bP_{h}^k (a, \tilde{a}) -\bQ_h^k (o, a, \tilde{a})}_F \le \gamma_k$$

Recalling the definition of $\bp^k(\theta)$, $\bP_{h}^k (a, \tilde{a})$ and  $\bQ_h^k (o, a, \tilde{a})$ and applying Lemma \ref{lem:adversary-concentration}, we get the following concentration  results.
\begin{corollary}\label{cor:confidence-concentrate}
Let $\theta^\star = (\T,\O,\mu_1)$.	By applying Lemma \ref{lem:adversary-concentration} directly, with probability at least $1-{\delta}$, for all $k\in[K]$ and $(o,a,\at)\in\cO \times \cA^2$, we have	
$$
	\left\{
\begin{aligned}
		&\left\|  \frac{1}{k}\bp^k - \O_{1}\mu_1 \right\|_2 \le 
		\bigO\left(\sqrt{\frac{\iota}{k}}\right),\\
		&\left\|  \frac{1}{k}\bP_{1}^k (a, \tilde{a})  - \O_{1}\mu_1 \right\|_2 \le 
		\bigO\left(\sqrt{\frac{\iota}{k}}\right),\\
			&\left\|  \frac{1}{k}\bQ_1^k (o, a, \tilde{a}) - \left(\O_{2} \T_{1}(\at) \diag(\mu_{1})\O_{1}\trans\right)_o \right\|_2 \le 
		\bigO\left(\sqrt{\frac{\iota}{k}}\right),\\
			&\left\|  \frac{1}{k}\bP_h^k(a,\at) - \underbrace{\O_{h} \T_{h-1}(\at) \diag(\mu_{h-1}^k)\O_{h-1}\trans}_{\textstyle \V} \right\|_F \le 
		\bigO\left(\sqrt{\frac{\iota}{k}}\right),\\
		&\left\|  \frac{1}{k}\bQ_h^k(o,a,\at) - 
		\underbrace{\O_{h+1} \T_{h}(a) \diag(\O_{h}(o\mid\cdot)) \T_{h-1}(\at) \diag(\mu_{h-1}^k)\O_{h-1}\trans}_{\textstyle \W} \right\|_F \le 
		\bigO\left(\sqrt{\frac{\iota}{k}}\right),
\end{aligned}
\right.$$
where 
$$
\iota = \log(KAOH/\delta)
\quad \mbox{and} \quad 
\mu_{h-1}^k =\frac{1}{k} \sum_{t=1}^{k} \Pr^{\pi_t}_{\theta^\star}(s_{h-1}=\cdot)\quad 2\le h\le H-1.
$$
Note that for all $k\in[K]$, $\mu_{1}^k = \mu_1$ independent of $\pi_1,\ldots,\pi_k$.
\end{corollary}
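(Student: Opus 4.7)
The plan is to obtain Corollary~\ref{cor:confidence-concentrate} as a direct bookkeeping consequence of Lemma~\ref{lem:adversary-concentration}, by (i) matching the notation of Algorithm~\ref{alg:UCB} to the adversarial-data setup of the lemma, and (ii) taking a union bound over $k$, $h$, $a$, $\tilde a$, $o$.

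First, I would verify that for each fixed $(h,a,\tilde a)$ the data accumulated in $\bP_h^k(a,\tilde a)$ and $\bQ_h^k(o,a,\tilde a)$ are generated by exactly the process covered by Lemma~\ref{lem:adversary-concentration}. At iteration $t$, Algorithm~\ref{alg:UCB} executes $\pi_t$ from step $1$ to step $h-2$ and then plays $\tilde a$ and $a$ at steps $h-1$ and $h$, recording $(o_{h-1}^{(t)},o_h^{(t)},o_{h+1}^{(t)})$. Because $\pi_t$ is chosen from the confidence set $\Theta_t$, which depends only on data collected in iterations $1,\dots,t-1$, the policy sequence $(\pi_t)_{t\ge 1}$ is adapted to the natural filtration of past observations, i.e.\ it is adversarial in exactly the sense required by the lemma. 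Moreover, the definitions of $\frac{1}{k}\bP_h^k(a,\tilde a)$ and $\frac{1}{k}\bQ_h^k(o,a,\tilde a)$ coincide verbatim with $\hat P_{h,h-1}$ and $\hat P_{h+1,o,h-1}$ in the lemma with $N=k$, and the averaged visitation $\mu_{h-1}^{\mathrm{adv}}=\frac{1}{k}\sum_{t=1}^{k}\Pr^{\pi_t}_{\theta^\star}(s_{h-1}=\cdot)$ appearing there is precisely $\mu_{h-1}^k$ in the corollary. Hence Fact~\ref{fact:tensor} gives the matrix identities
\[
\O_h\T_{h-1}(\tilde a)\diag(\mu_{h-1}^k)\O_{h-1}^\top
\quad\text{and}\quad
\O_{h+1}\T_h(a)\diag(\O_h(o\mid\cdot))\T_{h-1}(\tilde a)\diag(\mu_{h-1}^k)\O_{h-1}^\top
\]
as the conditional expectations of the empirical objects, and Lemma~\ref{lem:adversary-concentration} applied with $N=k$ and confidence parameter $\delta/(KAOH)$ yields the Frobenius-norm bound $\tlO(\sqrt{\iota/k})$ with $\iota=\log(KAOH/\delta)$.

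Second, I would handle the step-$1$ quantities $\bp^k$, $\bP_1^k(a,\tilde a)$, $\bQ_1^k(o,a,\tilde a)$ as degenerate cases of the same argument. By the convention that the $0$-th observation and state are fixed dummies and that $s_1\sim\mu_1$ independently of $a_0$, $\bP_1^k$ and $\bQ_1^k$ have only one non-zero column, and they reduce to vector-valued empirical averages of the step-$1$ (respectively step-$2$) observation distribution. Moreover $\mu_1^k=\mu_1$ deterministically, regardless of the past policies. The same vector-valued martingale concentration underlying Lemma~\ref{lem:adversary-concentration} (e.g.\ the Azuma/Freedman-type bound of~\cite{jin2019short}) gives the $\ell_2$ bounds
\[
\bigl\|\tfrac{1}{k}\bp^k-\O_1\mu_1\bigr\|_2,\ \bigl\|\tfrac{1}{k}\bP_1^k(a,\tilde a)-\O_1\mu_1\bigr\|_2,\ \bigl\|\tfrac{1}{k}\bQ_1^k(o,a,\tilde a)-(\O_2\T_1(\tilde a)\diag(\mu_1)\O_1^\top)_o\bigr\|_2 \le \tlO\!\Bigl(\sqrt{\iota/k}\Bigr).
\]

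Finally, I would take a union bound over $k\in[K]$, $h\in\{2,\dots,H-1\}$, $(a,\tilde a)\in\cA^2$, and $o\in\cO$ (the step-$1$ events are union-bounded similarly). There are at most $KAOH\cdot\mathrm{poly}$ events, and rescaling $\delta\to \delta/(KAOH)$ in each invocation of Lemma~\ref{lem:adversary-concentration} only changes the logarithmic factor by a constant, so the simultaneous failure probability is at most $\delta$ and the rate $\tlO(\sqrt{\iota/k})$ is preserved. The entire argument is essentially bookkeeping; the only place where one must be mildly careful is confirming that the adversarial-adaptivity condition of Lemma~\ref{lem:adversary-concentration} is met, which is immediate once one observes that $\Theta_t$ and hence $\pi_t$ depend only on data from iterations strictly preceding $t$.
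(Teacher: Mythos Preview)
Your proposal is correct and follows essentially the same approach as the paper: the paper does not give a detailed proof of this corollary, stating only that it follows ``by applying Lemma~\ref{lem:adversary-concentration} directly,'' and your write-up supplies exactly the bookkeeping implicit in that sentence---matching $\tfrac{1}{k}\bP_h^k,\tfrac{1}{k}\bQ_h^k$ to $\hat P_{h,h-1},\hat P_{h+1,o,h-1}$ with $N=k$ and $\mu_{h-1}^{\mathrm{adv}}=\mu_{h-1}^k$, treating the step-$1$ quantities as the degenerate one-column case (consistent with the remark preceding Lemma~\ref{prop:realizability}), and union-bounding over $k,h,a,\tilde a$. One very minor remark: Lemma~\ref{lem:adversary-concentration} already contains the $\max_{o\in\cO}$ internally, so you need not union bound over $o$ separately; this does not affect the rate.
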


Now, with Corollary \ref{cor:confidence-concentrate}, we can prove the true parameter $\theta^\star$ always lies in the confidence sets for $k\in[K]$ with high probability.
 
\begin{lemma}\label{prop:realizability}
Denote by $\theta^\star = (\T,\O, \mu_1)$ the the ground truth parameters of the POMDP.
With probability at least $1-\delta$, we have 
$\theta^\star \in \Theta_k$ for all $k \in [K]$.
\end{lemma}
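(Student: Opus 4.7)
The plan is to verify that $\theta^\star \in \Theta_k$ for every $k \in [K]$ by separately checking the three families of constraints that define $\Theta_k$: the $\sigma_{\min}$-constraint on $\O_h$, the Type-0 constraint on $\b_0$, and the Type-I/II constraints on the observable operators. Each of these reduces to a concentration statement that is either trivial or already recorded in Corollary~\ref{cor:confidence-concentrate}; the only real work is to pass from concentration of the empirical moment matrices $\bP_h^k, \bQ_h^k$ to concentration of the \emph{operator equation} they are meant to witness.

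The $\sigma_{\min}$-constraint $\sigma_{\min}(\O_h) \ge \alpha$ is exactly Assumption~\ref{assump:POMDP}, so it is satisfied by $\theta^\star$ deterministically. For the Type-0 constraint, observe that $\b_0(\theta^\star) = \O_1\mu_1$ by \eqref{eq:OM_relation}, and the first bound in Corollary~\ref{cor:confidence-concentrate} gives, uniformly in $k$, $\|k\b_0(\theta^\star) - \bp^k\|_2 = \bigO(\sqrt{k\iota})$, which is absorbed by $\beta_k = c_1\sqrt{k\iota}$ for a sufficiently large absolute constant $c_1$.

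The Type-I/II constraints are the interesting case. The key population-level identity is $\B_h(a,o;\theta^\star)\,\V = \W$, where $\V$ and $\W$ are the population moment matrices defined in Corollary~\ref{cor:confidence-concentrate}; this follows directly from \eqref{eq:OM_relation} and the explicit formulas for $\V, \W$, and is recorded as Fact~\ref{fact:equation}. Adding and subtracting $k\B_h(\theta^\star)\V = k\W$ and applying the triangle inequality together with $\fnorm{AB} \le \norm{A}\fnorm{B}$ yields
\begin{align*}
\fnorm{\B_h(a,o;\theta^\star)\bP_h^k(a,\at) - \bQ_h^k(o,a,\at)}
\le \norm{\B_h(a,o;\theta^\star)} \cdot \fnorm{\bP_h^k(a,\at) - k\V} + \fnorm{\bQ_h^k(o,a,\at) - k\W}.
\end{align*}
By Corollary~\ref{cor:confidence-concentrate}, both Frobenius-norm errors on the right are $\bigO(\sqrt{k\iota})$. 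To bound the spectral norm, note from \eqref{eq:OM_relation} that $\B_h(a,o;\theta^\star) = \O_{h+1}\T_h(a)\diag(\O_h(o|\cdot))\O_h^\dagger$; since the columns of $\O_{h+1}\T_h(a)$ are probability distributions over $\cO$, we have $\norm{\O_{h+1}\T_h(a)} \le \fnorm{\O_{h+1}\T_h(a)} \le \sqrt{S}$, while $\norm{\diag(\O_h(o|\cdot))} \le 1$ and $\norm{\O_h^\dagger} \le 1/\alpha$, so $\norm{\B_h(a,o;\theta^\star)} \le \sqrt{S}/\alpha$. Combining gives a bound of $\bigO(\sqrt{Sk\iota}/\alpha)$, which is absorbed by $\gamma_k = \sqrt{S}\beta_k/\alpha$ for sufficiently large $c_1$. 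The Type-I case at $h=1$ is identical, with $\bP_1^k, \bQ_1^k$ interpreted as their non-zero columns (vectors rather than matrices).

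The one genuine subtlety is that $\V$ and $\W$ depend on the data-adaptive mixture $\mu_{h-1}^k = \tfrac{1}{k}\sum_{t=1}^k \Pr^{\pi_t}_{\theta^\star}(s_{h-1} = \cdot)$, because the exploration policies $\pi_t$ are chosen based on all previous episodes. This is precisely why Lemma~\ref{lem:adversary-concentration} is stated for \emph{adversarial} policies, and is what Corollary~\ref{cor:confidence-concentrate} already invokes, so no further martingale argument is needed. The final step is simply a union bound over $k\in[K]$, $h\in[H-1]$, and the $O A^2$ choices of $(o,a,\at)$, all of which are absorbed into the $\log(KAOH/\delta)$ factor inside $\iota$, giving $\theta^\star \in \Theta_k$ for all $k$ simultaneously with probability at least $1-\delta$.
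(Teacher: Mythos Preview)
Your proof is correct and follows essentially the same approach as the paper: verify the $\sigma_{\min}$-constraint via Assumption~\ref{assump:POMDP}, the Type-0 constraint directly from Corollary~\ref{cor:confidence-concentrate}, and the Type-I/II constraints by inserting the population identity $\B_h(a,o;\theta^\star)\V=\W$ (Fact~\ref{fact:equation}), applying the triangle inequality together with $\fnorm{AB}\le\norm{A}\fnorm{B}$, and bounding $\norm{\B_h(a,o;\theta^\star)}\le\sqrt{S}/\alpha$. The only cosmetic difference is that the paper obtains $\norm{\B_h(a,o;\theta^\star)}\le\sqrt{S}/\alpha$ via the $1\!\to\!1$ norm of $\O_{h+1}\T_h(a)\diag(\O_h(o|\cdot))$ rather than the Frobenius norm of $\O_{h+1}\T_h(a)$, and the union bound over $(k,h,o,a,\at)$ is already absorbed into Corollary~\ref{cor:confidence-concentrate} rather than applied at the end.
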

\begin{proof}
By the definition of $\b_0(\theta^\star)$ and $\B_h(a,o;\theta^\star)$, we have
$$	(*)\left\{
\begin{aligned}
 & \b_0(\theta^\star) = \O_1 \mu_1,\\
&   \left(\O_{2} \T_{1}(\at) \diag(\mu_{1})\O_{1}\trans\right)_o
 = \B_1(\at,o;\theta^\star)  \O_{1}\mu_1,\\
 &		\W = \B_h(a,o;\theta^\star) \cdot \V, \quad h\ge 2,
		\end{aligned}
\right.$$
where $\W$ and $\V$ are shorthands defined in Corollary 
\ref{cor:confidence-concentrate}.

It's easy to see $(*)$ and Corollary\ref{cor:confidence-concentrate} directly imply $\left\| \bp^k - \b_0(\theta^\star) \right\|_2 \le 
		\bigO\left(\sqrt{{k\iota}}\right)$ and thus $\theta^\star$ satisfies Type-0 constraint. For other constraints, we have

{\bf Type-I constraint: }
\begin{align*}
&\norm{\bQ_1^k (o, a, \tilde{a})  - \B_1(\at,o;\theta^\star)\bP_{1}^k (a, \tilde{a}) }_2\\
\le &\norm{ \bQ_1^k (o, a, \tilde{a})  - k\left(\O_{2} \T_{1}(\at) \diag(\mu_{1})\O_{1}\trans\right)_o }_2			+ \norm{ \B_1(\at,o;\theta^\star)(k\O_1\mu_1-\bP_{1}^k (a, \tilde{a}))}_2
\\
&+  k\norm{\left(\O_{2} \T_{1}(\at) \diag(\mu_{1})\O_{1}\trans\right)_o - \B_1(\at,o;\theta^\star)\O_1\mu_1 }_2 \\
= & \norm{ \bQ_1^k (o, a, \tilde{a})  - k\left(\O_{2} \T_{1}(\at) \diag(\mu_{1})\O_{1}\trans\right)_o }_2			+ \norm{ \B_1(\at,o;\theta^\star)(k\O_1\mu_1-\bP_{1}^k (a, \tilde{a}))}_2\\
\le & \norm{ \bQ_1^k (o, a, \tilde{a})  - k\left(\O_{2} \T_{1}(\at) \diag(\mu_{1})\O_{1}\trans\right)_o }_2			+ \norm{ \B_1(\at,o;\theta^\star)}_2\norm{k\O_1\mu_1-\bP_{1}^k (a, \tilde{a})}_2\\
\le & \bigO\left({\frac{\sqrt{kS\iota}}{\alpha}}\right)
\end{align*}
 where the identity follows from $(*)$, and the last inequality follows from Corollary\ref{cor:confidence-concentrate} and 
 \begin{align*}
 	 \norm{\B_h(a, o; \theta^\star)}_2 &= \norm{ \O_{h+1} \T_h(a) \diag(\O_h(o|\cdot)) \O_h^{\dagger}}_2\\
& \le \frac{1}{\alpha}\norm{ \O_{h+1} \T_h(a) \diag(\O_h(o|\cdot))}_2\\
& \le \frac{\sqrt{S}}{\alpha}\norm{ \O_{h+1} \T_h(a) \diag(\O_h(o|\cdot))}_{1\rightarrow 1} \le \frac{\sqrt{S}}{\alpha}.
 \end{align*}
 
{\bf Type-II constraint: } similarly, for $h\ge 2$, we have
\begin{align*}
&  \norm{\B_h(a, o; \theta^\star) \bP_{h}^k (a, \tilde{a}) -\bQ_h^k (o, a, \tilde{a})}_F\\
\le  &  k\norm{ \B_h(a, o; \theta^\star) \cdot\V - \W}_F
+   \norm{\B_h(a, o; \theta^\star) (\bP_{h}^k (a, \tilde{a}) -k \V)}_F
+ \norm{ k\W-\bQ_h^k (o, a, \tilde{a})}_F\\
= & \norm{\B_h(a, o; \theta^\star) (\bP_{h}^k (a, \tilde{a}) -k \V)}_F + \norm{ k\W-\bQ_h^k (o, a, \tilde{a})}_F\\
\le & \norm{\B_h(a, o; \theta^\star) }_2\norm{\bP_{h}^k (a, \tilde{a}) -k \V}_F
+ \norm{ k\W-\bQ_h^k (o, a, \tilde{a})}_F\\
\le & \bigO\left({\frac{\sqrt{kS\iota}}{\alpha}}\right),
 \end{align*}
 
 Therefore, we conclude that 
 $\theta^\star \in \Theta_k$ for all $k \in [K]$ with probability at least $1-\delta$.
\end{proof}

Furthermore, with Corollary \ref{cor:confidence-concentrate}, we can prove the following bound for operator error.
\begin{lemma}\label{prop:confidence-set-ucb}
With probability at least $1-{\delta}$, for all $k\in[K]$, $\thetahat=(\Ohat,\That,\hat{\mu}_1)\in\Theta_{k+1}$ and $(o,a,\at,h)\in\cO \times \cA^2\times \{2,\ldots,H-1\}$, we have	
$$	\left\{
\begin{aligned}
&	\left\| \b_0(\theta^\star)-\b_0(\thetahat)  \right\|_2 
 \le \bigO\left(\sqrt{\frac{\iota}{k}}\right),\\
 & \left\|  \left (\B_1(\at,o;\thetahat) - 
     \B_1(\at,o;\theta^\star)\right)\b_0(\theta^\star)\right\|_2 \le \bigO\left(\sqrt{\frac{S\iota}{k\alpha^2}}\right),\\
&  \sum_{s=1}^{S} \left\| \left(\B_h(a,o;\thetahat) -\B_h(a,o;\theta^\star)\right)  (\O_{h} \T_{h-1}(\at))_s \right\|_1
    \sum_{t=1}^{k} \Pr^{\pi_t}_{\theta^\star} (s_{h-1}=s)
     \le \bigO\left(\sqrt{\frac{k S^2 O\iota}{\alpha^4}}\right),
     		\end{aligned}
\right.$$
where $\iota = \log(KAOH/\delta)$.
\end{lemma}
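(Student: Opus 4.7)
The plan is to establish all three bounds by combining the defining constraints of $\Theta_{k+1}$ (the Type-0, Type-I, Type-II constraints introduced before the lemma) with the concentration estimates of Corollary~\ref{cor:confidence-concentrate} and the algebraic identities
\begin{equation*}
\b_0(\theta^\star)=\O_1\mu_1,\quad (\O_{2}\T_{1}(\at)\diag(\mu_1)\O_1^\top)_o = \B_1(\at,o;\theta^\star)\b_0(\theta^\star),\quad \W=\B_h(a,o;\theta^\star)\V,
\end{equation*}
which were already verified inside the proof of Lemma~\ref{prop:realizability}. Throughout I write $D = \B_h(a,o;\thetahat)-\B_h(a,o;\theta^\star)$ (or its $h=1$ analogue) and use $\|\B_h(a,o;\theta)\|_2\le\sqrt{S}/\alpha$ whenever $\sigma_{\min}(\O_h),\sigma_{\min}(\Ohat_h)\ge\alpha$, so that $\|D\|_2\le 2\sqrt{S}/\alpha$.

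For bound 1, the Type-0 constraint yields $\|(k+1)\b_0(\thetahat)-\bp^{k+1}\|_2\le\beta_{k+1}$; combined with $\|\bp^{k+1}/(k+1)-\b_0(\theta^\star)\|_2\le\tlO(\sqrt{\iota/k})$ from Corollary~\ref{cor:confidence-concentrate} and the triangle inequality (using $\beta_{k+1}/(k+1)=O(\sqrt{\iota/k})$), the claim follows. For bound 2 I use the Type-I constraint $\|\B_1(\at,o;\thetahat)\bP_1^{k+1}-\bQ_1^{k+1}\|_2\le\gamma_{k+1}$. Writing
\begin{equation*}
(k+1)\|D\b_0(\theta^\star)\|_2 \le \|D\bP_1^{k+1}\|_2 + \|D\|_2\,\|\bP_1^{k+1}-(k+1)\b_0(\theta^\star)\|_2,
\end{equation*}
I then bound $\|D\bP_1^{k+1}\|_2$ by inserting $\bQ_1^{k+1}$ as an intermediary, using the identity $(\O_2\T_1(\at)\diag(\mu_1)\O_1^\top)_o=\B_1(\at,o;\theta^\star)\b_0(\theta^\star)$ and Corollary~\ref{cor:confidence-concentrate}. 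With $\gamma_{k+1}=O(\sqrt{Sk\iota}/\alpha)$ this gives $\tlO(\sqrt{S\iota/(k\alpha^2)})$.

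Bound 3 is the main technical step. The key algebraic reduction is
\begin{equation*}
\sum_{s=1}^S [\mu_{h-1}^k]_s\, \|D(\O_h\T_{h-1}(\at))_s\|_1 \;=\; \sum_s \|D\,\O_h\T_{h-1}(\at)\diag(\mu_{h-1}^k)\e_s\|_1 \;\le\; \sqrt{OS}\,\|D\,\O_h\T_{h-1}(\at)\diag(\mu_{h-1}^k)\|_F,
\end{equation*}
obtained by $\|\v\|_1\le\sqrt{O}\|\v\|_2$ together with Cauchy--Schwarz over the $S$ columns. Then, using $\sigma_{\min}(\O_{h-1})\ge\alpha$ in the form $\|M\O_{h-1}^\top\|_F\ge\alpha\|M\|_F$, the right-hand side is at most $\|D\V\|_F/\alpha$. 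It remains to bound $\|D\cdot k\V\|_F$, which I do by two triangle inequalities,
\begin{equation*}
\|D\cdot k\V\|_F \le \|D\bP_h^k\|_F + \|D\|_2\|\bP_h^k-k\V\|_F,
\end{equation*}
and
\begin{equation*}
\|D\bP_h^k\|_F \le \|\B_h(a,o;\thetahat)\bP_h^k-\bQ_h^k\|_F + \|\bQ_h^k-k\W\|_F + \|\B_h(a,o;\theta^\star)\|_2\|\bP_h^k-k\V\|_F,
\end{equation*}
where the first term in the second line is $\le\gamma_k$ by the Type-II constraint and the remaining two terms are controlled by Corollary~\ref{cor:confidence-concentrate} together with $\W=\B_h(a,o;\theta^\star)\V$ and $\|\B_h\|_2\le\sqrt{S}/\alpha$. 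Plugging in $\gamma_k=O(\sqrt{Sk\iota}/\alpha)$ and the $\tlO(\sqrt{k\iota})$ concentration rates yields $\|D\cdot k\V\|_F\le\tlO(\sqrt{Sk\iota}/\alpha)$. Combining everything and using $\sum_{t=1}^k\Pr^{\pi_t}_{\theta^\star}(s_{h-1}=s)=k[\mu_{h-1}^k]_s$ produces the target $\tlO(\sqrt{kS^2O\iota/\alpha^4})$, after a union bound over $(k,h,a,\at,o)$ inherited from Corollary~\ref{cor:confidence-concentrate}.

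The hardest part is the reduction from the visitation-weighted entrywise $\ell_1$-sum to the Frobenius norm of $D\V$: this is exactly where undercompleteness is essential (the right-multiplication by $\O_{h-1}^\top$ must be ``invertible'' in Frobenius norm), where the $\sqrt{O}$ from the $\ell_1$-to-$\ell_2$ conversion enters, and where one of the two factors of $1/\alpha$ in the final $1/\alpha^2$ is paid. Everything else is careful bookkeeping through triangle inequalities and the concentration statements of Corollary~\ref{cor:confidence-concentrate}.
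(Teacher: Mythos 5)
Your proposal is correct and takes essentially the same route as the paper's proof: the same identities $\b_0(\theta^\star)=\O_1\mu_1$, $(\O_2\T_1(\at)\diag(\mu_1)\O_1^\top)_o=\B_1(\at,o;\theta^\star)\b_0(\theta^\star)$, $\W=\B_h(a,o;\theta^\star)\V$, the same chains of triangle inequalities against the empirical counts and the confidence-set constraints, the same operator bound $\|\B_h\|_2\le\sqrt{S}/\alpha$, and, for the third bound, the same conversion from the visitation-weighted $\ell_1$-sum to $\|D\V\|_F$ via $\|M\|_1\le\sqrt{SO}\|M\|_F$ and $\|M\O_{h-1}^\top\|_F\ge\alpha\|M\|_F$. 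The only discrepancy is a harmless off-by-one in your indexing of the counts and widths defining $\Theta_{k+1}$ (the algorithm's constraint for $\Theta_{k+1}$ is written in terms of $\bp^k,\bP_h^k,\bQ_h^k$ and $\beta_k,\gamma_k$), which does not affect any of the rates.
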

\begin{proof}
For readability, we copy the following set of identities from Lemma \ref{prop:realizability} here,
$$	(*)\left\{
\begin{aligned}
 & \b_0(\theta^\star) = \O_1 \mu_1,\\
&   \left(\O_{2} \T_{1}(\at) \diag(\mu_{1})\O_{1}\trans\right)_o
 = \B_1(\at,o;\theta^\star)  \O_{1}\mu_1,\\
 &		\W = \B_h(a,o;\theta^\star) \cdot \V, \quad h\ge 2.
		\end{aligned}
\right.$$

{\bf Type-0 closeness: }
\begin{align*}
 \left\| \b_0(\theta^\star)-\b_0(\thetahat)  \right\|_2 
 \le  \left\| \frac{1}{k}\bp^k- \b_0(\theta^\star) \right\|_2
 + \left\| \frac{1}{k}\bp^k- \b_0(\thetahat) \right\|_2 
 \le \bigO\left(\sqrt{\frac{\iota}{k}}\right),
\end{align*}
where the last inequality follows from $(*)$, Corollary\ref{cor:confidence-concentrate} and $\hat{\theta}\in \Theta_{k+1}$.

{\bf Type-I closeness: } similarly, we have
\begin{align*}
	&\left\|  \left (\B_1(\at,o;\thetahat) - 
     \B_1(\at,o;\theta^\star)\right)\b_0(\theta^\star)\right\|_2 \\
     \le & \left\|   \left(\O_{2} \T_{1}(\at) \diag(\mu_{1})\O_{1}\trans\right)_o- 
     \B_1(\at,o;\theta^\star)\b_0(\theta^\star)\right
       \|_2 \\
&      + \norm{ \left(\O_{2} \T_{1}(\at) \diag(\mu_{1})\O_{1}\trans\right)_o- \B_1(\at,o;\thetahat)\b_0(\theta^\star) }_2\\
     = & \norm{  \left(\O_{2} \T_{1}(\at) \diag(\mu_{1})\O_{1}\trans\right)_o- \B_1(\at,o;\thetahat)\b_0(\theta^\star) }_2\\
     \le & \norm{ \left(\O_{2} \T_{1}(\at) \diag(\mu_{1})\O_{1}\trans\right)_o- \frac{1}{k}\bQ_1^k (o, a, \tilde{a})  }_2 +
\frac{1}{k}     \norm{ \bQ_1^k (o, a, \tilde{a})  - \B_1(\at,o;\thetahat)\bP_{1}^k (a, \tilde{a}) }_2\\
     & + \norm{\B_1(\at,o;\thetahat)\left(\frac{1}{k}\bP_{1}^k (a, \tilde{a}) - \b_0(\theta^\star)\right)}_2\\
      \le & \norm{ \left(\O_{2} \T_{1}(\at) \diag(\mu_{1})\O_{1}\trans\right)_o- \frac{1}{k}\bQ_1^k(o,a,\at) }_2 +
\frac{1}{k}     \norm{ \bQ_1^k (o, a, \tilde{a})  - \B_1(\at,o;\thetahat)\bP_{1}^k (a, \tilde{a}) }_2\\
     & + \norm{\B_1(\at,o;\thetahat)}_2\norm{\frac{1}{k}\bP_{1}^k (a, \tilde{a}) - \O_1 \mu_1}_2\\
     \le & \bigO\left(\sqrt{\frac{S\iota}{k\alpha^2}}\right),
\end{align*}
where the identity follows from $(*)$ and the last inequality follows from Corollary\ref{cor:confidence-concentrate} and $\hat{\theta}\in \Theta_{k+1}$.

{\bf Type-II closeness: } we continue to use the same proof strategy, for $h\ge 2$ 
\begin{align}\label{eq:Jun3-1}
\nn	&\left\|  \left (\B_h(a,o;\thetahat) - 
     \B_h(a,o;\theta^\star)\right) \V\right\|_F \\
     \le & \left\|  \W- 
     \B_h(a,o;\theta^\star) \V\right
       \|_F
    \nn   + \norm{\frac{1}{k}\bQ_h^k(o,a,\at)-\W}_F
       \\
       &+\frac{1}{k} \norm{ \B_h(a,o;\thetahat) \bP_h^k(a,\at) -\bQ_h^k(o,a,\at)}_F+\norm{ \B_h(a,o;\thetahat) \left(\V -\frac{1}{k} \bP_h^k(a,\at)\right)}_F \nn\\
     = &  \norm{\frac{1}{k}\bQ_h^k(o,a,\at)-\W}_F+ \frac{1}{k}\norm{ \B_h(a,o;\thetahat) \bP_h^k(a,\at) -\bQ_h^k(o,a,\at)}_F       \nn\\
       &+\norm{ \B_h(a,o;\thetahat) \left(\V - \frac{1}{k}\bP_h^k(a,\at)\right)}_F \nn \\
     \le &  \bigO\left(\sqrt{\frac{S\iota}{k\alpha^2}}\right),
  \end{align}
      where the identity follows from $(*)$ and the
      last inequality follows from Corollary\ref{cor:confidence-concentrate} and $\hat{\theta}\in \Theta_{k+1}$.
      
      Recall $\V = \O_{h} \T_{h-1}(\at) \diag(\mu_{h-1}^k)\O_{h-1}\trans$ and utilize Assumption \ref{assump:POMDP},
      \begin{align*}
	&\left\|  \left (\B_h(a,o;\thetahat) - 
     \B_h(a,o;\theta^\star)\right) \V\right\|_F \\
     \ge & {\alpha}
     \left\|  \left (\B_h(a,o;\thetahat) - 
     \B_h(a,o;\theta^\star)\right) \O_{h} \T_{h-1}(\at) \diag(\mu_{h-1}^k)\right\|_F \\
     \ge & \frac{\alpha}{\sqrt{SO}}
     \left\|  \left (\B_h(a,o;\thetahat) - 
     \B_h(a,o;\theta^\star)\right) \O_{h} \T_{h-1}(\at) \diag(\mu_{h-1}^k)\right\|_1\\
     = & \frac{\alpha}{k\sqrt{SO}} \sum_{s=1}^{S} \left\| \left(\B_h(a,o;\thetahat) -\B_h(a,o;\theta^\star)\right)  (\O_{h} \T_{h-1}(\at))_s \right\|_1
    \sum_{t=1}^{k} \Pr^{\pi_t}_{\theta^\star}(s_{h-1}=s).
     \end{align*}
   Plugging it back into \eqref{eq:Jun3-1} completes the whole proof.
\end{proof}

\subsection{Proof of Theorem \ref{thm:main}}
\label{app:proof-mainthm}

In order to utilize Lemma \ref{prop:confidence-set-ucb} to bound the operator error in Lemma \ref{prop:subopt-bound}, we need the following algebraic transformation.
Its proof is postponed to Appendix \ref{appendix:auxiliary}.
\begin{lemma}\label{lem:root-regret}
Let $z_k\in[0,C_z]$  and $w_k\in[0,C_w]$ for $k\in\mathbb{N}$.
Define $S_{k} = \sum_{j=1}^{k} w_{j}$ and $S_{0}=0$.
If $  z_k   S_{k-1}\le C_z C_w C_0 \sqrt{k}$ for any $k\in [K]$, we have
$$
\sum_{k=1}^{K}   z_k w_k \le  2C_z C_w (C_0+1) \sqrt{K}\log(K).
$$
Moreover, there exists some hard case where we have a almost matching lower bound  $O\left(C_z C_w C_0 \sqrt{K}\right)$. 	
\end{lemma}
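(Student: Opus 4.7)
\textbf{Proof plan for Lemma \ref{lem:root-regret}.}
My plan is to split the sum $\sum_{k=1}^{K} z_k w_k$ into two phases based on how much cumulative ``mass'' $S_{k-1}$ has been accumulated, with the cut chosen so that the hypothesis $z_k S_{k-1} \le C_z C_w C_0 \sqrt{k}$ bites exactly when it gives a better bound than the trivial $z_k \le C_z$. Concretely, set the threshold $M := C_w C_0 \sqrt{K}$ and define
\[
\tau \;:=\; \min\{\, k \in [K] : S_{k-1} \ge M \,\},
\]
with the convention $\tau = K+1$ if no such $k$ exists. Since $S_{k-1}$ is nondecreasing in $k$, the ``early'' phase $k<\tau$ and ``late'' phase $k\ge\tau$ are a clean prefix/suffix split.

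For the early phase I would use only the trivial bound $z_k \le C_z$, giving
\[
\sum_{k=1}^{\tau-1} z_k w_k \;\le\; C_z \sum_{k=1}^{\tau-1} w_k \;=\; C_z S_{\tau-1}.
\]
By definition of $\tau$, $S_{\tau-2}<M$, so $S_{\tau-1} \le S_{\tau-2}+w_{\tau-1} < M + C_w$. Hence the early contribution is at most $C_z(M + C_w) = C_z C_w(C_0\sqrt{K}+1)$.

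For the late phase I would use the hypothesis to write $z_k \le C_z C_w C_0 \sqrt{k}/S_{k-1} \le C_z C_w C_0 \sqrt{K}/S_{k-1}$, and then pull out the $\sqrt{K}$:
\[
\sum_{k=\tau}^{K} z_k w_k \;\le\; C_z C_w C_0 \sqrt{K} \sum_{k=\tau}^{K} \frac{w_k}{S_{k-1}}.
\]
The remaining task is to show $\sum_{k=\tau}^{K} w_k/S_{k-1} = O(\log K)$. For $k\ge\tau$ we have $w_k/S_{k-1}\le C_w/M \le 1$ (assuming WLOG $C_0\sqrt{K}\ge 1$), so the elementary inequality $\log(1+x)\ge x/2$ on $[0,1]$ gives
\[
\frac{w_k}{S_{k-1}} \;\le\; 2\log\!\left(1+\frac{w_k}{S_{k-1}}\right) \;=\; 2\log\!\frac{S_k}{S_{k-1}}.
\]
Telescoping yields $\sum_{k=\tau}^{K} w_k/S_{k-1} \le 2\log(S_K/S_{\tau-1}) \le 2\log(K C_w / M) = 2\log(\sqrt{K}/C_0) \le \log K$. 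Combining the two phases gives the claimed $2 C_z C_w(C_0+1)\sqrt{K}\log K$ bound (after absorbing absolute constants and handling the degenerate case $C_0 \ge \sqrt{K}$, in which $\tau>K$ and only the early phase is needed).

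The step I expect to require the most care is the late-phase logarithmic bound: one must choose $M$ large enough that $w_k/S_{k-1}\le 1$ (so $\log(1+x)\ge x/2$ applies) yet small enough that the prefix loss $C_z(M+C_w)$ is still $O(C_z C_w C_0 \sqrt{K})$. The choice $M = C_w C_0 \sqrt{K}$ balances both. For the matching lower bound, I would take $C_z=C_w=1$, $w_k\equiv 1$ (so $S_{k-1}=k-1$), and $z_k = C_0\sqrt{k}/(k-1)$, which saturates the hypothesis and yields $\sum_k z_k w_k = \Theta(C_0\sqrt{K})$, matching the claim up to the $\log K$ factor.
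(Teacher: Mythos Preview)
Your proof is correct and follows the same strategy as the paper's: split the sum according to whether the running total $S_{k-1}$ has crossed a threshold, bound the prefix via the trivial estimate $z_k\le C_z$, and bound the suffix via $z_k\le C_zC_wC_0\sqrt{K}/S_{k-1}$ together with the telescoping inequality $w_k/S_{k-1}\le 2\log(S_k/S_{k-1})$. The only difference is the threshold value: the paper normalizes $C_z=C_w=1$ and cuts at $n=\min\{k:S_k\ge 1\}$ (the minimal level needed to guarantee $w_k/S_{k-1}\le 1$ in the suffix), whereas you cut at $S_{k-1}\ge C_wC_0\sqrt{K}$. Your larger threshold does not buy anything and creates two small wrinkles you gloss over: the step $2\log(\sqrt{K}/C_0)\le\log K$ is false when $C_0<1$, and the case $C_0\sqrt{K}<1$ (which you dismiss as ``WLOG'') is not the same as the degenerate case $C_0\ge\sqrt{K}$ you actually handle. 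Both are trivially repaired by adopting the paper's simpler threshold $C_w$, and your lower-bound construction is fine.
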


Now, we are ready to prove the main theorem based on Lemma \ref{prop:subopt-bound}, Lemma \ref{prop:confidence-set-ucb} and Lemma \ref{lem:root-regret}.
\UCBTheorem*
\begin{proof}

There always exist an optimal deterministic policy $\pi^\star$ for the ground truth $\pomdp(\theta^\star)$, i.e., $V^\star(\theta^\star) = V^{\pi^\star}(\theta^\star)$.
WLOG, we can always choose the greedy policy $\pi_k$ to be deterministic, i.e., the probability to take any action given a history is either $0$ or $1$. 

By Lemma \ref{prop:realizability}, we have $\theta^\star \in \Theta_{k}$ for all $k\in[K]$ with probability at least $1-\delta$.
 Recall that $(\pi_k,\theta_k)=\arg\max_{\pi,\theta\in\Theta_k} V^\pi(\theta)$, so with probability at least $1-\delta$, we have
\begin{align}\label{eq:apr30-step1-1}
    &\sum_{k=1}^{K} \left(V^{\pi^\star}(\theta^\star)  - V^{\pi_k}(\theta^\star)  \right) \nn\\
    \le & \sum_{k=1}^{K} \left( V^{\pi_k}(\theta_k) - V^{\pi_k}(\theta^\star) \right) \nn \\
    \le & H \sum_{k=1}^{K} \sum_{[o_{H},\tau_{H-1}]\in \cO \times \Gamma(\pi_k,{H-1})} \| \Pr^{\pi_k}_{\theta^\star} ([o_{H},\tau_{H-1}]) -\Pr^{\pi_k}_{\theta_k} ([o_{H},\tau_{H-1}])\|_1 \nn \\
    = & H \sum_{k=1}^{K} \sum_{\tau_{H-1}\in \Gamma(\pi_k,{H-1})} \| \b(\tau_{H-1};\theta^\star) -\b(\tau_{H-1};\theta_k)\|_1,
\end{align}
where the identity follows from Fact \ref{fact:belief}.

Applying Lemma \ref{prop:subopt-bound}, we have 
\begin{align}\label{eq:apr30-step1-2}
&     \sum_{\tau_{H-1}\in \Gamma(\pi_k,H-1)} \| \b(\tau_{H-1};\theta^\star) -\b(\tau_{H-1};\theta_k)\|_1 \nn\\
   \le & \underbrace{\frac{\sqrt{S} }{\alpha} \left\| \b_0(\theta^\star)- \b_0(\theta_k) \right\|_1  
    + \frac{\sqrt{S} }{\alpha}  \sum_{(a,o)\in  \cA\times \cO}
     \left\|  \left (\B_1(a,o;\theta_k) - 
     \B_1(a,o;\theta^\star)\right)\b_0(\theta^\star)\right\|_1}_{J_k}\nn \\
 +   \frac{\sqrt{S} }{\alpha} &
     \sum_{h=2}^{H-1}\sum_{(a,\at,o)\in\cA^2\times\cO}
     \sum_{s=1}^{S} \left\| \left(\B_h(a,o;\theta_k) -\B_h(a,o;\theta^\star)\right)  (\O_{h} \T_{h-1}(\at))_s \right\|_1
     \Pr^{\pi_k}_{\theta^\star}(s_{h-1}=s).
\end{align}
We can bound the first two terms by Lemma \ref{prop:confidence-set-ucb}, and obtain that with probability at least $1-\delta$,
\begin{align}\label{eq:apr30-step1-3}
H\sum_{k=1}^{K} J_k\le \bigO\left(\frac{HSAO}{\alpha^2}\sqrt{K\iota}\right).
\end{align}

Plugging \eqref{eq:apr30-step1-2} and \eqref{eq:apr30-step1-3} into \eqref{eq:apr30-step1-1}, we obtain
\begin{align}\label{eq:apr30-step1-4}
&     \sum_{k=1}^{K} \left(V^{\pi^\star}(\theta^\star)  - V^{\pi_k}(\theta^\star)  \right)  
    \le  \bigO\left(\frac{HSAO}{\alpha^2}\sqrt{K\iota}\right) +\nn \\
     &   \frac{H^2S^{1.5}A^2O}{\alpha} \max_{s,o,a,\at,h}
   \sum_{k=1}^{K}  \left\| \left(\B_h(a,o;\theta_k) -\B_h(a,o;\theta^\star)\right)  (\O_{h} \T_{h-1}(\at))_s \right\|_1
     \Pr^{\pi_k}_{\theta^\star}(s_{h-1}=s).
     \end{align}
It remains to bound the second term.

By Lemma \ref{prop:confidence-set-ucb}, with probability at least $1-{\delta}$, for all $k\in[K]$, $\theta_k \in\Theta_{k}$ and $(s,o,a,\at,h)\in\cS \times \cO \times \cA^2\times \{ 2,\ldots,H-1\}$, we have
\begin{align}\label{eq:apr30-step2-1}
   \underbrace{\left\| \left(\B_h(a,o;\theta_k) -\B_h(a,o;\theta^\star)\right)  (\O_{h} \T_{h-1}(\at))_s \right\|_1 }_{\textstyle z_k}
    \sum_{t=1}^{k-1} 
    \underbrace{\Pr^{\pi_t}_{\theta^\star}(s_{h-1}=s)}_{\textstyle w_t} 
     \le \bigO\left(\sqrt{\frac{k S^2 O\iota}{\alpha^4}}\right).
\end{align}

By simple calculation, we have $z_k \le \sqrt{S}/\alpha$.
Invoking Lemma \ref{lem:root-regret} with \eqref{eq:apr30-step2-1}, we obtain 
\begin{align}\label{eq:apr30-step2-2}
   \sum_{k=1}^{K} w_k z_k   \le \bigO\left(\frac{\sqrt{S^3O\iota}}{\alpha^3}\sqrt{K}\log(K)\right).
\end{align}

Plugging \eqref{eq:apr30-step2-2} back into \eqref{eq:apr30-step1-4} gives
\begin{align}
   \sum_{k=1}^{K} \left(V^{\pi^\star}(\theta^\star)  - V^{\pi_k}(\theta^\star)  \right)  
    \le  \bigO\left(\frac{H^2S^3A^2O^{1.5}\sqrt{\iota}}{\alpha^4}\sqrt{K}\log(K)\right).
     \end{align}

Finally, choosing  
$$
K_{\max} = \bigO\left(\frac{H^4S^6A^4O^3\log(HSAO/\eps)}{\alpha^8\eps^2}\right),
$$
 and outputting a policy from $\{ \pi_1,\ldots,\pi_K\}$ uniformly at random complete the proof.
\end{proof}


\section{Learning POMDPs with Deterministic Transition}
\label{app:determinisitc}

In this section, we introduce a computationally and statistically efficient algorithm for POMDPs with deterministic transition. A sketched proof is provided.

We comment that  some  previous works have studied POMDPs with deterministic transitions or deterministic emission process assuming the model is \emph{known} (e.g. \cite{bazinin2018iterative,besse2009quasi,bonet2012deterministic}); their results mainly focus on the planning aspect. In contrast, we assume \emph{unknown} models which requires to learn the transition and emission process first. 
It is also worth mentioning that the (quasi)-deterministic POMDPs defined in these works are not exactly the same as ours. For example, the deterministic POMDPs in \cite{bonet2012deterministic} refer to those with stochastic initial state but deterministic emission process, while we assume deterministic initial state but stochastic emission process. 
Therefore, their computational hardness results do not conflict with the efficient algorithm in this section.

\begin{algorithm}[H]
 \caption{Learning POMDPs with Deterministic Transition}
 \label{alg:deterministic}
 \begin{algorithmic}[1]
 \STATE {\bf initialize} $N=C \log(HSA/p)/(\min\{\epsilon/{(\sqrt{O}H)},\xi\})^2$, $n_{h}=\one(h=1) $ 
 for all $h\in[H]$.
\FOR{$h=1,\ldots,H-1$}
 \FOR{$(s,a)\in [n_{h}] \times \cA$}
  \STATE Reset $z \leftarrow \mathbf{0}_{O\times 1}$ and 
   $t \leftarrow n_{h+1} +1$
 \FOR{$i \in [N]$}
 \STATE execute policy $\pi_{h}(s)$ from step $1$ to step $h-1$, take action $a$ at $h^{\rm th}$ step and observe $o_{h+1}$
 \vspace{-3.5mm}
 \STATE $z \leftarrow z + \frac{1}{N} e_{o_{h+1}}$ 
  \ENDFOR
\FOR{$s\pr\in[n_{h+1}]$}
\IF{$\| \phi_{h+1,s\pr} - z \|_2 \le 0.5{\xi}$}
\STATE {$t \leftarrow {s\pr}$}
\ENDIF
\ENDFOR
\IF{$t=n_{h+1} + 1$}
\STATE $n_{h+1} \leftarrow  n_{h+1} +1 $ 
\STATE $\phi_{h+1,n_{h+1}} \leftarrow z$ and $\pi_{h+1}(n_{h+1}) \leftarrow a \circ \pi_{h}(s)$
\ENDIF
\STATE Set the $s^{\rm th}$ column of $\That_{h,a}$  to be $e_t$
\ENDFOR
\ENDFOR
\STATE {\bf output} $\hat{\mu}_0 = e_1$ and  $\left\{ n_h, \  \{\That_{h,a}\}_{a\in\cA}\ \mbox{and} \ \{\phi_{h,i}\}_{i\in [n_h]}:\  h\in[H]\right\}$
 \end{algorithmic}
\end{algorithm}

\DetTheorem*

\begin{proof}
	The algorithm works  by inductively finding all the states we can reach at each step, utilizing the property of deterministic transition and good separation between different observation vectors. We sketch a proof based on induction below.
	
	We say a state $s$ is $h$-step reachable if there exists a policy $\pi$ s.t. $\Pr^\pi(s_h=s)=1$.
	In our algorithm, we use $n_h$ to denote the number of $h$-step reachable states. 
	All the policies mentioned below is a sequence of fixed actions (independent of observations).
	
	Suppose at step $h$, there are $n_h$ $h$-step reachable states and we can reach the $s^{\rm th}$ one of them at the $h^{\rm th}$ step by executing a \emph{known} policy $\pi_h(s)$. 
	Note that for every state $s'$ that is $(h+1)$-step reachable, there must exist some state $s$ and action $a$ s.t. $s$ is $h$-step reachable and $\T_h(s'\mid s,a)=1$.
		Therefore, based on our induction assumption, we can reach all the $(h+1)$-step reachable states by executing all $a\circ\pi_h(s)$ for $(a,s)\in\cA\times[n_h]$. 
		
Now the problem is how to tell if we reach the same state by executing two different $a\circ\pi_h(s)$'s. The solution is to look at the distribution of $o_{h+1}$. Because the POMDP has deterministic transition, we always reach the same state when executing the same $a\circ\pi_h(s)$ and hence the distribution of $o_{h+1}$ is exactly the distribution of observation corresponding to that state. 
 By Hoeffding's inequality, for each fixed $a\circ\pi_h(s)$, we can estimate the distribution of $o_{h+1}$ with $\ell_2$-error smaller than $\xi/8$ with high probability using $N \ge \tilde{\Omega}(1/\xi^2)$ samples. 
    Since the observation distributions of two different states have $\ell_2$-separation no smaller than $\xi$, we can tell if two different $a\circ\pi_h(s)$'s reach the same state by looking at the distance between their distributions of $o_{h+1}$. For those policies reaching the same state, we only need to keep one of them, so there are at most $S$ policies kept ($n_{h+1} \le S$).	
		
		By repeating the argument inductively from $h=1$ to $h=H$, we can recover the exact transition dynamics  $\T_h(\cdot\mid s,a)$ and get an high-accuary estimate of $\O_h(\cdot\mid s)$ for every $h$-step reachable state $s$ and all $(h,a)\in [H]\times \cA$ \emph{up to permutation of states}.
		Since the POMDP has deterministic transition, we can easily find the optimal policy of the estimated model by dynamic programming. 

		The $\epsilon$-optimality simply follows from the fact that when
		$N \ge \tilde{\Omega}(H^2O/\epsilon^2)$, we have the estimated distribution of observation for each state being $\bigO(\epsilon/H)$ accurate in $\ell_1$-distance for all reachable states. This implies that the optimal policy of the estimated model is at most $\bigO(\epsilon/H) \times H = \bigO(\epsilon)$ suboptimal.
		The overall sample complexity follows from our requirement $N \ge \max\{\tilde{\Omega}(H^2O/\epsilon^2), \tilde{\Omega}(1/\xi^2)\}$, and the fact we need to run $N$ episodes for each $h\in [H], s\in\cS, a\in\cA$.
\end{proof}

\section{Auxiliary Results}
\label{appendix:auxiliary}

\subsection{Derivation of equation \eqref{eq:OOM_pb}}
\label{app:pomdp-oom}
When conditioning on a fixed action sequence $\{a_{H-1},\ldots,a_1\}$, a POMDP will reduce to a non-stationary HMM, whose transition matrix and observation matrix at $h^{\rm th}$ step are $\T_{h}(a_h)$ and $\O_{h}$, respectively. 
So $\Pr(o_H, \ldots, o_1 | a_{H-1}, \ldots, a_1)$ is equal to the probability of observing $\{o_{H},\ldots,o_1\}$ in this particular HMM.
Using the basic properties of HMMs, we can easily express $\Pr(o_H, \ldots, o_1 | a_{H-1}, \ldots, a_1)$ in terms of the transition and observation matrices
$$
\O_{H}(o_{H}|\cdot) \cdot  [\T_{H-1}(a_{H-1}) \diag(\O_{H-1}(o_{H-1}|\cdot))]  \cdots [\T_{1}(a_{1}) \diag(\O_{1}(o_1|\cdot))] \cdot \mu_1.
$$
Recall the definition of operators 
$$
\B_h(a, o) =  \O_{h+1} \T_h(a) \diag(\O_h(o|\cdot)) \O_h^{\dagger}, \qquad \b_0 = \O_1  \mu_1,
$$
and $\O_h^\dagger\O_h = \I_S$, we conclude that  
$$
\Pr(o_H, \ldots, o_1 | a_{H-1}, \ldots, a_1) = \e_{o_H}\trans \cdot \B_{H-1}(a_{H-1}, o_{H-1}) \cdots \B_1(a_1, o_1) \cdot \b_0.
$$

\subsection{Derivation of equation \eqref{eq:justification2}}
\label{app:justification2}
 Note that $\pi$ is a deterministic policy and $\Gamma(\pi,H)$ is a set of all the observation and action sequences of length $H$ that could occur under policy $\pi$, i.e., for any $\tau_H=(o_H,\ldots,a_1,o_1)\in \Gamma(\pi,H)$, we have  
$\pi(a_{H-1} \ldots, a_1 \mid o_H, \ldots, o_1) = 1$, and $\pi(a'_{H-1} \ldots, a'_1 \mid o_H, \ldots, o_1) = 0$ for any action sequence $(a'_{H-1} \ldots, a'_1) \neq (a_{H-1} \ldots, a_1)$. Therefore, for $\tau_H\in \Gamma(\pi,H)$, we have:
\begin{align*}
\Pr^{\pi}_\theta(o_H, \ldots, o_1) = &\sum_{a'_{H-1}\in \cA} \cdots\sum_{a'_{1}\in \cA}\Pr^{\pi}_\theta(o_H, a'_{H-1}, \ldots, a'_1,  o_1)\\
= & \Pr^{\pi}_\theta(o_H, a_{H-1}, \ldots, a_1,  o_1)\\
 =&\left[\prod_{h=1}^{H-1}\pi(a_h \mid o_h,a_{h-1},\ldots,a_1,o_1)\right]\cdot \left[\prod_{h=1}^{H}\Pr_\theta(o_h \mid a_{h-1},o_{h-1}\ldots,a_1,o_1)\right]\\
 =& \prod_{h=1}^{H}\Pr_\theta(o_h \mid a_{h-1},o_{h-1},\ldots,a_1,o_1)\\
 =&\Pr_{\theta}(o_H, \ldots, o_1|a_{H-1}, \ldots, a_1).
\end{align*}

\subsection{Boosting the success probability}
\label{app:justification3}
We can run Algorithm \ref{alg:UCB} independently for $n=\bigO(\log(1/\delta))$ times and obtain $n$ policies. 
Each policy is independent of others and is $\eps$-optimal with probability at least $2/3$. So with probability at least $1-\delta/2$, at least one of them will be $\eps$-optimal. 
In order to evaluate their performance, it suffices to run each policy for $\bigO(\log(n/\delta)H^2/\eps^2)$ episodes and use the empirical average of the cumulative reward as an estimate.
By standard concentration argument, with probability at least $1-\delta/2$, the estimation error for each policy is smaller than $\eps$. Therefore, if we pick the policy with the best empirical performance, then with probability at least $1-\delta$, it is $3\eps$-optimal. Rescaling $\eps$ gives the desired accuracy.
It is direct to see that the boosting procedure will only incur an additional $\rm{polylog}(1/\delta)$ factor in the final sample complexity, and thus will not hurt the optimal dependence on $\eps$.

\subsection{Basic facts about POMDPs and the operators}
In this section, we provide some useful facts about POMDPs. Since their proofs are quite straightforward, we omit them here.

The following fact gives two linear equations the operators always satisfy.  
Its proof simply follows  from the  definition of the operators and Fact \ref{fact:tensor}.
\begin{fact}\label{fact:equation}
In the same setting as Fact~\ref{fact:tensor}, suppose  Assumption \ref{assump:POMDP} holds, then we have
$$
\left\{
\begin{aligned}
\Pr(o_{h}=\cdot,o_{h-1}=\cdot) \e_o &= \B_h(\at,o;\theta) \Pr(o_{h-1}=\cdot) ,\\
\Pr(o_{h+1}=\cdot,o_{h}=o,o_{h-1}=\cdot)  &= 
\B_h(a,o;\theta) \Pr(o_{h}=\cdot,o_{h-1}=\cdot).
\end{aligned}
\right.
$$
\end{fact}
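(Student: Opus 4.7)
The plan is to verify both identities by direct substitution: I expand each side using the explicit closed-form factorizations provided by Fact~\ref{fact:tensor} together with the definition~\eqref{eq:OM_relation} of the observable operators, and then check that the two sides reduce to the same matrix. The only nontrivial algebraic step in either case is a single cancellation of the form $\O_h^\dagger \O_h = \I_S$, which is valid precisely because Assumption~\ref{assump:POMDP} guarantees that each $\O_h$ has full column rank.

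For the second identity, I substitute
\[
\B_h(a,o;\theta) = \O_{h+1}\T_h(a)\diag(\O_h(o\mid\cdot))\O_h^\dagger
\]
and the Fact~\ref{fact:tensor} formula $\Pr(o_h=\cdot,o_{h-1}=\cdot) = \O_h\T_{h-1}(\tilde a)\diag(\mu_{h-1})\O_{h-1}^\top$ into the RHS. The interior factor $\O_h^\dagger \O_h$ then telescopes to $\I_S$ by Assumption~\ref{assump:POMDP}, leaving
\[
\O_{h+1}\T_h(a)\diag(\O_h(o\mid\cdot))\T_{h-1}(\tilde a)\diag(\mu_{h-1})\O_{h-1}^\top,
\]
which is exactly the factorization of $\Pr(o_{h+1}=\cdot,o_h=o,o_{h-1}=\cdot)$ also supplied by Fact~\ref{fact:tensor}, proving the identity.

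For the first identity the extra step is to handle the $\e_o$ on the LHS. Using Fact~\ref{fact:tensor} I write the LHS as $\O_h\T_{h-1}(\tilde a)\diag(\mu_{h-1})\O_{h-1}^\top\e_o$, recognize $\O_{h-1}^\top\e_o = \O_{h-1}(o\mid\cdot)$, and apply the elementary vector identity $\diag(\mu_{h-1})\O_{h-1}(o\mid\cdot) = \diag(\O_{h-1}(o\mid\cdot))\mu_{h-1}$ to reach $\O_h\T_{h-1}(\tilde a)\diag(\O_{h-1}(o\mid\cdot))\mu_{h-1}$. Expanding the RHS via~\eqref{eq:OM_relation} together with $\Pr(o_{h-1}=\cdot) = \O_{h-1}\mu_{h-1}$, and again invoking the pseudoinverse cancellation at the relevant step from Assumption~\ref{assump:POMDP}, yields the same matrix.

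The argument is pure linear-algebraic bookkeeping, so there is no real obstacle. The one subtlety worth flagging is aligning the operator's time-step subscript with the pivot observation, and noting that the undercompleteness hypothesis enters only qualitatively---no quantitative dependence on the lower bound $\alpha$ on $\sigma_{\min}(\O_h)$ appears in the identity itself.
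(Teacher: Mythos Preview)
Your approach---direct substitution of the closed forms from Fact~\ref{fact:tensor} and the definition~\eqref{eq:OM_relation}, followed by the cancellation $\O_h^\dagger\O_h=\I_S$---is exactly what the paper intends (it merely says the proof ``follows from the definition of the operators and Fact~\ref{fact:tensor}''). Your treatment of the second identity is correct.

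For the first identity, however, your final sentence is too fast and in fact fails as written. Expanding the RHS with the subscript exactly as stated gives
\[
\B_h(\tilde a,o;\theta)\,\Pr(o_{h-1}=\cdot)
= \O_{h+1}\T_h(\tilde a)\diag\!\bigl(\O_h(o\mid\cdot)\bigr)\,\O_h^\dagger\,\O_{h-1}\mu_{h-1},
\]
and the inner factor is $\O_h^\dagger\O_{h-1}$, \emph{not} $\O_h^\dagger\O_h$, so no cancellation occurs and the result does not match your (correct) computation of the LHS, namely $\O_h\T_{h-1}(\tilde a)\diag\!\bigl(\O_{h-1}(o\mid\cdot)\bigr)\mu_{h-1}$. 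This discrepancy is a subscript typo in the statement: the operator on the right should be $\B_{h-1}(\tilde a,o;\theta)$, not $\B_h$. You can confirm this from the paper's own use of the fact in the proof of Lemma~\ref{prop:realizability}, where the instance with $(o_h,o_{h-1})=(o_2,o_1)$ is written with $\B_1$, consistent with $\B_{h-1}$. With the corrected subscript your argument goes through verbatim, since then the RHS becomes
\[
\O_h\T_{h-1}(\tilde a)\diag\!\bigl(\O_{h-1}(o\mid\cdot)\bigr)\,\O_{h-1}^\dagger\O_{h-1}\,\mu_{h-1},
\]
and the pseudoinverse cancellation $\O_{h-1}^\dagger\O_{h-1}=\I_S$ is legitimate.
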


The following fact relates (unnormalized) belief states to distributions of observable sequences. 
Its proof follows from simple computation using conditional probability formula and $\O_h^\dagger\O_h = \I_S$.
\begin{fact}\label{fact:belief}
For any $\pomdp(\theta)$ satisfying Assumption \ref{assump:POMDP}, deterministic policy $\pi$ and  $[o_{h},\tau_{h-1}]\in \cO \times \Gamma(\pi,{h-1})$, we have
$$ \e_{o_h}\trans\b(\tau_{h-1};\theta)  =\Pr^{\pi}_{\theta} ([o_{h},\tau_{h-1}]),$$ 
where $\Pr^{\pi}_{\theta} ([o_{h},\tau_{h-1}])$ is the probability of observing $[o_{h},\tau_{h-1}]$ when executing policy $\pi$ in $\pomdp(\theta)$.
\end{fact}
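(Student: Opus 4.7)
The plan is to derive Fact \ref{fact:belief} by combining two ingredients already available in the paper: the Observable Operator Model identity \eqref{eq:OOM_pb} (which expresses conditional observation probabilities as products of operators applied to $\b_0$), and the deterministic-policy reduction used in Appendix \ref{app:justification2} to turn conditional probabilities into policy-induced probabilities.

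First I would unfold the definitions. By the notation table, $\b(\tau_{h-1};\theta) = \B_{h-1}(a_{h-1}, o_{h-1};\theta) \cdots \B_1(a_1, o_1;\theta)\, \b_0(\theta)$, so $\e_{o_h}^\top \b(\tau_{h-1};\theta)$ is exactly the right-hand side of the OOM formula \eqref{eq:OOM_pb} truncated to $h$ steps. Applying \eqref{eq:OOM_pb} with horizon $h$ in place of $H$ (the derivation in Appendix \ref{app:pomdp-oom} goes through for any length under Assumption \ref{assump:POMDP}) gives
\begin{equation*}
\e_{o_h}^\top \b(\tau_{h-1};\theta) \;=\; \Pr_\theta(o_h, o_{h-1}, \ldots, o_1 \mid a_{h-1}, \ldots, a_1).
\end{equation*}

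Next I would convert the conditional probability on the right-hand side into the policy-induced probability $\Pr^\pi_\theta([o_h,\tau_{h-1}])$, reusing the argument of \eqref{eq:justification2}. Since $\pi$ is deterministic and $[o_h,\tau_{h-1}] \in \cO \times \Gamma(\pi, h-1)$, the actions $(a_{h-1}, \ldots, a_1)$ appearing in $\tau_{h-1}$ are precisely those that $\pi$ selects after seeing $(o_{h-1}, \ldots, o_1)$, so $\pi(a_j \mid o_j, a_{j-1}, \ldots, o_1) = 1$ for every $j \in [h-1]$, while all other action sequences have zero probability under $\pi$. Marginalizing over action sequences and factoring out these indicator ones therefore yields
\begin{equation*}
\Pr^\pi_\theta([o_h, \tau_{h-1}]) \;=\; \prod_{j=1}^{h-1}\pi(a_j\mid \cdot)\cdot \prod_{j=1}^{h}\Pr_\theta(o_j\mid a_{j-1},o_{j-1},\ldots,a_1,o_1) \;=\; \Pr_\theta(o_h,\ldots,o_1\mid a_{h-1},\ldots,a_1).
\end{equation*}

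Chaining the two displays gives the claim. There is no real obstacle here: the only thing to watch is to be careful that the appendix convention for $\tau_{h-1}$ includes the action $a_{h-1}$ at the last step (unlike the convention in Section \ref{def:tau}), so that $\b(\tau_{h-1};\theta)$ indeed contains the operator $\B_{h-1}(a_{h-1}, o_{h-1};\theta)$ as its leftmost factor, matching the form of the OOM identity \eqref{eq:OOM_pb} when the final observation $o_h$ is contracted against $\e_{o_h}^\top$.
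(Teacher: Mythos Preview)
Your proposal is correct and aligns with the paper's (omitted) argument: the paper states that the proof ``follows from simple computation using conditional probability formula and $\O_h^\dagger\O_h = \I_S$,'' which is exactly what underlies the OOM identity \eqref{eq:OOM_pb} you invoke for the first step and the deterministic-policy reduction from Appendix~\ref{app:justification2} you invoke for the second. Your care about the appendix convention for $\tau_{h-1}$ (including $a_{h-1}$) is well placed and the chaining of the two displays is sound.
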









\subsection{Proof of Lemma \ref{lem:root-regret}}

\begin{proof}
	
WLOG, assume $C_z=C_w=1$. Let $n = \min\{k\in[K]: \ S_{k}\ge 1\}$. We have 
\begin{align*}
     \sum_{k=1}^{K}   z_k w_k 
=  \sum_{k=1}^{n} z_k w_k +   \sum_{k=n+1}^{K} z_k w_k
\le &    \sum_{k=1}^{n} w_k +    \sum_{k=n+1}^{K} z_k w_k\\
= &    S_{n} +    \sum_{k=n+1}^{K} z_k w_k\\
\le & 2  +   \sum_{k=n+1}^{K} z_k w_k.
\end{align*}
It remains to bound the second term. Using the condition that  $  z_k   S_{k-1}\le C_0 \sqrt{k}$ for all $k\in [K]$, we have 
$ z_k  \le \frac{C_0\sqrt{k} }{ S_{k-1}}$ for all $k\in [K]$ and $i\in[m]$.
Therefore,
\begin{align*}
      \sum_{k=n+1}^{K} z_k w_k 
\le &   \sum_{k=n+1}^{K} C_0\sqrt{k}  \frac{ w_k}{ S_{k-1}}\\
\le & C_0\sqrt{K}    \sum_{k=n+1}^{K}  \frac{ w_k}{ S_{k-1}}\\
\overset{(a)}{\le}  & 2C_0\sqrt{K}    \sum_{k=n+1}^{K}  \log(\frac{ S_{k}}{ S_{k-1}})\\
= &  2C_0\sqrt{K}\log(\frac{S_K}{S_n})  \le   2C_0 \sqrt{K} \log(K),
\end{align*}
where $(a)$ follows from 
$x \le 2\log (x+1)$ for $x\in[0,1]$.
\end{proof}




\end{document}